%% file: neurips_2026.tex
\documentclass{article}

\PassOptionsToPackage{numbers,compress}{natbib}

\usepackage[preprint]{neurips_2026}

\usepackage[utf8]{inputenc} \usepackage[T1]{fontenc}    \usepackage{hyperref}       \usepackage{url}            \usepackage{booktabs}       \usepackage{amsfonts}       \usepackage{amsmath,amssymb,amsthm,mathtools}
\usepackage{nicefrac}       \usepackage{microtype}      \usepackage{xcolor}         \usepackage{float}          \usepackage{enumitem}
\usepackage{wrapfig}

\definecolor{graphinput}{RGB}{25,25,112}    \definecolor{graphinter}{RGB}{34,139,34}    \definecolor{graphoutput}{RGB}{178,34,52}   \definecolor{graphscalar}{RGB}{253,188,66}  \par
\usepackage{algorithm}
\usepackage{algorithmicx}
\usepackage{algpseudocode}
\usepackage{multirow}
\usepackage{graphicx}
\usepackage{tikz}
\usetikzlibrary{arrows.meta, positioning, calc, fit, backgrounds}

\newtheorem{definition}{Definition}[section]
\newtheorem{proposition}[definition]{Proposition}
\newtheorem{theorem}[definition]{Theorem}
\newtheorem{corollary}[definition]{Corollary}
\newtheorem{lemma}[definition]{Lemma}
\theoremstyle{remark}

\newcommand{\R}{\mathbb{R}}
\newcommand{\Tr}{\mathsf{T}}

\newcommand{\rowsum}{\operatorname{rowsum}}

\newcommand{\softmax}{\operatorname{softmax}}

\newcommand{\td}[1]{\widetilde{d}(#1)}

\algrenewcommand\algorithmicrequire{\textbf{Input:}}
\algrenewcommand\algorithmicensure{\textbf{Output:}}
\algrenewcommand\algorithmiccomment[1]
{\hfill\textcolor{gray!70}{\scriptsize$\triangleright$~#1}}

\definecolor{fbInk}      {HTML}{0F172A} \definecolor{fbMuted}    {HTML}{334155} \par
\definecolor{fbBlue}     {HTML}{174EA6} \definecolor{fbRed}      {HTML}{B42318} \par
\definecolor{hbmFill}    {HTML}{CBD5E1} \definecolor{hbmStroke}  {HTML}{475569} \definecolor{hbmInk}     {HTML}{1E293B} \par
\definecolor{sramFill}   {HTML}{B7E0C5} \definecolor{sramStroke} {HTML}{2F855A} \definecolor{sramInk}    {HTML}{14532D} \par
\definecolor{roleFixed}  {HTML}{1F4FA3}
\definecolor{roleStream} {HTML}{C46A1C}

\usepackage{todonotes}
\algrenewcommand\algorithmiccomment[1]{\hfill\textcolor{gray!70}{\scriptsize$\triangleright$~#1}}

\title{FlashBoB: I/O-Efficient Exact Backward-over-Backward for Softmax Attention}

\author{Anthony Givans, Michael Crawshaw, Mingrui Liu\thanks{Corresponding Author. The implementation is at \url{https://github.com/MingruiLiu-ML-Lab/FlashBoB}} \\
  Department of Computer Science\\
  George Mason University, Fairfax, VA 22030 \\
  \texttt{agivans2@gmu.edu, mcrawsha@gmu.edu, mingruil@gmu.edu} \\
}

\begin{document}

\maketitle
\begin{abstract}

Transformer models built on the attention mechanism have become a central building block in modern deep learning, yet softmax attention remains a major bottleneck for long-context workloads. While FlashAttention makes the forward and first backward passes I/O-efficient, it does not support \emph{backward-over-backward} (BoB), which enables exact differentiation through the backward pass for applications such as second-order optimization, test-time training, gradient-based memory, and meta-learning. Existing BoB implementations either materialize large intermediate tensors or exhaust GPU memory at long sequence lengths.

We present \textsc{FlashBoB}, an exact, I/O-efficient algorithm for BoB in softmax attention that keeps computation within on-chip tiles and avoids all $N \times N$ intermediate tensors, where $N$ is the sequence length. The key insight is a hierarchical affine structure in the softmax double backward: two row-wise scalars determine all outputs through affine transformations. This yields a two-pass schedule with bounded on-chip static random-access memory (SRAM) usage and minimal off-chip high-bandwidth memory (HBM) traffic. \textsc{FlashBoB} achieves $\Theta(N^2 d^2/M)$ HBM traffic ($d$ is the head dimension and $M$ is the memory size) and, within the standard FlashAttention-style score-recomputation model, matches the inherited large-cache lower bound for exact forward attention. Empirically, it scales exact attention BoB to $N=262\text{K}$ on a single A100 80GB GPU, where prior PyTorch exact baselines fail by $N=16\text{K}$, and is up to $6.3\times$ faster than FlashBack. These results make exact second-order attention practical at long-context sequence lengths where prior implementations cannot run efficiently.
\end{abstract}

\vspace*{-0.1in}

\section{Introduction}
\vspace*{-0.05in}

\label{sec:intro}

Transformers are the dominant architecture for sequence modeling~\citep{vaswani2017attention}, and attention is their core operation. The cost of attention grows quadratically with sequence length and, at scale, is dominated by data movement between high-bandwidth memory (HBM) and on-chip SRAM. Two lines of work address this at long context. One modifies the operator: recurrent memories, sparse patterns, low-rank or kernelized attention, and state-space models reduce or replace the full pairwise interaction~\citep{dai2019transformerxl,beltagy2020longformer,zaheer2020bigbird,kitaev2020reformer,wang2020linformer,choromanski2021rethinking,katharopoulos2020transformers,poli2023hyena,peng2023rwkv,gu2024mamba}. The other, FlashAttention~\citep{dao2022flashattention,dao2023flashattention2,shah2024flashattention3,zadouri2026flashattention4}, preserves exact softmax attention and redesigns the memory schedule to make it I/O-efficient: tiles are recomputed from saved statistics, and the full attention matrix is never materialized in HBM. The key design principle is that for exact attention, the memory schedule is part of the algorithm.

This design principle has so far been limited to the forward and first backward passes of softmax attention. A growing class of methods places gradients inside the model's computation and must differentiate through the attention backward pass itself; we call this primitive \emph{backward-over-backward} (BoB). Such methods arise across modern training paradigms, including second-order optimization via Hessian-vector products for curvature estimation~\citep{liu2024sophia,pearlmutter1994fast,martens2010deep,martens2015optimizing,gupta2018shampoo,yao2021adahessian}, test-time training that differentiates through inner-loop updates~\citep{tandon2025endtoend}, gradient-written memory that stores context via per-example gradient descent~\citep{kuratov2026gradmem}, and meta-learning that differentiates through inner optimization~\citep{finn2017maml,nichol2018firstorder,rajeswaran2019meta}. At the kernel level, all of these rely on the same BoB primitive, yet at long context none of the existing implementations scale. Generic automatic differentiation materializes intermediate tensors of quadratic (sequence-by-sequence) size and exhausts memory at sequence length $N = 8{,}192$ on a single A100 80GB GPU. GradMem~\citep{kuratov2026gradmem} ships custom Hessian-vector product kernels that fuse parts of the second backward and avoid the worst materialization, raising the feasible context to $N = 16{,}384$ on the same hardware before they too exhaust memory. The closest prior non-materializing kernel, FlashBack~\citep{engstrom2024flashback}, fuses the entire BoB into a single GPU kernel parallelized over rows, with several outputs accumulated by concurrent threads writing to the same memory locations through synchronized hardware updates that serialize and stall under contention. Crucially, none of these approaches extend FlashAttention's memory discipline to backward-over-backward.

Extending FlashAttention-style scheduling to backward-over-backward faces two key obstacles. First, the computation requires combining information in both row-wise and column-wise directions, which prevents a single consistent pass over the data. Second, the softmax double backward introduces global dependencies that must be resolved before downstream results can be finalized. Together, these constraints break the simple, single-pass execution pattern that keeps intermediate data in fast on-chip memory: a naive implementation either requires multiple passes or stores large intermediate tensors. Both options sacrifice the memory and I/O efficiency that makes FlashAttention effective.

We resolve these challenges with \textsc{FlashBoB}, an exact, I/O-efficient algorithm for backward-over-backward in softmax attention. The key idea is to expose a hierarchical affine structure in the softmax double backward, which allows the computation to be reorganized into a small number of passes that avoid materializing $N \times N$ tensors and keep intermediate data in fast on-chip memory. The resulting schedule closely follows the FlashAttention execution model, incurring only minimal overhead beyond the first backward pass while preserving efficient memory access patterns. As a result, \textsc{FlashBoB} scales to sequence lengths far beyond prior exact implementations~\citep{kuratov2026gradmem,engstrom2024flashback} and enables long-context training methods that were previously impractical. Theoretically, it matches a large-cache I/O lower bound under the standard
FlashAttention-style score-recomputation model, where score and probability tiles are recomputed from saved row statistics rather than stored as \(N\times N\) HBM tensors. Our contributions can be summarized as follows.

\begin{itemize}[leftmargin=*, itemsep=1pt, topsep=2pt]

\item \textbf{FlashBoB algorithm.}
We introduce an exact two-pass algorithm for softmax-attention BoB that
preserves the FlashAttention execution model: no $N \times N$ HBM
intermediates, bounded on-chip workspace, and one write per finalized
output block. The key identity is a hierarchical affine structure in the
softmax double backward (Proposition~\ref{prop:affine} and
Figure~\ref{fig:overview}) that collapses the natural four-pass schedule
to two passes.

\item \textbf{I/O analysis and optimality.}
We establish a $\Theta(N^2 d^2 / M)$ HBM upper bound for \textsc{FlashBoB} (Theorem~\ref{thm:io}) and a matching large-cache lower bound under the standard FlashAttention-style score-recomputation model (Theorem~\ref{thm:bob-lb}). The lower bound is inherited from exact forward attention by embedding it inside a special BoB instance.

\item \textbf{Kernel and model-level impact.}
On isolated attention BoB, \textsc{FlashBoB} reaches
$N=262{,}144$ on a single A100 80GB GPU, where PyTorch materialization
fails at $N=8{,}192$ and GradMem-derived HVP baselines fail by
$N=16{,}384$. It is up to $10\times$ faster and $28\times$ more
memory-efficient than materializing PyTorch, up to $3.8\times$ faster
than GradMem-derived HVP kernels, and up to $6.3\times$ faster than
FlashBack. On a 1B-token GPT-2 Small FineWeb run with Sophia-H,
\textsc{FlashBoB} reduces Hessian-estimation time by $2.45\times$,
full training wall time by $2.2\times$, and peak memory by $2.2\times$
while preserving the loss trajectory.
\end{itemize}

\vspace*{-0.1in}
\section{Related Work}
\label{sec:related}
\vspace*{-0.1in}

\textbf{GPU memory hierarchy and kernel fusion.}
The performance principle behind FlashAttention is part of a broader
systems literature: communication between memory levels can dominate
arithmetic, recomputation can trade extra work for lower memory
pressure, and combining multiple operations into one GPU kernel can
avoid expensive reads and writes~\citep{hong1981redblue,ballard2011minimizing,chen2016training,jain2020checkmate,kirisame2021dtr,ragankelley2013halide,chen2018tvm}.
GPU programming systems such as CUDA and Triton expose the memory
hierarchy well enough to keep small working sets in fast on-chip memory
while using specialized hardware for matrix multiplication~\citep{nickolls2008cuda,tillet2019triton,markidis2018tensorcore}.
\textsc{FlashBoB} applies this principle to a higher-order derivative:
rather than changing the attention operator or introducing an
approximation, it changes the execution schedule so that the exact BoB
computation avoids unnecessary HBM traffic.

\textbf{Attention I/O and exact streaming.}
Long-context attention has been pursued along two lines. Approximate
or structured-attention methods change the operator: recurrent
memories, sparse patterns, low-rank projections, kernelized attention,
and state-space models reduce the explicit quadratic interaction by
modifying the computation~\citep{dai2019transformerxl,beltagy2020longformer,zaheer2020bigbird,kitaev2020reformer,wang2020linformer,choromanski2021rethinking,katharopoulos2020transformers,poli2023hyena,peng2023rwkv,gu2024mamba}.
Building on online softmax normalization and exact memory-efficient
attention~\citep{milakov2018online,rabe2021selfattention}, the
FlashAttention family~\citep{dao2022flashattention,dao2023flashattention2,shah2024flashattention3,zadouri2026flashattention4}
keeps exact softmax attention and changes the memory schedule: it
recomputes attention blocks from saved softmax statistics so the full
$N \times N$ attention matrix is never written to HBM. Lower-bound work
formalizes these gains in two-level memory and red-blue pebble
models~\citep{hong1981redblue,ballard2011minimizing,saha2024ioattention,li2024finegrainedbackward}.
\textsc{FlashBoB} belongs to this second line: it preserves exact
softmax attention and extends FlashAttention's streaming memory
schedule to the second derivative.

\textbf{Implementations of attention backward-over-backward.}
Existing implementations of exact attention BoB, and more general
higher-order derivative tooling~\citep{dangel2020backpack}, fall into
three broad categories. Generic reverse-mode automatic differentiation~\citep{baydin2018automatic}
through a PyTorch-style tensor graph~\citep{paszke2019pytorch} provides a
natural correctness oracle, but stores quadratic-size intermediate
tensors and therefore scales poorly at long context. GradMem~\citep{kuratov2026gradmem} provides custom
Hessian-vector product kernels (\textsc{hvp-m}, \textsc{hvp-s}) that
avoid the worst materialization, but they do not follow the
FlashAttention-style streaming schedule. FlashBack~\citep{engstrom2024flashback}
is the closest prior non-materializing kernel we are aware of: it fuses
the BoB computation into a single GPU kernel, but relies on shared
updates to global memory for some outputs and repeatedly recomputes
attention blocks. In contrast, \textsc{FlashBoB} separates the work into two streaming
passes that match the natural structure of the BoB outputs, avoid
shared global updates, and expose an affine reduction that a
single-kernel design cannot exploit without additional synchronization
or recomputation.

\textbf{Applications of attention backward-over-backward.}
Backward-over-backward through attention appears in several long-context training settings. Second-order methods use Hessian-vector products to obtain curvature information without forming the Hessian explicitly, from classical Hessian-free and preconditioned optimization to modern stochastic curvature estimators~\citep{pearlmutter1994fast,martens2010deep,martens2015optimizing,gupta2018shampoo,yao2021adahessian,liu2024sophia}.
Meta-learning differentiates through inner gradient
steps~\citep{finn2017maml,nichol2018firstorder,rajeswaran2019meta}, while recent test-time training and gradient-written memory methods differentiate through inner-loop adaptation or per-example writing procedures~\citep{tandon2025endtoend,kuratov2026gradmem}.
These settings motivate exact BoB implementations for softmax attention.

\vspace*{-0.1in}
\section{Background and Preliminaries}
\label{sec:prelim}
\vspace*{-0.1in}

\textbf{Single-Head Attention in Transformer.} For a single attention head in a Transformer, the inputs
$Q, K, V \in \R^{N \times d}$ are the query, key, and value matrices
across $N$ tokens of the sequence~\citep{vaswani2017attention}. With
$\tau = 1/\sqrt d$, the forward pass is
\begin{equation}
\label{eq:attn-forward}
S = \tau Q K^\Tr + M_{\text{mask}}, \qquad
P = \softmax(S), \qquad
O = P V,
\end{equation}
where $M_{\text{mask}} \in \{0, -\infty\}^{N \times N}$ encodes any
attention mask (e.g., causal, sliding-window, or block patterns), and the
softmax is applied row-wise. For row $i \in [N]$, let
$\mathcal{V}_i = \{ j \in [N] : (M_{\text{mask}})_{ij} \neq -\infty \}$
denote the set of valid columns; all row sums below are over $\mathcal{V}_i$.
Lowercase letters denote row slices:
$q_i, k_i, v_i, o_i, do_i \in \R^d$, where $do_i$ denotes the gradient
of the loss with respect to $o_i$ and $i=1,\ldots,N$. Table~\ref{tab:notation} in
Appendix~\ref{app:notation} collects the full notation, including the
BoB-specific objects introduced in \S\ref{sec:bob-def}. For simplicity, we
consider only single-head attention in our derivations, and the analysis extends
naturally to multiple heads. 

\textbf{Two-level GPU Hierarchy.} We work in the standard two-level memory
model~\citep{hong1981redblue,ballard2011minimizing}: HBM is large and
slow, SRAM is small and fast, and I/O counts elements moved between
them. SRAM has capacity $M$ elements. We process attention using a tiled
(blocked) algorithm: let $B_r$ and $B_c$ denote the row and column tile
sizes, respectively, so that blocks of size $B_r \times d$ and $B_c \times d$
from $Q, K, V$ (and associated workspace) are materialized in SRAM during
computation. We work in the large-cache regime $M = \Omega(d^2)$ so that
these tiles and the necessary local workspaces fit on chip~\citep{dao2022flashattention,saha2024ioattention}.
The goal throughout the paper is to avoid HBM traffic for $N \times N$
intermediates while keeping the arithmetic exact.

\textbf{Flash Attention.}
FlashAttention~\citep{dao2022flashattention,dao2023flashattention2}
processes attention using a tiled algorithm, partitioning rows
into blocks of size $B_r$ and columns into blocks of size $B_c$. The
forward pass stores only the row-wise log-normalizer
$L_i = \log\!\sum_{j \in \mathcal{V}_i} \exp(S_{ij})$ together with the
output $O$, where $\mathcal{V}_i$ denotes the set of valid columns for
row $i$. Later passes recompute probability tiles
$P_{ij} = \exp(S_{ij} - L_i)$ inside SRAM, so the full $N \times N$
matrix $P$ is never stored in HBM.

Given $dO=\partial \mathcal{L}/\partial O$, the attention backward computes $dP,dV,dS,dQ,dK$ as
\begin{equation}
\label{eq:first-bwd}
dP = dO\, V^\Tr, \qquad
dV = P^\Tr dO, \qquad
dS_{ij} = P_{ij}(dP_{ij} - D_i),
\end{equation}
\begin{equation}
\label{eq:first-bwd-2}
dQ = \tau\, dS\, K, \qquad
dK = \tau\, dS^\Tr Q,
\end{equation}
where the row-wise scalar
$D_i = \sum_{j \in \mathcal{V}_i} P_{ij}\, dP_{ij}
    = do_i^\Tr o_i$
is the normalization constant.

FlashAttention stores only row state $(L,D)$ and the output $O$; probability tiles are recomputed from $L$ inside SRAM, and $P,dP,dS$ are never stored as full $N\times N$ matrices. \textsc{FlashBoB} reuses this state and adds
only two BoB row-state vectors, \((\alpha,E)\), which are introduced in Section~\ref{sec:flashbob}.

\textbf{Motivating applications of backward-over-backward.}
The forward and first-backward kernels are insufficient for methods that
differentiate through attention-containing gradient steps, including
Sophia-H Hessian-vector products~\citep{liu2024sophia,pearlmutter1994fast},
end-to-end test-time training~\citep{tandon2025endtoend}, and
gradient-written memory~\citep{kuratov2026gradmem}. At the attention
boundary, all of these supply upstream matrices
$U_Q,U_K,U_V \in \mathbb{R}^{N \times d}$ at the first-backward outputs
$(dQ,dK,dV)$. We define this BoB primitive formally in \S\ref{sec:bob-def}
and use Sophia-H and GradMem-derived kernels in the experiments.

\vspace*{-0.1in}
\section{FlashBoB: Algorithm and Analysis}
\label{sec:flashbob}
\vspace*{-0.1in}

This section develops the \textsc{FlashBoB} schedule in five steps:
we first define the BoB primitive (\S\ref{sec:bob-def}); derive
closed-form identities (\S\ref{sec:affine}); present two row-scalar reductions that reduce the natural
FlashAttention-style tiled schedule from four sweeps to two passes (\S\ref{sec:affine}); describe the resulting algorithm (\S\ref{sec:algo});and finally analyze its I/O complexity together with a matching score-recomputation lower bound in the large-cache regime (\S\ref{sec:io}).

\vspace*{-0.05in}
\subsection{Backward-over-backward}
\label{sec:bob-def}
\vspace*{-0.05in}

Let $g(Q, K, V, dO) = (dQ, dK, dV)$ denote the first-backward map of
\eqref{eq:first-bwd}--\eqref{eq:first-bwd-2}, and let
$J_g := Dg(Q,K,V,dO)$ denote its Jacobian with respect to
$(Q,K,V,dO)$. Given matrices $U_Q, U_K, U_V \in \R^{N \times d}$, each
setting of motivating applications (e.g., second-order optimization, test-time training, gradient-written memory) in \S\ref{sec:prelim} requires the
Jacobian-transpose-vector product $J_g^\Tr (U_Q, U_K, U_V)$, which can
be obtained by reverse-mode automatic differentiation applied to the scalar
\begin{equation}
\label{eq:bob-obj}
\Phi
 = \langle dQ, U_Q\rangle
 + \langle dK, U_K\rangle
 + \langle dV, U_V\rangle,
\end{equation}
where $(dQ,dK,dV)=g(Q,K,V,dO)$ and
$\langle A,B\rangle=\operatorname{tr}(A^\Tr B)$ denotes the Frobenius inner product.
Backward-over-backward (BoB) computes the four resulting gradients
\begin{equation}
\label{eq:bob-defn}
\bigl(\td{Q},\, \td{K},\, \td{V},\, \td{dO}\bigr)
 \;:=\; \nabla_{Q, K, V, dO}\, \Phi.
\end{equation}

For any intermediate variable $X$ in the first-backward graph,
$\td{X} := \partial \Phi / \partial X$ is its \emph{BoB gradient};
when $g$ is the gradient of a scalar loss, $\nabla \Phi$ is the
HVP applied to the attention block~\citep{pearlmutter1994fast}.

\vspace*{-0.05in}
\subsection{Affine Fusion}
\label{sec:affine}
\label{sec:bob-formulation}
\vspace*{-0.05in}

Reverse-mode automatic differentiation applied to~\eqref{eq:bob-obj}
(full derivation in Appendix~\ref{app:derivation}) yields the four BoB
outputs as
\begin{align}
  \td{Q} &= \tau\bigl(\td{S}\, K + dS\, U_K\bigr), &
  \td{K} &= \tau\bigl(\td{S}^\Tr Q + dS^\Tr U_Q\bigr), \label{eq:QKtilde}\\
  \td{V} &= (P \odot h)^\Tr dO, &
  \td{dO} &= P\, U_V + (P \odot h)\, V, \label{eq:VOtilde}
\end{align}
where $\odot$ denotes elementwise multiplication; $dP$, $dS$, and
$D_i$ are the first-backward intermediates from
\eqref{eq:first-bwd}--\eqref{eq:first-bwd-2}; and
$h_{ij}=F_{ij}-\alpha_i$, $F=\tau(U_QK^\Tr+QU_K^\Tr)$,
$C=dO\,U_V^\Tr$,
$\alpha_i=\sum_{j\in\mathcal V_i}P_{ij}F_{ij}$,
$\td{P}_{ij}=C_{ij}+F_{ij}(dP_{ij}-D_i)-\alpha_i dP_{ij}$,
$E_i=\sum_{j\in\mathcal V_i}P_{ij}\td{P}_{ij}$, and
$\td{S}_{ij}=P_{ij}(\td{P}_{ij}-E_i)$. We write
$\alpha=(\alpha_1,\ldots,\alpha_N)$ and $E=(E_1,\ldots,E_N)$.
Thus the second derivative introduces two row scalars, $\alpha_i$ and
$E_i$, which are the BoB analogs of the first-backward row scalar $D_i$.

The four outputs split according to how their sums are formed:
$(\td Q,\td{dO})$ and the row scalars $(\alpha,E)$ sum over columns for each fixed
row, while $(\td K,\td V)$ sum over rows for each fixed column. Under the
FlashAttention execution model, this requires a row-wise and a
column-wise pass. In addition, the row-wise computations appear to have the
sequential dependence $\alpha_i \to \td{P}_{ij} \to E_i \to \td{S}_{ij}$, which
suggests four passes: one for $\alpha$, one for $E$, one for
$(\td Q,\td{dO})$, and one column-wise pass for $(\td K,\td V)$. The key
observation is that both row scalars enter later quantities only
\emph{affinely}: $\alpha_i$ enters $\td{P}_{ij}$ through the term
$-\alpha_i dP_{ij}$, and $E_i$ enters $\td{S}_{ij}$ through a broadcast
subtraction. These affine forms allow the row-wise computations to be fused:
one can accumulate $\alpha$-free row quantities\footnote{We describe a quantity as
$\alpha$-free if it can be computed without first computing $\alpha$.} first, and then apply the affine
corrections once the row scalars are known, without revisiting the row.
Proposition~\ref{prop:affine} makes this explicit by naming six
row accumulators, five of which are $\alpha$-free, and the corrections
that finalize the row outputs within each row block.

\begin{proposition}[Affine fusion]
\label{prop:affine}
There exist six row accumulators consisting of $\alpha_i$ together with five $\alpha$-free quantities $E^{\circ}_i, B_i, \td{dO}^{\circ}_i, R_i, \Omega_i$ (defined in Appendix~\ref{app:affine-proof}), all with tilewise computable summands, such that
\begin{align}
\label{eq:fusion}
E_i = E^{\circ}_i - \alpha_i D_i, \quad
\td{dO}_i = \td{dO}^{\circ}_i - \alpha_i o_i, \quad
\td{Q}_i = \tau\bigl(\Omega_i - \alpha_i R_i - E_i B_i\bigr).
\end{align}
\end{proposition}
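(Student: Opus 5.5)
The plan is constructive: write out the formulas of \eqref{eq:QKtilde}--\eqref{eq:VOtilde} entrywise, separate each row quantity into the part that multiplies $\alpha_i$ or $E_i$ and the part that does not, and read off the accumulators. Since $\alpha_i$ and $E_i$ are constant along row $i$, they factor out of every column sum over $\mathcal V_i$. Each accumulator is then a sum over $j\in\mathcal V_i$ of a summand depending only on $P_{ij}$, $dP_{ij}$, $F_{ij}$, $C_{ij}$, $D_i$ and the rows $k_j,v_j,(U_K)_j,(U_V)_j$. Such summands can be formed inside a $B_r\times B_c$ tile from recomputed $P$ and $S$, which gives tilewise computability.

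\textbf{Step 1: $E_i$.} Write $\td{P}_{ij}=\td{P}^{\circ}_{ij}-\alpha_i dP_{ij}$ with $\td{P}^{\circ}_{ij}=C_{ij}+F_{ij}(dP_{ij}-D_i)$, which is $\alpha$-free. Then
\[
E_i=\sum_j P_{ij}\td{P}^{\circ}_{ij}-\alpha_i\sum_j P_{ij}dP_{ij}.
\]
Set $E^{\circ}_i:=\sum_j P_{ij}\td{P}^{\circ}_{ij}$. The second sum is $D_i$ by the definition in \eqref{eq:first-bwd}, which gives $E_i=E^{\circ}_i-\alpha_i D_i$.

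\textbf{Step 2: $\td{dO}_i$.} From \eqref{eq:VOtilde},
\[
\td{dO}_i=\sum_j P_{ij}(U_V)_j+\sum_j P_{ij}(F_{ij}-\alpha_i)v_j.
\]
Define $\td{dO}^{\circ}_i:=\sum_j P_{ij}\bigl((U_V)_j+F_{ij}v_j\bigr)$. The remaining term is $-\alpha_i\sum_j P_{ij}v_j=-\alpha_i o_i$, because $O=PV$.

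\textbf{Step 3: $\td{Q}_i$.} Expand $\tau^{-1}\td{Q}_i=\sum_j \td{S}_{ij}k_j+\sum_j dS_{ij}(U_K)_j$, using
\[
\td{S}_{ij}=P_{ij}\td{P}^{\circ}_{ij}-\alpha_iP_{ij}dP_{ij}-E_iP_{ij}.
\]
Set
\[
\Omega_i:=\sum_j\bigl(P_{ij}\td{P}^{\circ}_{ij}k_j+dS_{ij}(U_K)_j\bigr),\qquad R_i:=\sum_j P_{ij}dP_{ij}k_j,\qquad B_i:=\sum_j P_{ij}k_j.
\]
All three are $\alpha$-free and $E$-free. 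The term $dS\,U_K$ involves only first-backward quantities and goes into $\Omega_i$. This gives $\td{Q}_i=\tau(\Omega_i-\alpha_iR_i-E_iB_i)$. Finally, $\alpha_i=\sum_jP_{ij}F_{ij}$ is itself a tilewise row sum, which completes the list of six accumulators.

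\textbf{Main obstacle.} The only delicate point is Step 3. Because $E_i$ itself depends on $\alpha_i$, one must check that no product $\alpha_iE_i$ or nested dependence hides inside the column sum. The resolution is that $E_i$ multiplies $P_{ij}$ alone, giving $B_i$, while the $\alpha$-dependence of $E_i$ is handled separately by Step 1. So the two corrections compose, rather than requiring $E_i$ to be known during the sweep. I would also check carefully that the closed forms in \eqref{eq:QKtilde} and the sign conventions for $\td{P}$ from Appendix~\ref{app:derivation} are used consistently.
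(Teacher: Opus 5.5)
Your proposal is correct and follows the paper's proof essentially step for step. It uses the same substitution $\td{P}_{ij}=\td{P}^{\circ}_{ij}-\alpha_i\,dP_{ij}$, the same identifications $\sum_{j}P_{ij}\,dP_{ij}=D_i$ and $\sum_j P_{ij}v_j=o_i$, and the same definitions of $E^{\circ}_i$, $\td{dO}^{\circ}_i$, $\Omega_i$, $R_i$, and $B_i$; your remark that $E_i$ enters $\td{Q}_i$ only through the $\alpha$-free coefficient $B_i$, with its own $\alpha$-dependence resolved by the first identity, makes explicit a point the paper leaves implicit.
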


The proof is by direct substitution; see Appendix~\ref{app:affine-proof}.
A single-pass schedule for both row-owned and column-owned outputs would
need to pay one of three costs: materializing an \(N\times N\) tensor in
HBM, accumulating column-owned outputs through global atomic adds, or
holding \(\Theta(Nd)\) partial column accumulators on chip. FlashBack~\citep{engstrom2024flashback}
pays the second cost; \textsc{FlashBoB} uses two passes to avoid all
three. The schedule-level argument is given in Appendix~\ref{app:two-pass-proof}.

\begin{figure}[t]
\centering
\resizebox{\textwidth}{!}{\begin{tikzpicture}[
    x=1cm, y=1cm,
    font=\small,
    >={Stealth[length=2.4mm,width=1.8mm]},
    panel/.style={
        rounded corners=8pt,
        line width=1.05pt
    },
    stage/.style={
        rounded corners=5pt,
        line width=0.85pt,
        align=center,
        inner xsep=6pt,
        inner ysep=5pt,
        minimum height=1.55cm,
        text=fbInk
    },
    writebar/.style={
        rounded corners=5pt,
        line width=0.85pt,
        align=center,
        inner xsep=8pt,
        inner ysep=5pt,
        minimum height=0.70cm,
        text=fbInk
    },
    flowarrow/.style={
        -{Stealth[length=3.1mm,width=2.25mm]},
        line width=1.05pt,
        draw=black!72
    },
    stagearrow/.style={
        -{Stealth[length=2.35mm,width=1.65mm]},
        line width=0.8pt,
        draw=black!68
    },
    legendsq/.style={
        rounded corners=1.5pt,
        line width=0.7pt,
        inner sep=0pt,
        minimum width=0.5cm,
        minimum height=0.30cm
    }
]

\coordinate (Lsw) at (0.00,0.00);
\coordinate (Lne) at (7.95,4.75);
\coordinate (Bsw) at (8.65,0.50);
\coordinate (Bne) at (10.75,4.25);
\coordinate (Rsw) at (11.45,0.00);
\coordinate (Rne) at (19.40,4.75);

\node[legendsq, draw=hbmStroke, fill=hbmFill] (legHBM) at (6.40,5.50) {};
\node[anchor=west, font=\scriptsize\bfseries, text=fbInk]
    at ([xshift=3pt]legHBM.east) {HBM};

\node[legendsq, draw=sramStroke, fill=sramFill] (legSRAM) at (8.60,5.50) {};
\node[anchor=west, font=\scriptsize\bfseries, text=fbInk]
    at ([xshift=3pt]legSRAM.east) {SRAM};

\node[anchor=west, font=\scriptsize\bfseries, text=roleFixed]
    at (10.95,5.50) {fixed};
\node[anchor=west, font=\scriptsize\bfseries, text=roleStream]
    at (12.40,5.50) {stream};

\path[panel, draw=fbBlue!92!black] (Lsw) rectangle (Lne);
\path[panel, draw=hbmStroke, fill=hbmFill] (Bsw) rectangle (Bne);
\path[panel, draw=fbRed!92!black] (Rsw) rectangle (Rne);

\draw[dashed, line width=0.55pt, draw=hbmStroke!85]
    (8.10,4.55) -- (8.85,4.55);
\node[font=\scriptsize\bfseries\itshape, text=hbmStroke] at (9.70,4.55)
    {kernel boundary};
\draw[dashed, line width=0.55pt, draw=hbmStroke!85]
    (10.55,4.55) -- (11.30,4.55);

\node[anchor=west, font=\bfseries\large, text=fbBlue!95!black]
    at ($(Lsw)+(0.45,4.30)$) {\textsc{Pass 1}};
\node[anchor=west, font=\scriptsize\bfseries, text=fbMuted]
    at ($(Lsw)+(0.45,3.93)$) {hold one row block, stream columns};

\begin{scope}[shift={(6.55,4.00)}, x=0.13cm, y=0.13cm]
    \foreach \gx in {0,1,2,3,4} {
        \foreach \gy in {0,1,2,3} {
            \path[
                draw=fbBlue!42,
                fill=white,
                line width=0.32pt,
                rounded corners=0.6pt
            ]
                (\gx,\gy) rectangle ++(0.76,0.76);
        }
    }
    \foreach \gx in {0,1,2,3,4} {
        \path[
            draw=fbBlue!95!black,
            fill=fbBlue!90!black,
            line width=0.35pt,
            rounded corners=0.6pt
        ]
            (\gx,2) rectangle ++(0.76,0.76);
    }
\end{scope}

\node[stage, draw=sramStroke, fill=sramFill, minimum width=2.95cm]
(Lload) at ($(Lsw)+(1.65,2.55)$) {{\bfseries\small\color{sramInk} Load}\\[5pt]
    {\small\color{roleFixed}$Q_i,\, dO_i,\, U_{Q,i}$}\\[3pt]
    {\small\color{roleStream}$K_j,\, V_j,\, U_{K,j},\, U_{V,j}$}
};

\node[stage, draw=sramStroke, fill=sramFill, minimum width=1.78cm]
(Lrec) at ($(Lsw)+(4.55,2.55)$) {{\bfseries\small\color{sramInk} Recompute}\\[3pt]
    {\small\color{fbInk}$
    \begin{gathered}
        P_{ij}\\[1pt]
        dP_{ij}\\[1pt]
        F_{ij}
    \end{gathered}
    $}
};

\node[stage, draw=sramStroke, fill=sramFill, minimum width=1.95cm]
(Lacc) at ($(Lsw)+(6.85,2.55)$) {{\bfseries\small\color{sramInk} Accumulate}\\[3pt]
    {\small\color{fbInk}$
    \begin{gathered}
        \alpha_i,\ E^{\circ}_i\\[1pt]
        \Omega_i,\ B_i\\[1pt]
        R_i,\ \td{dO}^{\circ}_i
    \end{gathered}
    $}
};

\draw[stagearrow] (Lload.east) -- (Lrec.west);
\draw[stagearrow] (Lrec.east)  -- (Lacc.west);

\node[writebar, draw=hbmStroke, fill=hbmFill!85, minimum width=6.50cm]
(Lwrite) at ($(Lsw)+(4.05,0.95)$) {{\bfseries\scriptsize\color{hbmInk} write once}\hspace{6pt}
    {\small\color{fbInk}$\bigl(\alpha_i,\ E_i,\ \td{Q}_i,\ \td{dO}_i\bigr)$}
};

\node[font=\scriptsize\bfseries, text=fbMuted] at (9.70,3.45)
    {saved row state};
\node[font=\bfseries\Large, text=fbInk] at (9.70,2.40)
    {$(\alpha,\,E)$};
\node[font=\scriptsize\bfseries, text=fbMuted] at (9.70,1.35)
    {$O(N)$ scalars};

\node[anchor=west, font=\bfseries\large, text=fbRed!95!black]
    at ($(Rsw)+(0.45,4.30)$) {\textsc{Pass 2}};
\node[anchor=west, font=\scriptsize\bfseries, text=fbMuted]
    at ($(Rsw)+(0.45,3.93)$) {hold one column block, stream rows};

\begin{scope}[shift={(18.00,4.00)}, x=0.13cm, y=0.13cm]
    \foreach \gx in {0,1,2,3,4} {
        \foreach \gy in {0,1,2,3} {
            \path[
                draw=fbRed!38,
                fill=white,
                line width=0.32pt,
                rounded corners=0.6pt
            ]
                (\gx,\gy) rectangle ++(0.76,0.76);
        }
    }
    \foreach \gy in {0,1,2,3} {
        \path[
            draw=fbRed!95!black,
            fill=fbRed!90!black,
            line width=0.35pt,
            rounded corners=0.6pt
        ]
            (2,\gy) rectangle ++(0.76,0.76);
    }
\end{scope}

\node[stage, draw=sramStroke, fill=sramFill, minimum width=2.95cm]
(Rload) at ($(Rsw)+(1.65,2.55)$) {{\bfseries\small\color{sramInk} Load}\\[5pt]
    {\small\color{roleFixed}$K_j,\, V_j,\, U_{K,j},\, U_{V,j}$}\\[3pt]
    {\small\color{roleStream}$Q_i,\, dO_i,\, \alpha_i,\, E_i$}
};

\node[stage, draw=sramStroke, fill=sramFill, minimum width=1.78cm]
(Rrec) at ($(Rsw)+(4.55,2.55)$) {{\bfseries\small\color{sramInk} Recompute}\\[3pt]
    {\small\color{fbInk}$
    \begin{gathered}
        P_{ij}\\[1pt]
        dP_{ij}\\[1pt]
        F_{ij}
    \end{gathered}
    $}
};

\node[stage, draw=sramStroke, fill=sramFill, minimum width=1.95cm]
(Rred) at ($(Rsw)+(6.85,2.55)$) {{\bfseries\small\color{sramInk} Reduce}\\[3pt]
    {\small\color{fbInk}$
    \begin{gathered}
        \td{K}_j\\[3pt]
        \td{V}_j
    \end{gathered}
    $}
};

\draw[stagearrow] (Rload.east) -- (Rrec.west);
\draw[stagearrow] (Rrec.east)  -- (Rred.west);

\node[writebar, draw=hbmStroke, fill=hbmFill!85, minimum width=6.50cm]
(Rwrite) at ($(Rsw)+(4.05,0.95)$) {{\bfseries\scriptsize\color{hbmInk} write once}\hspace{6pt}
    {\small\color{fbInk}$\bigl(\td{K}_j,\ \td{V}_j\bigr)$}
};

\draw[flowarrow]
    ($(Lne)+(0.00,-2.40)$) --
    node[above, font=\scriptsize\bfseries, text=fbMuted] {write}
    ($(Bsw)+(0.00,1.85)$);

\draw[flowarrow]
    ($(Bne)+(0.00,-1.90)$) --
    node[above, font=\scriptsize\bfseries, text=fbMuted] {read}
    ($(Rsw)+(0.00,2.35)$);

\end{tikzpicture}}
\caption{\textsc{FlashBoB} overview. The legend at top encodes memory tier
(HBM, SRAM) and tile role; saturated text marks variables held
\textit{fixed} in registers, lighter text marks variables \textit{streamed}
through. Pass~1 holds a row block and streams column blocks to finalize the
row-side outputs and write $(\alpha,E)$ to HBM. Across the kernel boundary,
Pass~2 holds a column block, streams row blocks, reads $(\alpha,E)$, and
writes the remaining outputs. Only two length-$N$ vectors cross the boundary,
and no $N{\times}N$ tensor is ever materialized in HBM.}
\label{fig:overview}
\end{figure}

\vspace*{-0.05in}
\subsection{Algorithm}
\label{sec:algo}
\vspace*{-0.05in}

Algorithm~\ref{alg:flashbob} realizes the two-pass schedule in the
block-streaming style of FlashAttention. HBM-resident inputs are
$Q, K, V, dO, U_Q, U_K, U_V, O \in \R^{N \times d}$ and the saved
row state $L, D \in \R^N$. Pass~1 keeps a query-row block in SRAM, streams over
key-value blocks, accumulates the six row accumulators of
Proposition~\ref{prop:affine}, applies
the corrections~\eqref{eq:fusion}, and writes the auxiliary row quantities
$(\alpha_i, E_i)$ together with $(\td{Q}_i, \td{dO}_i)$ once per row block.
Pass~2 keeps a key-value block in SRAM, streams over row blocks together with
the saved $(\alpha_i, E_i)$, and writes $(\td{K}_j, \td{V}_j)$ once per
column block. No $N \times N$ intermediate is ever written to HBM. Block sizes follow the
FlashAttention-style SRAM budget. With
$b = \lfloor M / (c_{\textsc{BoB}} d) \rfloor$, where
$c_{\textsc{BoB}}>0$ accounts for all arrays and scalar buffers that must be stored in SRAM while processing one block, Pass~1 keeps a row block in SRAM and streams smaller column blocks,
$B_r^{(1)} = b$ and $B_c^{(1)} = \min(b,d)$; Pass~2 reverses the
orientation, using $B_c^{(2)} = b$ and $B_r^{(2)} = \min(b,d)$.
Appendix~\ref{app:io-ledger} gives the SRAM budget derivation.
 
\begin{algorithm}[t]
\caption{\textsc{FlashBoB}: two-pass exact attention BoB. A fully
annotated version with explicit tile shapes appears as
Algorithm~\ref{alg:flashbob-full} in Appendix~\ref{app:full-algo}.}
\label{alg:flashbob}
\small
\begin{algorithmic}[1]
\Require Matrices $Q, K, V, dO, U_Q, U_K, U_V, O \in \R^{N \times d}$; row state $L, D \in \R^N$, all in HBM; SRAM size $M$
\Ensure BoB outputs $\td{Q}, \td{K}, \td{V}, \td{dO}$ and row scalars $\alpha, E$
\State $b \gets \lfloor M / (c_{\textsc{BoB}} d) \rfloor$;\;
       $B_r^{(1)}, B_c^{(2)} \gets b$;\;
       $B_c^{(1)}, B_r^{(2)} \gets \min(b, d)$
\Statex \colorbox{green!10}{\parbox{0.97\linewidth}{\textsc{Pass 1 (row-major):} Accum. row summaries $\rightarrow$ apply scalar updates $\rightarrow$ write $(\alpha, E, \td{Q}, \td{dO})$}}
\For{$i = 1$ \textbf{to} $\lceil N / B_r^{(1)} \rceil$}
    \State Load active row tiles $(Q_i, dO_i, U_{Q,i}, O_i)$ and saved row scalars $(L_i, D_i)$\;
    \State Zero the six row accumulators \Comment{Prop.~\ref{prop:affine}}
    \For{$j = 1$ \textbf{to} $\lceil N / B_c^{(1)} \rceil$}
        \State Load $(K_j, V_j, U_{K,j}, U_{V,j})$; recompute $P_{ij}$
               \Comment{from saved $L$}
        \State Form tile intermediates $dP_{ij}, F_{ij}, C_{ij}, \td{P_{ij}}^{\circ}$ on chip
        \State Accumulate $\alpha_i,\, E^{\circ}_i,\, \td{dO}^{\circ}_i,\, \Omega_i,\, B_i,\, R_i$
    \EndFor
    \State Apply scalar updates to form $E_i, \td{dO}_i, \td{Q}_i$
           \Comment{Eq.~\eqref{eq:fusion}}
    \State Write $\alpha_i, E_i, \td{Q}_i, \td{dO}_i$ to HBM
\EndFor
\Statex \colorbox{orange!12}{\parbox{0.97\linewidth}{\textsc{Pass 2 (column-major):} Load saved $(\alpha, E)$ $\rightarrow$ accumulate and write $(\td{K}, \td{V})$}}
\For{$j = 1$ \textbf{to} $\lceil N / B_c^{(2)} \rceil$}
    \State Load active column tiles $(K_j, V_j, U_{K,j}, U_{V,j})$;
           zero $\td{K}_j, \td{V}_j$
    \For{$i = 1$ \textbf{to} $\lceil N / B_r^{(2)} \rceil$}
        \State Load streamed row tiles and saved row scalars $(L_i, D_i, \alpha_i, E_i)$;\;recompute $P_{ij}$ and tile intermediates
        \State $\td{V}_j \mathrel{+}= (P_{ij} \odot h_{ij})^\Tr dO_i$;\;
               $\td{K}_j \mathrel{+}= \tau(\td{S}_{ij}^\Tr Q_i + dS_{ij}^\Tr U_{Q,i})$
    \EndFor
    \State Write $\td{K}_j, \td{V}_j$ to HBM
\EndFor
\end{algorithmic}
\end{algorithm}
 
\begin{theorem}[Correctness, FLOPs, and memory]
\label{thm:correct}
Algorithm~\ref{alg:flashbob} computes exactly the BoB outputs defined
by Equations~\eqref{eq:QKtilde}--\eqref{eq:VOtilde}, requires
$\Theta(N^2 d)$ FLOPs, uses $O(N)$ extra HBM beyond inputs, outputs,
the saved row state $(L,D)$, and the forward output $O$, and fits
within the fast-memory budget $M$ for block sizes $B_r, B_c$ satisfying
$B_r B_c + (B_r + B_c) d = O(M)$.
\end{theorem}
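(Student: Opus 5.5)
We split the statement into four claims — correctness, FLOP count, extra HBM, and SRAM fit — and take them in that order.

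\textbf{Correctness.} We take Equations~\eqref{eq:QKtilde}--\eqref{eq:VOtilde} (derived in Appendix~\ref{app:derivation}) as the specification and show that each output written by Algorithm~\ref{alg:flashbob} equals the corresponding expression.

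For Pass~1, fix a row block. Every accumulator of Proposition~\ref{prop:affine} is a sum over valid columns $j\in\mathcal V_i$ of tilewise computable summands. Those summands depend only on $P_{ij}$, $dP_{ij}$, $F_{ij}$, $C_{ij}$ and the loaded row and column tiles. Two facts make the recomputation exact:
\begin{itemize}
\item $P_{ij}=\exp(S_{ij}-L_i)$ is recomputed exactly from the saved $L$.
\item $D_i=do_i^\Tr o_i$ is available from the saved row state.
\end{itemize}
Summation over column blocks is associative, and masked entries contribute $P_{ij}=0$. So after the inner loop each accumulator equals its full row sum. Applying \eqref{eq:fusion} then gives $E_i$, $\td{dO}_i$ and $\td{Q}_i$ exactly by Proposition~\ref{prop:affine}. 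We should also check that $\alpha_i$ as accumulated matches $\sum_j P_{ij}F_{ij}$.

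For Pass~2, the saved $(\alpha,E)$ are exact, so each tile's $h_{ij}=F_{ij}-\alpha_i$, $\td P_{ij}$, $\td S_{ij}$ and $dS_{ij}$ are recomputed exactly from their definitions. Summing $(P_{ij}\odot h_{ij})^\Tr dO_i$ and $\tau(\td S_{ij}^\Tr Q_i+dS_{ij}^\Tr U_{Q,i})$ over all row blocks reproduces the column sums in \eqref{eq:QKtilde}--\eqref{eq:VOtilde} blockwise.

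\textbf{FLOPs.} Each pass visits every $(i,j)$ tile pair exactly once. Per tile of size $B_r\times B_c$:
\begin{itemize}
\item The matrix products ($QK^\Tr$, $dO\,V^\Tr$, $U_QK^\Tr$, $QU_K^\Tr$, $dO\,U_V^\Tr$, and the accumulations against $K$, $U_K$, $V$, $U_V$, $Q$, $U_Q$, $dO$) are a constant number of $B_rB_cd$ operations.
\item The elementwise work is $O(B_rB_c)$.
\item The row-scalar reductions are $O(B_rB_c)$.
\end{itemize}
Summing over $(N/B_r)(N/B_c)$ tiles and two passes gives $O(N^2d)$. The finalization \eqref{eq:fusion} costs $O(Nd)$. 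For the matching lower bound, computing $QK^\Tr$ on all valid entries already requires $\Omega(N^2d)$ work in the dense case. The count should be stated for dense (unmasked) attention, or with $N^2$ replaced by the number of valid entries.

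\textbf{Memory.} In HBM, the only new persistent arrays beyond inputs, outputs, $(L,D)$ and $O$ are $\alpha,E\in\R^N$, which is $O(N)$. The outputs $\td Q,\td K,\td V,\td{dO}$ are counted as outputs, and no $N\times N$ tensor is ever written.

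In SRAM, Pass~1 holds a constant number of $B_r\times d$ row tiles: $Q_i$, $dO_i$, $U_{Q,i}$, $O_i$, and the accumulators $\td{dO}^\circ_i$, $\Omega_i$, $B_i$, $R_i$. Each is a $B_r\times d$ matrix. We will need to confirm this from the definitions in Appendix~\ref{app:affine-proof}: if any of them is $d\times d$, the budget needs $d^2=O(M)$, which the large-cache assumption supplies. Pass~1 also holds $O(B_r)$ row scalars, a constant number of $B_c\times d$ column tiles, and a constant number of $B_r\times B_c$ tile intermediates ($P$, $dP$, $F$, $C$, $\td P$, $\td S$, $dS$). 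Pass~2 is symmetric. The total is $c\,(B_rB_c+(B_r+B_c)d)$, which is $O(M)$ under the stated condition.

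The choice $b=\lfloor M/(c_{\textsc{BoB}}d)\rfloor$ with the other block size $\min(b,d)$ satisfies the condition, since $b\min(b,d)\le bd\le M/c_{\textsc{BoB}}$.

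\textbf{Main obstacle.} The step we expect to be hardest is bookkeeping the exact shapes of the Proposition~\ref{prop:affine} accumulators, in particular $B_i$ and $R_i$. Their shapes determine whether the SRAM bound is linear in $B_r d$ and whether any hidden $N$-dependence enters. We would first inspect their definitions and verify that each summand is a $B_r\times d$ update formed from one tile.
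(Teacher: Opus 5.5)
Your proposal is correct and follows essentially the same route as the paper's proof. Correctness rests on Proposition~\ref{prop:affine} plus exact blockwise accumulation of the six accumulators in Pass~1, and on direct tilewise recomputation from the saved $(\alpha,E)$ in Pass~2. FLOPs come from per-tile counting, $(\alpha,E)$ is the only auxiliary HBM, and SRAM follows from a tile-budget count.

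Two small notes. First, the obstacle you flagged is a non-issue: $B_i=\sum_j P_{ij}k_j$ and $R_i=\sum_j P_{ij}\,dP_{ij}\,k_j$ are $d$-vectors per row, so all four vector accumulators are $B_r\times d$. Second, for the lower half of $\Theta(N^2d)$, argue from the algorithm's own work. It visits every tile pair regardless of the mask and forms $\tau Q_iK_j^\Tr$ classically at cost $\Theta(B_rB_cd)$ per tile. An inherent $\Omega(N^2d)$ cost for $QK^\Tr$ is not the right justification, since fast matrix multiplication can undercut it.
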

\vspace*{-0.05in}
The proof is in Appendix~\ref{app:correctness-proof}; the only auxiliary HBM beyond inputs, outputs, the saved row state $(L,D)$, and the forward output $O$ is the two length-\(N\) vectors \(\alpha,E\).

\vspace*{-0.05in}
\subsection{I/O complexity and large-cache optimality}
\label{sec:io}
\label{sec:optimality}
\vspace*{-0.05in}

We analyze HBM traffic in the standard two-level memory model from
\S\ref{sec:prelim}. SRAM has capacity \(M\) elements, and HBM traffic
counts elements moved between HBM and SRAM. Let
\(B_r^{(1)},B_c^{(1)}\) denote the row and column block sizes in Pass~1,
and \(B_r^{(2)},B_c^{(2)}\) those in Pass~2. In this subsection, we focus on the large-cache regime where $M=\Omega(d^2)$.
 
\begin{theorem}[HBM traffic]
\label{thm:io}
For attention with \(N\) rows and head dimension \(d\),
Algorithm~\ref{alg:flashbob} performs
$Q_{\textsc{FlashBoB}}
=
12Nd+4N
+
\frac{4dN^2}{B_r^{(1)}}
+
\frac{(3d+4)N^2}{B_c^{(2)}}$
HBM transfers, up to boundary effects.
\end{theorem}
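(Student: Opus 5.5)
The plan is a direct I/O ledger: count the HBM elements moved by each load and store in Algorithm~\ref{alg:flashbob}, pass by pass. Each count is (number of outer-loop iterations) $\times$ (elements per iteration). One-time reads of outer tiles and final writes contribute $\Theta(Nd)$ and $\Theta(N)$ terms. Re-streaming the inner operand once per outer block contributes the $N^2$ terms. Throughout we ignore ragged last blocks, which is the meaning of ``up to boundary effects,'' and we assume dense masks, so every tile is visited. Correctness and the SRAM budget are taken from Theorem~\ref{thm:correct}, so only traffic needs counting.

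\textbf{Pass~1.} There are $N/B_r^{(1)}$ row blocks.
\begin{itemize}[leftmargin=*, itemsep=1pt, topsep=2pt]
\item Each row block loads $Q_i, dO_i, U_{Q,i}, O_i$ once, which is $4B_rd$ elements, together with $L_i, D_i$, which is $2B_r$ scalars. Summed over blocks this gives $4Nd + 2N$.
\item Each row block then streams all of $K, V, U_K, U_V$. That is $4Nd$ elements per row block, hence $4dN^2/B_r^{(1)}$ in total. Note that $B_c^{(1)}$ drops out, because the whole of $K, V, U_K, U_V$ is read regardless of how it is chunked.
\item The writes of $\td{Q}, \td{dO}$ give $2Nd$, and the writes of $\alpha, E$ give $2N$.
\end{itemize}
Pass~1 therefore costs $6Nd + 4N + 4dN^2/B_r^{(1)}$.

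\textbf{Pass~2.} There are $N/B_c^{(2)}$ column blocks.
\begin{itemize}[leftmargin=*, itemsep=1pt, topsep=2pt]
\item Each column block loads $K_j, V_j, U_{K,j}, U_{V,j}$ once, giving $4Nd$ in total.
\item The outputs $\td{K}, \td{V}$ are written once, giving $2Nd$.
\end{itemize}
Together with Pass~1 this yields the $12Nd + 4N$ part of the formula.

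\textbf{Pass~2 streamed rows.} Each column block streams the row-side operands. By Eqs.~\eqref{eq:QKtilde}--\eqref{eq:VOtilde}, the inner update needs the following.
\begin{itemize}[leftmargin=*, itemsep=1pt, topsep=2pt]
\item $Q_i$, for $S$, $F$ and the $\td{S}^\Tr Q$ term.
\item $dO_i$, for $dP$, $C$ and $\td V$.
\item $U_{Q,i}$, for $F$ and the $dS^\Tr U_Q$ term.
\item The four row scalars $L_i, D_i, \alpha_i, E_i$.
\end{itemize}
This is $(3d + 4)N$ elements per column block, hence $(3d+4)N^2/B_c^{(2)}$. $O$ is not needed here, since $D$ is already saved. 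The row-side outputs $(\td Q, \td{dO})$ were finalized in Pass~1 and are not re-read.

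\textbf{Main obstacle.} The delicate step is certifying that the per-tile operand lists are exactly these, and that nothing else crosses the HBM boundary.

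First, I would check from the definitions of the six accumulators in Appendix~\ref{app:affine-proof} that Pass~1 needs nothing beyond $(Q_i, dO_i, U_{Q,i}, O_i, L_i, D_i)$ and the four streamed column tiles. The $O_i$ enters only through the correction $-\alpha_i o_i$ in~\eqref{eq:fusion}. In particular, no accumulator spills to HBM: all six stay resident because $B_r^{(1)} = b$ is chosen within the SRAM budget of Theorem~\ref{thm:correct}.

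Second, I would check that in Pass~2 the quantities $P$, $dP$, $F$, $C$, $h$, $\td P$, $dS$ and $\td S$ are all recomputable on chip from the loaded tiles plus $(L_i, D_i, \alpha_i, E_i)$. This holds because $\td P_{ij}$ uses only $\alpha_i$ and $D_i$ as row scalars, and $\td S_{ij}$ uses only $E_i$. Consequently, no $N\times N$ intermediate contributes traffic.

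Summing the two passes gives the stated $Q_{\textsc{FlashBoB}}$.
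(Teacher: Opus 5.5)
Your proposal is correct and matches the paper's proof: the paper also builds a per-pass ledger of stationary reads ($4Nd+2N$ in Pass~1, $4Nd$ in Pass~2), streamed reads ($4dN^2/B_r^{(1)}$ and $(3d+4)N^2/B_c^{(2)}$), and writes ($2Nd+2N$ and $2Nd$), and then sums them. Your additional check that each tile needs exactly these operands, with $O_i$ used only in the $-\alpha_i o_i$ correction and no $N\times N$ intermediate leaving SRAM, makes explicit what the paper leaves implicit in its ledger table.
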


\emph{Remark}. Only the reused block sizes \(B_r^{(1)}\) and \(B_c^{(2)}\) appear in
Theorem~\ref{thm:io}: the streamed dimension cancels inside each pass.
The \(12Nd+4N\) term collects input loads, row-metadata reads, and output
writes. Every \(N\times N\) quantity is recomputed and consumed within
SRAM before moving to the next block. The per-pass ledger is given in
Appendix~\ref{app:io-ledger}.

\begin{corollary}[Large-cache upper bound]
\label{cor:io-upper}
Assume \(d^2 \lesssim M \lesssim Nd\) and \(N\ge d\). With
\(B_r^{(1)} = B_c^{(2)} = b\),
\(B_c^{(1)} = B_r^{(2)} = \min(b,d)\), and
\(b = \Theta(M/d)\), \textsc{FlashBoB} has HBM traffic
\(Q_{\textsc{FlashBoB}} = \Theta(N^2d^2/M)\).
\end{corollary}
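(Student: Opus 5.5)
The plan is to substitute the stated block sizes into the exact traffic formula of Theorem~\ref{thm:io} and show that the quadratic terms dominate the linear ones. By Theorem~\ref{thm:io},
\[
Q_{\textsc{FlashBoB}} = 12Nd + 4N + \frac{4dN^2}{B_r^{(1)}} + \frac{(3d+4)N^2}{B_c^{(2)}} .
\]
Setting $B_r^{(1)} = B_c^{(2)} = b$ gives
\[
Q_{\textsc{FlashBoB}} = 12Nd + 4N + \frac{(7d+4)N^2}{b}.
\]
Since $d\ge 1$, we have $7d+4 = \Theta(d)$. With $b=\Theta(M/d)$, concretely $b=\lfloor M/(c_{\textsc{BoB}}d)\rfloor$, the quadratic term becomes $\Theta(dN^2\cdot d/M)=\Theta(N^2d^2/M)$.

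Before that substitution is legitimate, I need $b\ge 1$ and $b \le N$, so that the blocks are nondegenerate and the floor changes $b$ by at most a constant factor.

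\emph{Lower end.} The assumption $M \gtrsim d^2$ with a suitable constant gives $M/(c_{\textsc{BoB}}d) \gtrsim d \ge 1$. Hence $\lfloor M/(c_{\textsc{BoB}}d)\rfloor = \Theta(M/d)$, and the floor costs only a constant.

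\emph{Upper end.} The assumption $M \lesssim Nd$ gives $b \lesssim N$, so the ``up to boundary effects'' caveat (a ceiling $\lceil N/b\rceil$ versus $N/b$) also changes the count by at most a constant factor.

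The choice $B_c^{(1)}=B_r^{(2)}=\min(b,d)$ does not enter the formula, by the Remark after Theorem~\ref{thm:io}. It is needed only so that the SRAM budget of Theorem~\ref{thm:correct} holds: with $B_r B_c \le bd$ and $(B_r+B_c)d \le 2bd$, both terms are $O(M)$.

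It remains to absorb the linear terms $12Nd+4N=\Theta(Nd)$ into $\Theta(N^2d^2/M)$. The hypothesis $M\lesssim Nd$ gives $N^2d^2/M \gtrsim N^2d^2/(Nd) = Nd$, so the linear part is $O(N^2d^2/M)$. The quadratic term alone already gives the matching lower bound $\Omega(N^2d^2/M)$ on the total. Combining the two, $Q_{\textsc{FlashBoB}}=\Theta(N^2d^2/M)$.

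The only delicate point is tracking the constants hidden in $\lesssim$, so that $b\ge 1$ holds and the floor and ceiling effects stay bounded. I would handle this by stating explicitly that the implicit constant in $d^2\lesssim M$ is at least $c_{\textsc{BoB}}$, and by noting that $N\ge d$ makes the regime $d^2 \lesssim M\lesssim Nd$ nonempty.
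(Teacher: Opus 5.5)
Your proposal is correct and follows essentially the same route as the paper: substitute $B_r^{(1)}=B_c^{(2)}=b$ into Theorem~\ref{thm:io} to get $12Nd+4N+(7d+4)N^2/b$, use $b=\Theta(M/d)$ to make the quadratic term $\Theta(N^2d^2/M)$, and use $M\lesssim Nd$ to absorb the $O(Nd)$ stationary traffic. Your extra bookkeeping (nondegeneracy of $b$ under the floor, ceiling and boundary effects, and the SRAM check for the $\min(b,d)$ choice) refines steps the paper leaves implicit and does not change the argument.
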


\emph{Remark}. Indeed, substituting \(b=\Theta(M/d)\) into Theorem~\ref{thm:io} gives
\(Q_{\textsc{FlashBoB}} =
12Nd+4N+(7d+4)N^2/b =
12Nd+4N+\Theta(N^2d^2/M)\). Under \(M\lesssim Nd\), the \(\Theta(N^2d^2/M)\) term dominates the stationary \(O(Nd)\) traffic.

\begin{theorem}[Large-cache lower bound under score recomputation]
\label{thm:bob-lb}
Assume \(M=\Omega(d^2)\), \(N\ge d\), \(M\lesssim Nd\), and
\(\Theta(N^2)\) valid query-key interactions. This includes dense
attention and causal attention, since causal masking leaves
\(N(N+1)/2=\Theta(N^2)\) valid interactions. In the standard
FlashAttention-style score-recomputation model formalized in
Appendix~\ref{app:lb-proof}, every exact attention-BoB algorithm in this model
requires \(\Omega(N^2d^2/M)\) HBM transfers.
\end{theorem}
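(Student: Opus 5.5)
The plan is a reduction: embed exact forward attention inside a special BoB instance, then invoke the known large-cache lower bound for exact forward attention. The relevant bound is the $\Omega(N^2d^2/M)$ bound of \citet{saha2024ioattention}, proved in the two-level / red-blue pebble model under $M=\Omega(d^2)$ and $\Theta(N^2)$ valid interactions. The score-recomputation model of Appendix~\ref{app:lb-proof} should be formalized so that any algorithm in it is also an algorithm in that pebble model. Concretely, the model allows HBM-resident inputs $Q,K,V,dO,U_Q,U_K,U_V,O,L,D$ of size $O(Nd)$, and allows tiles of $S,P,dP,F$ to be formed only in SRAM from loaded row and column tiles and saved row statistics. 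The forward bound therefore transfers once we exhibit a BoB instance whose outputs contain a forward-attention product $PV$, with the same masked score structure, as an exact function.

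For the embedding, I would use \eqref{eq:VOtilde}: $\td{dO} = P\,U_V + (P\odot h)\,V$. I set $U_Q=U_K=0$. Then $F=0$, $\alpha=0$, and $h=0$, so $\td{dO}=P\,U_V$ exactly. With $U_V$ an arbitrary $N\times d$ matrix, this is precisely forward attention $\softmax(\tau QK^\Tr+M_{\text{mask}})\,U_V$, with $U_V$ playing the role of the value matrix.

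The remaining inputs also need care. Set $dO=0$ to kill $C$ and $dP$. Then $D=0$, $dS=0$, $\td P = C = 0$, $E=0$, and $\td Q=\td K=\td V=0$. The instance is now literally forward attention plus trivially zero outputs.

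There is one subtlety: the BoB model also gives the algorithm the row statistic $L$ and the output $O$ for free. Since $O=PV$ is an input here, I choose $V$ independent of $U_V$ (e.g.\ $V=0$, so $O=0$); then $O$ leaks nothing about $PU_V$. The statistic $L$ is only $N$ scalars. I would argue that the forward lower bound (a Hong–Kung-style partition argument over the $\Theta(N^2 d)$ multiply-add interactions $P_{ij}(U_V)_{jk}$) is unaffected by giving the algorithm $O(N)$ extra free words, or equivalently by charging $O(N)\le O(N^2d^2/M)$ I/Os to load $L$ in a forward algorithm. The latter holds since $M\lesssim Nd$ and $N\ge d$ imply $N^2d^2/M\gtrsim Nd\ge N$.

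Combining these steps, any exact BoB algorithm in the model with traffic $T$ yields an exact forward algorithm with traffic $T+O(N)$. The forward bound gives $T+O(N)=\Omega(N^2d^2/M)$, hence $T=\Omega(N^2d^2/M)$. The masked cases (dense, causal) follow because the forward bound needs only $\Theta(N^2)$ valid interactions, and the reduction preserves $M_{\text{mask}}$.

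The main obstacle is the second step: making sure the forward lower bound genuinely applies inside the score-recomputation model once the extra free inputs are present. One failure mode is that the cited bound is stated for a model where $L$ is not precomputed. The other is that zero inputs might be exploited, for instance by an algorithm that detects $U_Q=0$ and skips work. The bound is a worst case over inputs, so this is harmless provided the forward bound is itself over worst-case $(Q,K,U_V)$ and our instance leaves $Q,K,U_V$ arbitrary. I would state the model in Appendix~\ref{app:lb-proof} to make both points explicit.
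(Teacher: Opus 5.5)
Your proposal is correct and follows the paper's proof essentially step for step. The paper uses the same instance $V=dO=U_Q=U_K=0$, $U_V=W$ (Lemma~\ref{lem:bob-forward-embed}), which reduces $\td{dO}$ to $\softmax(\tau QK^\Tr)W$; it inherits the same bound \citep[Lem.~3.4]{saha2024ioattention}; and it absorbs the $O(N)$ saved row statistics via $N\lesssim Nd\lesssim N^2d^2/M$, exactly as you do (Lemma~\ref{lem:fwd-lb-saved-stats}). One small correction to your parenthetical: the cited partition argument counts the $\Theta(N^2d)$ score-product vertices $Q_{i\ell}K_{j\ell}$, which is why the model insists on score recomputation, not the products $P_{ij}(U_V)_{jk}$; this does not affect your black-box reduction.
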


The proof is given in Appendix~\ref{app:lb-proof}. The reduction sets
\(V=0\), \(dO=0\), \(U_Q=U_K=0\), and \(U_V=W\), giving
\(F=0\), \(\alpha=0\), \(h=0\), and \(\td{dO}=PW=\softmax(\tau QK^\Tr)W\).
Thus the lower bound is inherited from the large-cache exact-attention lower bound of
\citep[Lemma~3.4]{saha2024ioattention}. This is a conditional optimality statement: it applies to exact
FlashAttention-style score-recomputation algorithms, not to arbitrary
exact algorithms that use a different computational model.

Combining Corollary~\ref{cor:io-upper} with
Theorem~\ref{thm:bob-lb}, \textsc{FlashBoB} has
\(\Theta(N^2d^2/M)\) HBM traffic in the regime
\(d^2\lesssim M\lesssim Nd\), matching the score-recomputation lower
bound up to stationary \(O(Nd)\) input, output, and row-state traffic.

\vspace*{-0.1in}
\section{Experiments}
\label{sec:experiments}
\vspace*{-0.1in}

We evaluate \textsc{FlashBoB} at three levels: isolated attention BoB
on A100 (\S\ref{sec:attn-kernel}), full-model GPT-2 second-order steps
on B200 (\S\ref{sec:benchmarks}), and a 1B-token Sophia-H pretraining
run on \(8{\times}\)A6000 (\S\ref{sec:fineweb}).  For the main causal-attention experiments, we use PyTorch's \texttt{nn.scaled\_dot\_product\_attention} with the FlashAttention backend for the forward and first backward, and replace only the attention double backward with our two-pass Triton kernel. Experiments requiring features not covered by PyTorch, such as the sliding-window experiments in Appendix~\ref{app:swa}, use the \texttt{flash-attn} package. Correctness, FlashBack comparisons, and environment details are in
Appendices~\ref{app:errors}, \ref{app:flashback-comparison},
and~\ref{app:env}.

\vspace*{-0.05in}
\subsection{Attention-Layer kernel scaling}
\label{sec:attn-kernel}
\vspace*{-0.05in}

We compare four exact attention-BoB implementations: \textsc{math}, the PyTorch materializing reference; \textsc{hvp-m} and \textsc{hvp-s}, the GradMem-derived hand-written HVP kernels~\cite{kuratov2026gradmem}; and \textsc{FlashBoB}, the two-pass schedule of  Algorithm~\ref{alg:flashbob}.  The HVP baselines avoid full materialization but do not preserve FlashAttention-style tile streaming at the BoB level. The full table can be found in Appendix~\ref{app:extended-kernel-scaling}.

\begin{table}[t]
\centering\small
\caption{Attention-layer BoB on NVIDIA A100 80GB.  Shape: GPT-2 Small
attention, batch 4, 12 heads, head dimension 64, causal mask, BF16 inputs with FP32 reductions.  OOM denotes out of memory; best wall-clock time is bold.}
\label{tab:attn-kernel}
\begin{tabular}{@{}r rrrr rrrr@{}}
\toprule
& \multicolumn{4}{c}{Wall-clock time (ms)} & \multicolumn{4}{c}{Peak memory (MiB)} \\
\cmidrule(lr){2-5}\cmidrule(lr){6-9}
$N$ & \textsc{math} & \textsc{hvp-m} & \textsc{hvp-s} & \textsc{FlashBoB} & \textsc{math} & \textsc{hvp-m} & \textsc{hvp-s} & \textsc{FlashBoB} \\
\midrule
     256 &    1.82 &    1.58 &    1.66 & \textbf{0.94}         &       212 &       114 &       116 &         \textbf{95} \\
     512 &    4.95 &    2.31 &    2.33 & \textbf{1.29}         &       668 &       315 &       318 &        \textbf{171} \\
  1\,024 &   17.66 &    7.39 &    6.41 & \textbf{2.57}         &    2\,375 &    1\,045 &    1\,051 &        \textbf{323} \\
  2\,048 &   68.80 &   28.34 &   22.73 & \textbf{7.53}         &    8\,957 &    3\,805 &    3\,817 &        \textbf{630} \\
  4\,096 &  265.67 &  114.89 &   89.14 & \textbf{25.80}        &   34\,794 &   14\,521 &   14\,546 &     \textbf{1\,243} \\
  8\,192 &     OOM &  468.93 &  356.84 & \textbf{95.11}        &        -- &   55\,971 &   56\,020 &     \textbf{1\,702} \\
 16\,384 &     OOM &     OOM &     OOM & \textbf{365.73}       &        -- &        -- &        -- &     \textbf{3\,372} \\
\bottomrule
\end{tabular}
\vspace*{-0.2in}
\end{table}

The speedup of \textsc{FlashBoB} over \textsc{math} grows with $N$
(from $1.9\times$ at $N{=}256$ to $10.3\times$ at $N{=}4{,}096$), and
the memory ratio grows faster (from $2.2\times$ to $28.0\times$).
\textsc{FlashBoB} is the fastest method at every tested length, and
the gap to \textsc{hvp-s} reaches $3.8\times$ at $N{=}8{,}192$.
Beyond $N{=}8{,}192$, every method except \textsc{FlashBoB} exhausts the 80~GiB
budget; \textsc{FlashBoB} continues exactly to $N{=}262{,}144$ at $52.7$~GiB peak (53,952~MiB),
a qualitative shift in feasibility rather than a marginal speedup.

Appendix~\ref{app:head-dim-sweep} confirms that the result is not an
artifact of a single head dimension. Across \(d\in\{32,64,128\}\),
\textsc{FlashBoB} is \(1.27\times\) to \(7.83\times\) faster than the
GradMem-derived HVP baselines and uses \(8.6\times\) to \(33.1\times\)
less peak memory. The runtime gap narrows at \(d=128\), where register
and tile-memory pressure become more pronounced, but the HBM advantage
persists because \textsc{FlashBoB} still avoids the large
interaction-shaped intermediates used by the HVP baselines.

\textbf{FlashBack comparison.}
FlashBack~\citep{engstrom2024flashback} is the closest prior
non-materializing exact BoB attention kernel. On the NVIDIA A6000 setup
supported by FlashBack, \textsc{FlashBoB} is faster at every shared
sequence length, with speedups of \(2.07\times\)--\(6.32\times\) over
\(N=256\) to \(65{,}536\) (Appendix~\ref{app:flashback-comparison}).
FlashBack reaches longer contexts than the PyTorch materializing and
GradMem-derived HVP baselines, but its row-parallel single-kernel design
uses global atomic updates for the column-owned outputs
\((\td{K},\td{V})\). In contrast, \textsc{FlashBoB} assigns these outputs
to a column-major pass and writes each output tile once. We did not evaluate FlashBack at
\(N=131{,}072\) or \(N=262{,}144\). Our direct FlashBack comparison uses the
A6000 setup supported by the FlashBack implementation, whereas the
largest isolated \textsc{FlashBoB} feasibility sweep in
Table~\ref{tab:attn-kernel-extended} is run on A100 80GB. We therefore
report all shared A6000 lengths that we successfully benchmarked, and do
not claim a shared-hardware FlashBack failure mode at the two largest
A100-only sequence lengths.

\vspace*{-0.05in}
\subsection{Full-model benchmarks}
\label{sec:benchmarks}
\vspace*{-0.05in}

We replace the attention double backward inside complete GPT-2 models
of three sizes (Small, Medium, Large) and benchmark an end-to-end
second-order step against \textsc{math}, \textsc{hvp-m}, and
\textsc{hvp-s}. Non-attention components are unchanged, so
whole-model gains lower-bound the isolated kernel gains.
Table~\ref{tab:gpt2-bench} reports representative configurations;
the full sweep is located in Appendix~\ref{app:gpt2-full}.
 
\begin{table}[t]
\centering\small
\caption{End-to-end second-order GPT-2 step on NVIDIA B200.  Batch 4, BF16 inputs with FP32 reductions.  OOM denotes exceeding the measured memory budget.  At small \(N\), attention is only part of the total step; the gap widens when attention dominates runtime and memory.}
\label{tab:gpt2-bench}
\begin{tabular}{@{}ll rrrr rrrr@{}}
\toprule
& & \multicolumn{4}{c}{Wall-clock time (ms)} & \multicolumn{4}{c}{Peak memory (MiB)} \\
\cmidrule(lr){3-6}\cmidrule(lr){7-10}
Model & $N$ & \textsc{math} & \textsc{hvp-m} & \textsc{hvp-s} & \textsc{FlashBoB} & \textsc{math} & \textsc{hvp-m} & \textsc{hvp-s} & \textsc{FlashBoB} \\
\midrule
\multirow{3}{*}{Small}
  & 4\,096  &  397.7 &  417.5 &  304.0 & \textbf{190.9}  &  79\,058 &  \textbf{17\,649} &  17\,804 &  17\,804 \\
  & 8\,192  &    OOM & 1537.5 & 1017.3 & \textbf{509.1}  &       -- &  46\,965 &  47\,267 &  \textbf{33\,604} \\
  & 32\,768 &    OOM &    OOM &    OOM & \textbf{5368.3} &       -- &       -- &       -- & \textbf{128\,392} \\
\midrule
\multirow{3}{*}{Medium}
  & 2\,048  &  312.6 &  343.0 &  255.9 & \textbf{177.2}  &  55\,203 &  \textbf{16\,612} &  16\,810 &  16\,810 \\
  & 4\,096  &    OOM & 1028.5 &  726.1 & \textbf{420.9}  &       -- &  30\,480 &  30\,876 &  \textbf{28\,677} \\
  & 16\,384 &    OOM &    OOM &    OOM & \textbf{3878.9} &       -- &       -- &       -- &  \textbf{99\,912} \\
\midrule
\multirow{3}{*}{Large}
  & 2\,048  &  562.2 &  614.6 &  450.4 & \textbf{304.3}  & 100\,554 &  \textbf{27\,666} &  28\,040 &  28\,045 \\
  & 4\,096  &    OOM & 1879.6 & 1313.4 & \textbf{740.3}  &       -- &  49\,205 &  49\,968 &  \textbf{45\,457} \\
  & 16\,384 &    OOM &    OOM &    OOM & \textbf{7106.5} &       -- &       -- &       -- & \textbf{150\,019} \\
\bottomrule
\end{tabular}
\vspace*{-0.1in}
\end{table}

\textsc{FlashBoB} is fastest at every tested configuration. At the largest settings where all references fit, it is $1.76$ to $2.08\times$ faster than \textsc{math} and $1.44$ to $1.59\times$ faster than \textsc{hvp-s}, with $3.3$ to $4.4\times$ lower peak memory than \textsc{math}. It also runs at \(N{=}32{,}768\) for GPT-2 Small and \(N{=}16{,}384\) for GPT-2 Medium/Large, where every reference OOMs. A sliding-window sweep appears in Appendix~\ref{app:swa}.

\vspace*{-0.05in}
\subsection{Training on FineWeb}
\label{sec:fineweb}
\vspace*{-0.05in}
We train GPT-2 Small on 1B tokens from FineWeb at $N{=}2048$ on
$8{\times}$NVIDIA A6000 48GB, with per-GPU batch size $2$ and gradient
accumulation $16$ (effective batch $256$, $1{,}908$ optimizer steps).
Following~\citep{liu2024sophia}, we estimate the Hessian diagonal once
every ten steps. We compare AdamW, Sophia-H with the PyTorch math
reference, and Sophia-H with \textsc{FlashBoB}. The AdamW curve is the
best-performing AdamW reference from a small learning-rate sweep, while
the two Sophia-H runs use matched optimizer, model, data-streaming,
precision, seed, batch, and Hessian-estimation settings; the only
intended difference between them is the attention BoB backend used
during Hessian estimation. Full training hyperparameters are given in
Appendix~\ref{app:fineweb-training-repro}.

PyTorch's \texttt{scaled\_dot\_product\_attention} does not support
double backward in any accelerated backend, so an exact second-order
pipeline using the standard PyTorch path must fall back to the
\texttt{math} backend for the attention path. This affects more than the
Hessian-estimation event: because the graph must remain compatible with
double backward, the math-reference Sophia-H run loses the optimized
attention path even on ordinary gradient steps. In contrast,
\textsc{FlashBoB} preserves the optimized forward and first backward and
replaces only the attention double backward. As a result, the
math-reference pipeline's per-step non-Hessian time is itself
substantially slower than \textsc{FlashBoB}'s
(${\approx}4.9$\,s versus ${\approx}2.3$\,s).

\begin{table}[t]
\centering\small
\caption{GPT-2 Small training on 1B FineWeb tokens at \(N=2048\)
(\(1{,}908\) steps, effective batch 256).  Sophia-H estimates
the Hessian every ten steps.  Hardware: \(8{\times}\)A6000.}
\label{tab:fineweb-step}
\begin{tabular}{@{}l rr r@{}}
\toprule
Metric & Sophia-H (math ref.) & Sophia-H (\textsc{FlashBoB}) & Ratio \\
\midrule
Hessian estimation per event (ms) & 13{,}500 & 5{,}500  & $2.45\times$ \\
Total Hessian time (min)          &     42.8 &     17.4 & $2.45\times$ \\
Total non-Hessian time (hr)       &     2.59 &     1.23 & $2.10\times$ \\
Full training wall time (hr)      &     3.30 &     1.52 & $2.17\times$ \\
Peak memory per GPU (MiB)         & 30{,}644 & 13{,}830 & $2.22\times$ \\
\bottomrule
\end{tabular}
\vspace*{-0.1in}
\end{table}

\begin{figure}[t]
\centering
\includegraphics[width=1.0\textwidth]{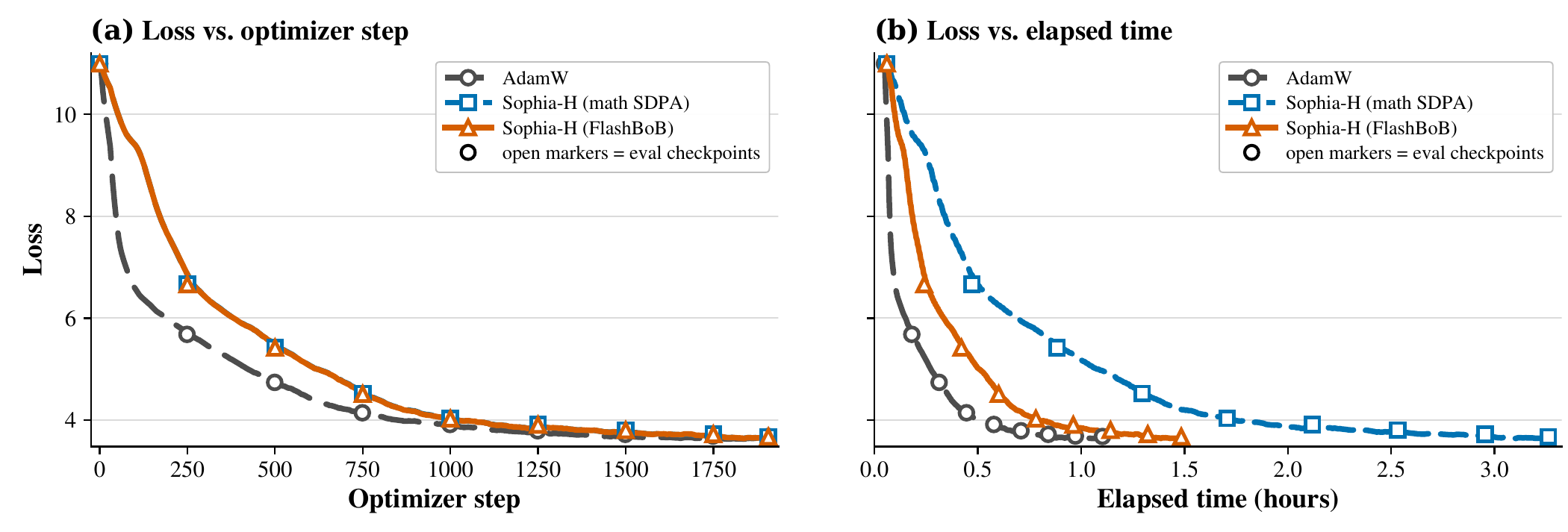}
\vspace*{-0.2in}
\caption{Single-seed systems parity check. Final losses are $3.6759$ for
AdamW, $3.6674$ for Sophia-H with the PyTorch math backend, and $3.6662$
for Sophia-H with \textsc{FlashBoB}; the elapsed-time panel shows that
\textsc{FlashBoB} preserves the Sophia-H trajectory while moving exact
second-order training closer to AdamW in time-to-loss.}
\label{fig:fineweb_parity}
\vspace*{-0.15in}
\end{figure}

\textsc{FlashBoB} accelerates the Hessian-estimation event by
$2.45\times$, isolating the kernel-level gain on exact attention BoB
itself, and accelerates the full training loop by $2.17\times$ end to
end, combining the BoB-kernel gain with the backend-integration gain
from retaining optimized attention on standard steps. Peak memory drops
by $2.22\times$, from roughly 30\,GB to 14\,GB per GPU. The two
Sophia-H loss curves nearly overlap throughout training
(Figure~\ref{fig:fineweb_parity}), and their final losses are also
matched: $3.6674$ for Sophia-H with the PyTorch math backend and
$3.6662$ for Sophia-H with \textsc{FlashBoB}. AdamW reaches a final loss
of $3.6759$ in the same 1B-token budget.

Figure~\ref{fig:fineweb_parity} separates optimizer behavior from
systems cost. The two Sophia-H curves overlap,
showing that replacing the PyTorch BoB path with
\textsc{FlashBoB} preserves the reference Sophia-H trajectory. In
elapsed-time space, however, the math-backend Sophia-H run reaches each
loss level much later because exact second-order attention dominates the
training loop. \textsc{FlashBoB} shifts the Sophia-H curve substantially
left in wall-clock time, bringing exact second-order training much
closer to the AdamW time-to-loss curve while preserving the final
Sophia-H loss. Thus, the main effect of \textsc{FlashBoB} is not to
change the optimization path, but to make that path practical: the
Hessian-estimation event becomes fast enough to fit inside a normal
training budget, and the full second-order training loop moves much
closer to the wall-clock regime of a first-order baseline.

A hybrid baseline that dispatches optimized attention on the
nine-out-of-ten ordinary steps and switches to the math backend only for
Hessian-estimation steps would close part of the end-to-end gap, but
this is not exposed by the standard PyTorch double-backward attention
path. We therefore report both quantities: the $2.45\times$
Hessian-event speedup isolates the attention-BoB kernel improvement,
while the $2.17\times$ full-loop speedup measures the practical training
effect of integrating exact BoB into an optimized attention pipeline.

\vspace*{-0.1in}
\section{Discussion}
\label{sec:discussion}
\vspace*{-0.1in}

Long-context learning systems increasingly use gradients as model
operations: test-time training, gradient-written memory, meta-learning, and second-order optimization all differentiate through updates, and attention inside those updates makes exact BoB the bottleneck. \textsc{FlashBoB} makes this primitive FlashAttention-like: exact, tile-streaming, and free of \(N\times N\) HBM intermediates. This opens exact end-to-end test-time training~\citep{tandon2025endtoend} to the contexts reached by our isolated attention benchmark (\(N{=}262{,}144\) on a single A100), removes the attention-BoB kernel bottleneck in GradMem-style gradient-written memory~\citep{kuratov2026gradmem}, and preserves Sophia-H trajectories while roughly halving wall time and memory on FineWeb. Whether these methods retain or improve their algorithmic advantages once BoB is no longer dominant is left to future work.

\textbf{Limitations.} \textsc{FlashBoB} is implemented in generic
Triton~\citep{tillet2019triton} and does not yet exploit
lower-level tile abstractions or architecture-specific kernel
specialization~\citep{spector2025thunderkittens}: warp specialization
and asynchronous tensor-core pipelines on Hopper, FP8 and softmax
pipeline optimizations of FlashAttention-3, or the asymmetric scaling of
FlashAttention-4 on Blackwell. The release covers standard multi-head attention; grouped-query and
multi-query variants, flexible attention APIs~\citep{dong2025flexattention}, and paged KV
layouts~\citep{kwon2023pagedattention,ye2025flashinfer} are natural
extensions that the affine-fusion structure does not obstruct. We view the
present work as the algorithmic and scheduling foundation for these
specializations.

\bibliographystyle{plainnat}
\bibliography{neurips_2026}

\appendix

\section{Notation}
\label{app:notation}

We collect the main symbols used throughout the paper in
Table~\ref{tab:notation}. All matrices are shown for a single attention
head and a single batch element; the batch and head dimensions are
independent outer dimensions and are omitted for readability. We use
$t_i$ to denote the $i$-th row of a matrix $T$, and all row-wise scalar
quantities such as $L$, $D$, $\alpha$, and $E$ are indexed by the
query row. The notation $\td{\cdot}$ denotes the backward-over-backward
output associated with the corresponding first-backward quantity.

\begin{table}[h]
\centering\small
\caption{Notation used throughout the paper. Tile-level intermediates
($F, C, \td{P}, \td{S}$) are introduced in Section~\ref{sec:flashbob}.}
\label{tab:notation}
\begin{tabular}{@{}lll@{}}
\toprule
Symbol & Shape & Meaning \\
\midrule
$Q, K, V, dO$ & $\R^{N \times d}$ & Forward inputs, first-backward upstream \\
$O$ & $\R^{N \times d}$ & Forward output, $O = PV$ \\
$L$ & $\R^{N}$ & Saved row log-normalizer, $L_i = \log \sum_{j \in \mathcal{V}_i} \exp(S_{ij})$ \\
$dQ, dK, dV$ & $\R^{N \times d}$ & First-backward outputs \\
$D$ & $\R^{N}$ & Saved first-backward row summary, $D_i = do_i^\Tr o_i$ \\
$U_Q, U_K, U_V$ & $\R^{N \times d}$ & BoB input matrices \\
$\td{Q}, \td{K}, \td{V}, \td{dO}$ & $\R^{N \times d}$ & BoB outputs \\
$\alpha, E$ & $\R^{N}$ & BoB row scalars (Section~\ref{sec:flashbob}) \\
$M$ & scalar & On-chip SRAM size in \# of elements \\
$B_r, B_c$, $T_r, T_c$ & scalar & Row/column block sizes and counts \\
\bottomrule
\end{tabular}
\end{table}

\section{Software environment}
\label{app:env}

Table~\ref{tab:software-stack} reports the software stack used for the
experiments. We report the CUDA version from the PyTorch/kernel stack
rather than the maximum CUDA compatibility version reported by
\texttt{nvidia-smi}. We separately report the \texttt{nvcc} compiler
release and the \texttt{nvidia-smi} CUDA compatibility version because
these quantities can differ from \texttt{torch.version.cuda}. The A6000
experiments used Python 3.10.20, while the A100 and B200 experiments used
Python 3.10.13.

\textbf{Measurement protocol.} Kernel timings use
\texttt{triton.testing.do\_bench} with \texttt{warmup=25} and
\texttt{rep=100}. In Triton, these parameters are time budgets in
milliseconds rather than fixed iteration counts: the benchmark first
estimates the runtime of the measured function and then chooses the
number of warmup and measured repetitions that fit within the requested
warmup and repetition windows. Reported values are medians over the
measured repetitions.Whenever a timing table reports \(x\pm\delta\),
we use the percentile spread:
\[
x \pm \delta,
\qquad
x=p50,
\qquad
\delta=\frac{1}{2}(p80-p20).
\]
This spread is not a standard deviation; it is half of the 20th-to-80th
percentile range returned by \texttt{do\_bench}. Tables without an explicit
\(\pm\delta\) report median steady-state time. Full-model and training results
are single-run throughput measurements; rerunning every full-model and training
configuration with percentile or run-to-run summaries was outside our compute budget.

\begin{table}[t]
\centering
\small
\setlength{\tabcolsep}{3.5pt}
\begin{tabular}{@{}lccccccc@{}}
\toprule
Hardware
& PyTorch
& Triton
& \texttt{flash-attn}
& \shortstack{PyTorch\\CUDA}
& \texttt{nvcc}
& Driver
& \shortstack{\texttt{nvidia-smi}\\CUDA} \\
\midrule
A6000
& 2.9.1+cu128
& 3.5.1
& 2.8.4
& 12.8
& 12.8
& 580.76.05
& 13.0 \\
A100
& 2.11.0.dev20260123+cu126
& 3.6.0
& 2.8.3
& 12.6
& 12.6
& 560.35.05
& 12.6 \\
B200
& 2.11.0+cu130
& 3.6.0
& N/A
& 13.0
& 13.1
& 595.58.03
& 13.2 \\
\bottomrule
\end{tabular}
\caption{Software stack used for experiments. PyTorch CUDA denotes
\texttt{torch.version.cuda}; \texttt{nvcc} denotes the CUDA compiler
release used for custom kernels; and \texttt{nvidia-smi} CUDA denotes the maximum CUDA compatibility version reported by the installed NVIDIA driver. N/A indicates that the package was not used for that hardware configuration.}
\label{tab:software-stack}
\end{table}

Unless otherwise stated, A6000 experiments used Python 3.10.20, PyTorch 2.9.1+cu128, Triton 3.5.1, \texttt{flash-attn} 2.8.4, and CUDA 12.8 with NVIDIA driver 580.76.05. FlashBack experiments additionally used JAX 0.6.2 and \texttt{jaxlib} 0.6.2. A100 experiments used Python 3.10.13, PyTorch 2.11.0.dev20260123+cu126, Triton 3.6.0, \texttt{flash-attn} 2.8.3, and CUDA 12.6 with NVIDIA driver 560.35.05. B200 experiments used Python 3.10.13, PyTorch 2.11.0+cu130, Triton 3.6.0, and CUDA 13.0 with NVIDIA driver 595.58.03. On the B200 system, \texttt{nvcc} reports CUDA compiler release 13.1, while \texttt{nvidia-smi} reports CUDA compatibility up to 13.2. We did not install the \texttt{flash-attn} package on the B200 system because the B200 experiments cover only causal attention and use PyTorch's \texttt{nn.scaled\_dot\_product\_attention} with the FlashAttention backend for the forward and first backward; the \texttt{flash-attn} package is used only for the sliding-window experiments in Appendix~\ref{app:swa}, which were run on A100.

The GradMem reference implementations, \textsc{hvp-m} and
\textsc{hvp-s}, were evaluated at commit \texttt{3f40f11}. FlashBack was evaluated at commit \texttt{62a0e0e} with its own autotuned
configurations. The Sophia-H baseline was evaluated at commit
\texttt{738c32b}, the last commit in which Sophia-H was present. Random seeds were fixed per benchmark. Between configurations, we called \texttt{torch.cuda.empty\_cache()} and reset peak memory statistics before each trial. Measurements used the same PyTorch CUDA allocator configuration across compared methods unless otherwise noted.

\subsection{Software and dataset licenses}
\label{app:licenses}

We thank the authors and maintainers of the following software and
data we used or compared against. Licenses below are reported as of the
commit hashes used for our experiments
(Appendix~\ref{app:env}); please consult each project's repository for
current terms.
\begin{itemize}[leftmargin=*, itemsep=2pt, topsep=2pt]
\item \textbf{PyTorch} (BSD-3-Clause), \texttt{https://github.com/pytorch/pytorch}.
\item \textbf{Triton} (MIT), \texttt{https://github.com/triton-lang/triton}.
\item \textbf{flash-attn} (BSD-3-Clause), \texttt{https://github.com/Dao-AILab/flash-attention}.
\item \textbf{JAX} and \textbf{jaxlib} (Apache-2.0), \texttt{https://github.com/jax-ml/jax}.
\item \textbf{FineWeb-derived training data}. The training runs use
\texttt{VisionTheta/fineweb-1B}, a FineWeb-derived 1B-token dataset stream.
The underlying FineWeb dataset~\citep{penedo2024fineweb} is distributed under
the Open Data Commons Attribution License v1.0 (ODC-By) at
\texttt{https://huggingface.co/datasets/HuggingFaceFW/fineweb}; use is also
subject to CommonCrawl's Terms of Use.
\item \textbf{GPT-2} model architecture (MIT), \texttt{https://github.com/openai/gpt-2}.
\item \textbf{Sophia} reference implementation~\citep{liu2024sophia}
(\texttt{https://github.com/Liuhong99/Sophia}). The repository did not
display a top-level LICENSE file at commit \texttt{738c32b}; we used
the code only as a research baseline. The repository documents that
its training code is derived from nanoGPT (MIT) and levanter (Apache-2.0).
\item \textbf{GradMem} reference HVP kernels~\citep{kuratov2026gradmem},
evaluated at commit \texttt{3f40f11}. Used solely as a research
baseline under the terms of the upstream repository.
\item \textbf{FlashBack}~\citep{engstrom2024flashback}
(\texttt{https://github.com/lengstrom/flashback}), evaluated at commit
\texttt{62a0e0e}. The repository did not display a top-level LICENSE
file at this commit; we used the code only as a research baseline.
\item \textbf{NVIDIA CUDA Toolkit and drivers}: used under the NVIDIA
CUDA Toolkit End User License Agreement.
\end{itemize}

\section{Full BoB derivation}
\label{app:derivation}

This appendix derives every BoB identity in \S\ref{sec:affine}
from reverse-mode automatic differentiation applied to the
first-backward graph of softmax attention. The derivation is
deliberately step-by-step: each node is processed in turn, every local
Jacobian is computed from scratch from its defining formula, and every
chain-rule sum is written explicitly before being simplified. Readers
familiar with the chain rule and the multi-child rule have all the
prerequisites.

\subsection*{Notation summary}

Throughout this appendix we use the following two conventions in
parallel. For any node $X$ in the first-backward graph, $dX = \partial
L / \partial X$ is its first-order gradient (already computed by the
ordinary attention backward), and $\td{X} = \partial \Phi / \partial
X$ is its BoB gradient (what we are computing here). The first-order
gradients $dQ, dK, dV, dP, dS, dO$ together with the saved row scalar
$D$ are inputs to BoB; the BoB gradients
$\td{Q}, \td{K}, \td{V}, \td{dO}, \td{P}, \td{S}, \td{dP},
\td{dS}$ are intermediates or outputs.

The first-backward outputs $dQ, dK, dV$ are the entry points of the
BoB walk; reverse-mode begins with their BoB gradients
\begin{equation}
\label{eq:bob-seeds}
\td{dQ} = U_Q, \qquad \td{dK} = U_K, \qquad \td{dV} = U_V,
\end{equation}
which follow directly from differentiating the BoB scalar
$\Phi = \langle dQ, U_Q\rangle + \langle dK, U_K\rangle + \langle dV, U_V\rangle$
of \eqref{eq:bob-obj}.

For matrices $A, B$ of the same shape,
$\langle A, B\rangle = \sum_{i,j} A_{ij} B_{ij}$ is the Frobenius inner
product. Lowercase letters denote row slices: $q_i, k_j, v_j, do_i,
u_{Q,i}, u_{K,j}, u_{V,j} \in \R^d$. Whenever a row vector
$D, \alpha, E \in \R^N$ appears inside an $N \times N$ expression, it
is broadcast across columns.

\subsection*{Reverse-mode chain rule}

The only fact used in this appendix is the chain rule for a node $X$
with children $Y^{(1)}, \ldots, Y^{(m)}$ in the first-backward graph
(meaning that each $Y^{(r)}$ is computed from $X$ during the
first-backward pass): the BoB gradient at $X$ is the sum of
contributions through each child, with each contribution given by the
local Jacobian acting on the BoB gradient at the child. In components,
\begin{equation}
\label{eq:multichild}
\td{X_{ij}} \;=\; \sum_{r=1}^m \sum_{u, v} \td{Y^{(r)}_{uv}}\, \frac{\partial Y^{(r)}_{uv}}{\partial X_{ij}}.
\end{equation}
Every equation that follows is one application of \eqref{eq:multichild}.

\subsection*{Step 1: \texorpdfstring{$F = \td{dS}$}{F = tilde dS}}

The node $dS$ has two children in the first-backward graph,
$dQ = \tau\, dS\, K$ and $dK = \tau\, dS^\Tr Q$. Applying
\eqref{eq:multichild},
\begin{equation}
\td{dS_{ij}}
\;=\; \sum_a \td{dQ_{ia}}\, \frac{\partial dQ_{ia}}{\partial dS_{ij}}
\;+\; \sum_a \td{dK_{ja}}\, \frac{\partial dK_{ja}}{\partial dS_{ij}}.
\end{equation}
The local Jacobians come from the formulas $dQ_{ia} = \tau \sum_j dS_{ij} K_{ja}$
and $dK_{ja} = \tau \sum_i dS_{ij} Q_{ia}$, giving
\begin{equation}
\frac{\partial dQ_{ia}}{\partial dS_{ij}} = \tau\, K_{ja},
\qquad
\frac{\partial dK_{ja}}{\partial dS_{ij}} = \tau\, Q_{ia}.
\end{equation}
Substituting and using $\td{dQ_{ia}} = U_{Q,ia}$ and
$\td{dK_{ja}} = U_{K,ja}$ from \eqref{eq:bob-seeds},
\begin{equation}
\td{dS_{ij}}
\;=\; \tau \sum_a U_{Q,ia}\, K_{ja} \;+\; \tau \sum_a U_{K,ja}\, Q_{ia}
\;=\; \tau\, u_{Q,i}^\Tr k_j \;+\; \tau\, q_i^\Tr u_{K,j}.
\end{equation}
We name this quantity $F$. In matrix form,
\begin{equation}
\label{eq:F-app}
F \;:=\; \td{dS} \;=\; \tau\bigl(U_Q\, K^\Tr + Q\, U_K^\Tr\bigr).
\end{equation}

\subsection*{Step 2: \texorpdfstring{$\td{dP}$ and the row scalars $\alpha$}{tilde dP and the row scalars alpha}}

The node $dP$ has children in the first-backward graph through the
formula $dS_{ik} = P_{ik}(dP_{ik} - D_i)$ together with
$D_i = \sum_r P_{ir}\, dP_{ir}$. Holding $P$ fixed (its BoB gradient is
handled separately in Steps 4 and 5), $dP_{ij}$ enters $dS_{ik}$
through two routes: a direct route ($dP_{ij}$ appears as $dP_{ik}$ when
$k = j$) and an indirect route through the row scalar $D_i$ (every $k$
in row $i$ uses $D_i$, which depends on $dP_{ij}$). Both contribute to
the chain rule, so we compute the local Jacobian carefully:
\begin{equation}
\frac{\partial dS_{ik}}{\partial dP_{ij}}
\;=\; P_{ik}\!\left(\frac{\partial dP_{ik}}{\partial dP_{ij}} - \frac{\partial D_i}{\partial dP_{ij}}\right).
\end{equation}
The two pieces are
\begin{equation}
\frac{\partial dP_{ik}}{\partial dP_{ij}} = \delta_{kj},
\qquad
\frac{\partial D_i}{\partial dP_{ij}} = \frac{\partial}{\partial dP_{ij}} \sum_r P_{ir}\, dP_{ir} = P_{ij},
\end{equation}
so
\begin{equation}
\label{eq:dS-by-dP}
\frac{\partial dS_{ik}}{\partial dP_{ij}} \;=\; P_{ik}(\delta_{kj} - P_{ij}).
\end{equation}
Applying \eqref{eq:multichild} with the single child $dS$ and
$\td{dS_{ik}} = F_{ik}$,
\begin{equation}
\td{dP_{ij}}
\;=\; \sum_k F_{ik}\, P_{ik}(\delta_{kj} - P_{ij})
\;=\; \underbrace{F_{ij}\, P_{ij}}_{k = j \text{ term}}
\;-\; \underbrace{P_{ij} \sum_k P_{ik} F_{ik}}_{\text{row-sum term}}.
\end{equation}
The row-sum factor $\sum_k P_{ik} F_{ik}$ is a row reduction; we name
it $\alpha_i$:
\begin{equation}
\label{eq:alpha-app}
\alpha_i \;:=\; \sum_{k \in \mathcal{V}_i} P_{ik}\, F_{ik}.
\end{equation}
Substituting,
\begin{equation}
\label{eq:dPtilde-app}
\td{dP_{ij}} \;=\; P_{ij}(F_{ij} - \alpha_i).
\end{equation}

\subsection*{Step 3: \texorpdfstring{$\td{V}$ and the first contribution to $\td{dO}$}{tilde V and the first contribution to tilde dO}}

The node $V$ has only one child in the first-backward graph,
$dP = dO\, V^\Tr$; that is, $dP_{ij} = \sum_a dO_{ia}\, V_{ja}$. Applying
\eqref{eq:multichild} for the BoB gradient at $V_{ja}$,
\begin{equation}
\td{V_{ja}}
\;=\; \sum_i \td{dP_{ij}}\, \frac{\partial dP_{ij}}{\partial V_{ja}}
\;=\; \sum_i \td{dP_{ij}}\, dO_{ia}
\;=\; \sum_i P_{ij}(F_{ij} - \alpha_i)\, dO_{ia}.
\end{equation}
In matrix form, with $h := F - \alpha\, \mathbf{1}^\Tr$ broadcast over
columns,
\begin{equation}
\label{eq:Vtilde-app}
\td{V} \;=\; (P \odot h)^\Tr\, dO.
\end{equation}

The same node $dP = dO\, V^\Tr$ also sends one of two contributions
into $\td{dO}$. For the BoB gradient at $dO_{ia}$,
\begin{equation}
\frac{\partial dP_{ij}}{\partial dO_{ia}} = V_{ja},
\qquad\text{so the contribution from $dP$ is}\quad
\sum_j \td{dP_{ij}}\, V_{ja} = \sum_j P_{ij}(F_{ij} - \alpha_i)\, V_{ja}.
\end{equation}
We will combine this with the second contribution to $\td{dO}$ in
Step 4.

\subsection*{Step 4: \texorpdfstring{$C$, the contribution to $\td{P}$ from the $dV$ child, and the rest of $\td{dO}$}{C, the contribution to tilde P from the dV child, and the rest of tilde dO}}

The node $P$ has two children in the first-backward graph, $dV$ and
$dS$. We process the $dV$ child here (it is algebraically simple) and
defer the $dS$ child to Step 5 (where the second row scalar will
appear).

The first-backward formula is $dV_{ja} = \sum_i P_{ij}\, dO_{ia}$. The
local Jacobian is $\partial dV_{ja}/\partial P_{ij} = dO_{ia}$, and the
BoB gradient at $dV$ is $\td{dV_{ja}} = U_{V,ja}$ from
\eqref{eq:bob-seeds}. The $dV$-child contribution to $\td{P_{ij}}$ is
\begin{equation}
\sum_a U_{V,ja}\, dO_{ia} \;=\; dO_i^\Tr u_{V,j}.
\end{equation}
We name this quantity $C_{ij}$. In matrix form,
\begin{equation}
\label{eq:C-app}
C \;:=\; dO\, U_V^\Tr.
\end{equation}

The same node $dV = P^\Tr dO$ also sends the second contribution into
$\td{dO}$. For the BoB gradient at $dO_{ia}$,
\begin{equation}
\frac{\partial dV_{ja}}{\partial dO_{ia}} = P_{ij},
\qquad\text{so the contribution from $dV$ is}\quad
\sum_j P_{ij}\, U_{V,ja}.
\end{equation}
Combining with the contribution from Step 3,
\begin{equation}
\label{eq:dOtilde-app}
\td{dO_{ia}}
\;=\; \sum_j P_{ij}\, U_{V,ja} \;+\; \sum_j P_{ij}(F_{ij} - \alpha_i)\, V_{ja},
\quad\text{or}\quad
\td{dO} = P\, U_V + (P \odot h)\, V.
\end{equation}

\subsection*{Step 5: contribution to \texorpdfstring{$\td{P}$ from the $dS$ child}{tilde P from the dS child}}

This is the second contribution to $\td{P}$. The variable $P_{ij}$
reaches $dS_{ik}$ along two routes through the formula
$dS_{ik} = P_{ik}(dP_{ik} - D_i)$: the outer factor $P_{ik}$ (when
$k = j$) and the row scalar $D_i$ (which depends on every $P_{ir}$).
Computing the local Jacobian by the product rule, with $dP$ now held
fixed,
\begin{align}
\frac{\partial dS_{ik}}{\partial P_{ij}}
\;&=\; \frac{\partial P_{ik}}{\partial P_{ij}}(dP_{ik} - D_i)
       \;+\; P_{ik}\, \frac{\partial(dP_{ik} - D_i)}{\partial P_{ij}}\\
\;&=\; \delta_{kj}(dP_{ik} - D_i) \;-\; P_{ik}\, dP_{ij},
\end{align}
where in the second line we used $\partial P_{ik}/\partial P_{ij} = \delta_{kj}$
and $\partial D_i/\partial P_{ij} = dP_{ij}$. Multiplying by
$\td{dS_{ik}} = F_{ik}$ and summing over $k$,
\begin{align}
\sum_k F_{ik}\, \frac{\partial dS_{ik}}{\partial P_{ij}}
\;&=\; \underbrace{\sum_k F_{ik}\, \delta_{kj}(dP_{ij} - D_i)}_{\text{outer-factor route}}
       \;-\; \underbrace{\sum_k F_{ik}\, P_{ik}\, dP_{ij}}_{\text{row-scalar route through }D_i}\\
\;&=\; F_{ij}(dP_{ij} - D_i) \;-\; \alpha_i\, dP_{ij},
\end{align}
where the last step used the $\alpha_i$ definition from
\eqref{eq:alpha-app}. Adding the $C_{ij}$ contribution from
\eqref{eq:C-app} closes the BoB gradient at $P$:
\begin{equation}
\label{eq:Ptilde-app}
\td{P_{ij}}
\;=\; \underbrace{C_{ij} + F_{ij}(dP_{ij} - D_i)}_{\td{P_{ij}}^{\circ}}
\;-\; \alpha_i\, dP_{ij}.
\end{equation}
Equation~\eqref{eq:Ptilde-app} also defines the $\alpha$-free part
$\td{P_{ij}}^{\circ}$ of $\td{P_{ij}}$.

\subsection*{Step 6: \texorpdfstring{$\td{S}$ and the row scalars $E$}{tilde $S$ and the row scalars $E$}}

The node $S$ has one child, $P = \softmax(S)$. We first derive the
softmax Jacobian from scratch by the quotient rule, then apply
\eqref{eq:multichild}.

For row $i$, $P_{ik} = \exp(S_{ik}) / Z_i$ with $Z_i = \sum_r \exp(S_{ir})$.
By the quotient rule,
\begin{align}
\frac{\partial P_{ik}}{\partial S_{ij}}
\;&=\; \frac{[\partial \exp(S_{ik}) / \partial S_{ij}]\, Z_i \;-\; \exp(S_{ik})\, [\partial Z_i / \partial S_{ij}]}{Z_i^2}.
\end{align}
The two numerator pieces are
$\partial \exp(S_{ik})/\partial S_{ij} = \delta_{kj} \exp(S_{ik})$ and
$\partial Z_i / \partial S_{ij} = \exp(S_{ij})$, so
\begin{equation}
\label{eq:softmax-jac}
\frac{\partial P_{ik}}{\partial S_{ij}}
\;=\; \frac{\delta_{kj} \exp(S_{ik})}{Z_i} \;-\; \frac{\exp(S_{ik}) \exp(S_{ij})}{Z_i^2}
\;=\; P_{ik}(\delta_{kj} - P_{ij}).
\end{equation}
Applying \eqref{eq:multichild} at $S_{ij}$ with the single child $P$,
\begin{equation}
\td{S_{ij}}
\;=\; \sum_k \td{P_{ik}}\, \frac{\partial P_{ik}}{\partial S_{ij}}
\;=\; \sum_k \td{P_{ik}}\, P_{ik}(\delta_{kj} - P_{ij})
\;=\; P_{ij}\, \td{P_{ij}} \;-\; P_{ij} \sum_k P_{ik}\, \td{P_{ik}}.
\end{equation}
The row-sum factor is the second row scalar:
\begin{equation}
\label{eq:E-app}
E_i \;:=\; \sum_{k \in \mathcal{V}_i} P_{ik}\, \td{P_{ik}}.
\end{equation}
Substituting,
\begin{equation}
\label{eq:Stilde-app}
\td{S_{ij}} \;=\; P_{ij}(\td{P_{ij}} - E_i),
\end{equation}
the same shape as the first-backward formula
$dS_{ij} = P_{ij}(dP_{ij} - D_i)$.

\subsection*{Step 7: \texorpdfstring{$\td{Q}$ and $\td{K}$}{tilde Q and tilde K}}

The node $Q$ has two children in the first-backward graph: $S$
(through the indirect path $Q \to S \to P \to \cdots$) and $dK$
(through the direct path, since $dK = \tau\, dS^\Tr Q$ uses $Q$
directly). The contribution from each child is computed below.

\paragraph{Indirect contribution through $S$.} The first-backward
formula is $S_{ij} = \tau\, q_i^\Tr k_j + M_{ij}$, so the local
Jacobian is $\partial S_{ij}/\partial Q_{ia} = \tau\, K_{ja}$. The
contribution to $\td{Q_{ia}}$ from $S$ is
\begin{equation}
\sum_j \td{S_{ij}}\, \tau\, K_{ja} \;=\; \tau \sum_j \td{S_{ij}}\, K_{ja},
\quad\text{or in matrix form,}\quad
\tau\, \td{S}\, K.
\end{equation}

\paragraph{Direct contribution through $dK$.} The first-backward
formula is $dK_{ja} = \tau \sum_i dS_{ij}\, Q_{ia}$, so the local
Jacobian is $\partial dK_{ja}/\partial Q_{ib} = \tau\, dS_{ij}\, \delta_{ab}$.
The contribution to $\td{Q_{ib}}$ from $dK$ is
\begin{equation}
\sum_{j, a} \td{dK_{ja}}\, \tau\, dS_{ij}\, \delta_{ab}
\;=\; \tau \sum_j dS_{ij}\, U_{K,jb},
\quad\text{or in matrix form,}\quad
\tau\, dS\, U_K.
\end{equation}

Adding the two contributions,
\begin{equation}
\label{eq:Qtilde-app}
\td{Q} \;=\; \tau\bigl(\td{S}\, K + dS\, U_K\bigr).
\end{equation}
The same calculation with $K$ in place of $Q$ gives
\begin{equation}
\label{eq:Ktilde-app}
\td{K} \;=\; \tau\bigl(\td{S}^\Tr Q + dS^\Tr U_Q\bigr).
\end{equation}

\subsection*{Final assembled identities}

Collecting Equations~\eqref{eq:F-app}, \eqref{eq:alpha-app},
\eqref{eq:C-app}, \eqref{eq:Ptilde-app}, \eqref{eq:E-app},
\eqref{eq:Stilde-app}, \eqref{eq:Vtilde-app}, \eqref{eq:dOtilde-app},
\eqref{eq:Qtilde-app}, and \eqref{eq:Ktilde-app},
\begin{align*}
F &= \tau(U_Q\, K^\Tr + Q\, U_K^\Tr), &
\alpha_i &= \textstyle\sum_j P_{ij}\, F_{ij}, &
C &= dO\, U_V^\Tr, \\
\td{P_{ij}}^{\circ} &= C_{ij} + F_{ij}(dP_{ij} - D_i), &
\td{P_{ij}} &= \td{P_{ij}}^{\circ} - \alpha_i\, dP_{ij}, &
E_i &= \textstyle\sum_j P_{ij}\, \td{P_{ij}}, \\
\td{S} &= P \odot (\td{P} - E\, \mathbf{1}^\Tr), &
\td{Q} &= \tau(\td{S}\, K + dS\, U_K), &
\td{K} &= \tau(\td{S}^\Tr Q + dS^\Tr U_Q), \\
\td{V} &= (P \odot h)^\Tr dO, &
\td{dO} &= P\, U_V + (P \odot h)\, V, &
h &= F - \alpha\, \mathbf{1}^\Tr.
\end{align*}
These are the closed-form BoB identities used in Section~\ref{sec:affine}.

\section{Proofs of Proposition~\ref{prop:affine} and Theorem~\ref{thm:correct}}
\label{app:affine-proof}
\label{app:correctness-proof}
 
\begin{proof}[Proof of Proposition~\ref{prop:affine}]
For~\eqref{eq:fusion}, substitute
$\td{P_{ij}} = \td{P_{ij}}^{\circ} - \alpha_i\, dP_{ij}$ into
$E_i = \sum_{j \in \mathcal{V}_i} P_{ij}\, \td{P_{ij}}$:
\[
  E_i = \sum_{j \in \mathcal{V}_i} P_{ij}\, \td{P_{ij}}^{\circ}
        - \alpha_i \sum_{j \in \mathcal{V}_i} P_{ij}\, dP_{ij}
      = E^{\circ}_i - \alpha_i D_i.
\]
For~\eqref{eq:fusion}, expand $h_{ij} = F_{ij} - \alpha_i$ in
$\td{dO}_i = \sum_{j \in \mathcal{V}_i} P_{ij}\, u_{V,j}
            + \sum_{j \in \mathcal{V}_i} P_{ij}\, h_{ij}\, v_j$:
\[
  \td{dO}_i
  = \underbrace{\sum_{j \in \mathcal{V}_i} P_{ij}\, u_{V,j}
                + \sum_{j \in \mathcal{V}_i} P_{ij}\, F_{ij}\, v_j}_{\td{dO}^{\circ}_i}
  - \alpha_i \underbrace{\sum_{j \in \mathcal{V}_i} P_{ij}\, v_j}_{o_i}.
\]
For~\eqref{eq:fusion}, substitute
$\td{P_{ij}} = \td{P_{ij}}^{\circ} - \alpha_i dP_{ij}$ and
$dS_{ij} = P_{ij}(dP_{ij} - D_i)$ into
$\td{Q}_i = \tau \sum_{j \in \mathcal{V}_i}
  (P_{ij}(\td{P_{ij}} - E_i) k_j + dS_{ij} u_{K,j})$:
\[
  \td{Q}_i = \tau\Bigl(
    \underbrace{\sum_{j \in \mathcal{V}_i} P_{ij}\, \td{P_{ij}}^{\circ} k_j
                + \sum_{j \in \mathcal{V}_i} P_{ij}(dP_{ij} - D_i) u_{K,j}}_{\Omega_i}
    - \alpha_i \underbrace{\sum_{j \in \mathcal{V}_i} P_{ij}\, dP_{ij}\, k_j}_{R_i}
    - E_i \underbrace{\sum_{j \in \mathcal{V}_i} P_{ij}\, k_j}_{B_i}\Bigr). \qedhere
\]
\end{proof}
 
\begin{proof}[Proof of Theorem~\ref{thm:correct}]
By Proposition~\ref{prop:affine}, the row-side outputs decompose as
$E_i = E^{\circ}_i - \alpha_i D_i$,
$\td{dO}_i = \td{dO}^{\circ}_i - \alpha_i o_i$, and
$\td{Q}_i = \tau(\Omega_i - \alpha_i R_i - E_i B_i)$, where each of
$\alpha, E^{\circ}, \td{dO}^{\circ}, \Omega, B, R$ is a column reduction
not depending on $\alpha_i$ (stated in
Proposition~\ref{prop:affine}). The inner loop of Pass~1 is the
blockwise realization of these column reductions, visiting each
$(i, j)$ tile exactly once per row. $D_i$ is loaded from the saved
row state $(L, D)$. Pass~2 loads $(\alpha_i, E_i)$ from HBM and
every tilewise intermediate is then the direct blockwise
realization of its defining equation from
Section~\ref{sec:bob-formulation}. The column accumulations
$\td{V}_j = \sum_i (P_{ij} \odot h_{ij})^\Tr dO_i$ and
$\td{K}_j = \tau \sum_i (\td{S_{ij}}^\Tr Q_i + dS_{ij}^\Tr U_{Q,i})$
are the exact tile-sum forms
of~\eqref{eq:QKtilde} and~\eqref{eq:VOtilde}.
 
For the FLOPs and memory bounds: Pass~1 forms the four
$B_r^{(1)} \times B_c^{(1)}$ tiles
$dP_{ij}, F_{ij}, C_{ij}, \td{P_{ij}}^{\circ}$ via matrix products of $N \times d$ operands and accumulates six column reductions of total output dimension $O(B_r^{(1)} d)$. Summed over all
$\lceil N / B_r^{(1)} \rceil \cdot \lceil N / B_c^{(1)} \rceil$ tile
visits, the matrix-multiply work is
$\Theta(N^2 d)$ and the elementwise softmax work is $O(N^2)$.
Pass~2 has the same per-tile work in column-major orientation, also
totalling $\Theta(N^2 d)$. Auxiliary HBM beyond inputs, outputs, the
saved row state $(L, D)$, and the forward output $O$ consists only of
the two $N$-vectors $(\alpha, E)$, hence $O(N)$. The on-chip footprint
per tile visit is bounded by the tile-budget analysis of Section~\ref{sec:algo}.
\end{proof}

\section{Large-cache lower bound}
\label{app:lb-proof}

This section proves Theorem~\ref{thm:bob-lb}. We use the standard
two-level I/O model from Section~\ref{sec:prelim}: SRAM has capacity
\(M\) scalar elements, HBM is unbounded, and I/O counts scalar transfers
between them. The lower bound is stated for the standard
FlashAttention-style score-recomputation model for exact tiled
attention. In this model, the full score matrix
\(S=\tau QK^\Tr\) and probability matrix \(P=\softmax(S)\) are not
given as HBM inputs. Score blocks are recomputed from the bilinear forms
\[
S_{ij}
=
\tau q_i^\Tr k_j
=
\tau\sum_{\ell=1}^{d}Q_{i\ell}K_{j\ell},
\]
probability blocks are obtained by exact row-wise softmax normalization,
and the algorithm may receive the usual \(O(N)\) saved row statistics,
such as the row log-normalizer \(L_i\). This is the execution model used
by exact FlashAttention-style kernels and by the attention I/O lower
bounds invoked below.

The proof has one inherited component and one new component. The
inherited component is the large-cache I/O lower bound for exact forward
softmax attention under the standard score-recomputation model. In particular,
the large-cache lower bound of \citep[Lem.~3.4]{saha2024ioattention} shows that, for
\(M=\Omega(d^2)\), exact attention evaluated through the standard
matrix-product score computation has I/O complexity
\(\Omega(N^2d^2/M)\). This is where the \(N^2d^2/M\) term enters the
argument. The new component in this paper is the embedding: we show that
a special BoB instance outputs exactly
\(\softmax(\tau QK^\Tr)W\). Therefore, an asymptotically faster exact
BoB algorithm in the same score-recomputation model would imply an
asymptotically faster exact forward-attention algorithm, contradicting
the inherited lower bound.

Intuitively, the factor \(N^2d^2/M\) arises as follows. The attention
score computation contains \(\Omega(N^2d)\) scalar multiplication
vertices of the form \(Q_{i\ell}K_{j\ell}\). In the large-cache regime,
\citep[Lem.~3.3]{saha2024ioattention} bound the number of such level-1
score vertices in each \(M\)-partition part by \(O(M^2/d)\). Hence any
\(M\)-partition needs \(\Omega(N^2d^2/M^2)\) parts, and the red-blue
pebble partition lemma converts this into
\(\Omega(N^2d^2/M)\) HBM transfers.

\begin{lemma}[Forward-attention lower bound with row statistics]
\label{lem:fwd-lb-saved-stats}
Under the standard FlashAttention-style score-recomputation model,
exact forward softmax attention
\[
Y=\softmax(\tau QK^\Tr)W
\]
with \(Q,K,W\in\R^{N\times d}\), \(M=\Omega(d^2)\), \(N\ge d\),
\(M\lesssim Nd\), and \(\Theta(N^2)\) valid query-key interactions
requires \(\Omega(N^2d^2/M)\) HBM transfers.
\end{lemma}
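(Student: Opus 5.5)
The lemma is essentially Saha and Ye's large-cache bound \citep[Lem.~3.4]{saha2024ioattention}. What needs checking is that two features of our model do not weaken it: the algorithm may be handed $O(N)$ saved row statistics such as $L_i$, and it only has to handle $\Theta(N^2)$ valid interactions rather than a dense pattern. The plan is to go through the red-blue pebble argument sketched above and confirm that each ingredient still applies.

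First, the computation DAG. In the score-recomputation model every valid score $S_{ij}=\tau\sum_\ell Q_{i\ell}K_{j\ell}$ is formed from its bilinear products. With $\Theta(N^2)$ valid pairs there are therefore $\Omega(N^2 d)$ level-1 multiplication vertices $Q_{i\ell}K_{j\ell}$, and each one is an ancestor of some output entry $Y_{ia}$. For that to hold we need $W$ and the scores to be generic, so no score can be skipped; taking $W$ arbitrary, for instance with a nonzero column, suffices.

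Second, the effect of the row statistics. Giving the algorithm $L$ just adds $N$ extra input vertices. The normalization $P_{ij}=\exp(S_{ij}-L_i)$ still needs $S_{ij}$, so no multiplication vertex is removed from the DAG. The only cost is at most $N$ additional read transfers, and $N\le Nd\lesssim N^2d^2/M$ because $M\lesssim Nd$. So the bound is unaffected.

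Third, the counting step, which is the core of the argument. Take the $M$-partition (or $2M$-partition) lemma of Hong--Kung: any schedule with $q$ transfers yields a partition into $\lceil q/M\rceil$ parts, each with dominator and minimum sets of size $O(M)$. Then apply \citep[Lem.~3.3]{saha2024ioattention}: in the regime $M=\Omega(d^2)$, a single part contains at most $O(M^2/d)$ level-1 score vertices. The bound holds because a part touching $r$ rows of $Q$ and $c$ rows of $K$ must have $(r+c)d=O(M)$, so it covers at most $rcd\le O(M^2/d)$ products. Combining the two, the number of parts is $\Omega(N^2d/(M^2/d))=\Omega(N^2d^2/M^2)$, and hence $q=\Omega(N^2d^2/M)$.

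The main obstacle is verifying that Lemma 3.3 of \citep{saha2024ioattention}, which was stated for dense attention without saved statistics, still goes through when the partial-sum vertices feeding $S_{ij}$ can be combined with the $L_i$ inputs, and when a mask is present. I would handle this by arguing that their per-part bound uses only the bilinear structure of the level-1 vertices and the dominator-set size. Neither depends on which $(i,j)$ pairs are valid, and neither depends on extra input vertices beyond the $O(M)$ that a dominator set can absorb. So the bound carries over verbatim, and only the total count of level-1 vertices changes from $N^2d$ to $\Theta(N^2)\cdot d$.
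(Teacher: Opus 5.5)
Your proposal takes the same route as the paper's proof. Both reduce to Saha--Ye's Lemma~3.4, note that a mask leaving $\Theta(N^2)$ valid pairs only rescales the $\Theta(N^2d)$ count of level-1 score vertices, and absorb the $O(N)$ saved row statistics into the $N^2d^2/M$ term using $M\lesssim Nd$; you additionally unpack the Hong--Kung partition count.

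One caution: your justification of the $O(M^2/d)$ per-part bound via $(r+c)d=O(M)$ is not right as stated, and neither is your claim that the bound uses only the dominator-set size. When $M\le N$, a part can contain $M^2$ products $Q_{i\ell}K_{j\ell}$ sharing one inner index $\ell$ with only $2M$ dominators. The actual bound also needs the minimum-set constraint: partially summed scores are charged to the $O(M)$ boundary vertices at cost $O(Md)\le O(M^2/d)$ for $M\ge d^2$, and fully summed scores require whole rows of $Q$ and $K$ to be dominated. It is this combined argument that you should check is unaffected by masks and by supplying $L$, which it is.
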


\begin{proof}
This is exactly the large-cache forward-attention lower bound of
\citep[Lem.~3.4]{saha2024ioattention}, applied to the standard
FlashAttention-style score-recomputation model. The cited lemma
is stated for unmasked attention. The argument it
relies on is the standard matrix-multiplication I/O lower bound applied
to the score subgraph, which only requires that the number of valid
query-key interactions be $\Theta(N^2)$. Causal masks satisfy this
condition (the lower-triangular pattern has $N(N+1)/2 = \Theta(N^2)$
valid pairs), so the lemma transfers to the causal setting we evaluate.
In that model, the full score matrix \(S=\tau QK^\Tr\) and probability matrix
\(P=\softmax(S)\) are not provided as HBM inputs; score blocks are
recomputed from the bilinear products between \(Q\) and \(K\), and only
\(O(N)\) row statistics may be supplied. Lemma~3.4 of
\citep{saha2024ioattention} gives
\(
Q(M)=\Omega\!\left(\frac{N^2d^2}{M}\right)
\)
for \(M=\Omega(d^2)\). The additional \(O(N)\) row statistics change the
traffic only by row-state terms, which are dominated by
\(N^2d^2/M\) when \(M\lesssim Nd\). Therefore exact forward attention in
this score-recomputation model requires
\(
\Omega\!\left(\frac{N^2d^2}{M}\right)
\)
HBM transfers.
\end{proof}

\begin{lemma}[Forward-attention embedding in BoB]
\label{lem:bob-forward-embed}
For any \(Q,K,W\in\R^{N\times d}\), there is a valid BoB instance whose
output \(\td{dO}\) is exactly
\[
\td{dO}
=
\softmax(\tau QK^\Tr)W.
\]
\end{lemma}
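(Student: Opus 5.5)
The plan is to exhibit the instance explicitly and then evaluate the closed-form BoB identities of Section~\ref{sec:affine}, as assembled at the end of Appendix~\ref{app:derivation}. Take the given \(Q,K\), and set \(V=0\), \(dO=0\), \(U_Q=U_K=0\), and \(U_V=W\), with no mask (or with any mask of the stated type, in which case the softmax is the masked one). This is a legitimate BoB instance. The definition in \S\ref{sec:bob-def} allows arbitrary \((Q,K,V,dO)\) and arbitrary upstream matrices \((U_Q,U_K,U_V)\in(\R^{N\times d})^3\), and the gradient \(\nabla\Phi\) is well defined at every such point.

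Next we evaluate the intermediates in order. First, \(S=\tau QK^\Tr\) and \(P=\softmax(\tau QK^\Tr)\), since neither depends on \(V\) or \(dO\). From \(F=\tau(U_QK^\Tr+QU_K^\Tr)\) with \(U_Q=U_K=0\), we get \(F=0\). It follows that \(\alpha_i=\sum_j P_{ij}F_{ij}=0\), and hence \(h=F-\alpha\mathbf{1}^\Tr=0\). For completeness: \(dP=dO\,V^\Tr=0\), \(D_i=do_i^\Tr o_i=0\), and \(C=dO\,U_V^\Tr=0\).

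Substituting into \eqref{eq:VOtilde} gives
\[
\td{dO}=P\,U_V+(P\odot h)\,V=P\,W+0=\softmax(\tau QK^\Tr)W,
\]
which is the claim. Two further evaluations are not required, but they show the instance is a clean embedding. First, \(\td P=0\), which forces \(E=0\) and \(\td S=0\). Second, \(dS=0\). Together these give \(\td Q=\td K=\td V=0\).

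The only point needing care is to justify that the derivation in Appendix~\ref{app:derivation} holds at this degenerate point. The identities were obtained as exact algebraic consequences of the chain rule and hold pointwise for all inputs, so they apply here. In the masked case, we also check that \(P\) uses the valid sets \(\mathcal V_i\) consistently on both sides. I expect no real obstacle; the step most worth stating explicitly is that the instance is ``valid'', meaning the upstream matrices are unconstrained inputs to the BoB primitive.
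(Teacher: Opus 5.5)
Your proposal is correct and is essentially the paper's proof. The paper uses the same instance $V=0$, $dO=0$, $U_Q=U_K=0$, $U_V=W$, notes $F=0$, $\alpha=0$, $h=0$, and reads off $\td{dO}=PU_V+(P\odot h)V=PW$; the only difference is that the paper justifies ``valid'' by checking that the saved row state is consistent ($O=PV=0$, $D_i=0$, $L$ determined by $Q,K$), whereas you emphasize that the upstream matrices are unconstrained and additionally verify, correctly but unnecessarily, that the other three BoB outputs vanish.
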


\begin{proof}
Choose
\[
V=0,
\qquad
dO=0,
\qquad
U_Q=0,
\qquad
U_K=0,
\qquad
U_V=W.
\]
The required row state is valid: \(O=PV=0\), \(D_i=dO_i^\Tr O_i=0\),
and the row normalizers are determined by \(Q\) and \(K\). On this
instance,
\[
F
=
\tau(U_QK^\Tr+QU_K^\Tr)
=
0,
\qquad
\alpha_i
=
\sum_j P_{ij}F_{ij}
=
0,
\qquad
h_{ij}
=
F_{ij}-\alpha_i
=
0.
\]
Using the BoB identity for the upstream-output derivative,
\[
\td{dO}
=
PU_V+(P\odot h)V,
\]
we obtain
\[
\td{dO}
=
PW.
\]
Thus this BoB instance computes exact forward softmax attention with
value matrix \(W\).
\end{proof}

\begin{proof}[Proof of Theorem~\ref{thm:bob-lb}]
The argument uses the same score-recomputation model as
Lemma~\ref{lem:fwd-lb-saved-stats}: the algorithm is not given \(S\) or
\(P\) as HBM inputs, does not store them as \(N\times N\) HBM tensors, and
evaluates the score subgraph through the standard bilinear products
\(S_{ij}=\tau q_i^\Tr k_j\). Suppose, for contradiction, that there were
an exact BoB algorithm with
\[
o\!\left(\frac{N^2d^2}{M}\right)
\]
HBM transfers under this model. Given an
arbitrary forward-attention instance \((Q,K,W)\), construct the BoB
instance from Lemma~\ref{lem:bob-forward-embed} by setting
\(V=0\), \(dO=0\), \(U_Q=U_K=0\), and \(U_V=W\). Running the assumed
BoB algorithm and reading its \(\td{dO}\) output would compute
\[
\softmax(\tau QK^\Tr)W.
\]
This construction only relabels the forward-attention value matrix
\(W\) as the BoB upstream matrix \(U_V\), while \(V,dO,U_Q,U_K\) are
fixed zero inputs and the required row state is determined by \(Q\) and
\(K\). Thus it gives an exact forward-attention algorithm in the same
score-recomputation model with asymptotically smaller HBM traffic than
Lemma~\ref{lem:fwd-lb-saved-stats} allows. This contradiction proves
that any exact BoB algorithm in the stated score-recomputation model
requires $$\Omega\!\left(\frac{N^2d^2}{M}\right)$$
HBM transfers.
\end{proof}

\section{FlashBoB: Fully Annotated Pseudocode}
\label{app:full-algo}
 
Algorithm~\ref{alg:flashbob-full} reproduces the schedule of
Algorithm~\ref{alg:flashbob} with explicit tile shapes and inline
commentary on the role of each quantity. Shapes are given for the
divisible dense case; boundary tiles are padded and masked as in
FlashAttention. All intermediates
\(
P_{ij},\ dP_{ij},\ F_{ij},\ C_{ij},\
\td{P}^{\circ}_{ij},\ \td{P}_{ij},\ \td{S}_{ij}
\)
are formed only as \(B_r\times B_c\) tiles in SRAM/registers and are
discarded after the tile update; no full \(N\times N\) intermediate is
materialized in HBM.
 
\begin{algorithm}[t]
\caption{\textsc{FlashBoB}: fully annotated schedule.}
\label{alg:flashbob-full}
\small
\begin{algorithmic}[1]
\Require $Q, K, V, dO, U_Q, U_K, U_V, O \in \R^{N \times d}$; row state $L, D \in \R^N$ in HBM, SRAM size $M$, $\tau = 1/\sqrt{d}$
\Ensure $\td{Q}, \td{K}, \td{V}, \td{dO} \in \R^{N \times d}$ and $\alpha, E \in \R^N$
\State $b \gets \lfloor M / (c_{\textsc{BoB}} d) \rfloor$;\;
       $B_r^{(1)}, B_c^{(2)} \gets b$;\;
       $B_c^{(1)}, B_r^{(2)} \gets \min(b, d)$
\Statex \colorbox{green!10}{\parbox{0.97\linewidth}{\textsc{Pass 1 (row-major).} Finalize $(\alpha, E, \td{Q}, \td{dO})$ in a single sweep via affine fusion.}}
\For{$i = 1$ \textbf{to} $\lceil N / B_r^{(1)} \rceil$}
    \State Load $Q_i, dO_i, U_{Q,i}, O_i \in \R^{B_r^{(1)} \times d}$ and saved row scalars $L_i, D_i \in \R^{B_r^{(1)}}$
           \Comment{active row block}
    \State $\alpha_i, E^{\circ}_i \gets 0 \in \R^{B_r^{(1)}}$;\;
           $\td{dO}^{\circ}_i, \Omega_i, B_i, R_i \gets 0 \in \R^{B_r^{(1)} \times d}$
           \Comment{six row accumulators; five are $\alpha$-free}
    \For{$j = 1$ \textbf{to} $\lceil N / B_c^{(1)} \rceil$}
        \State Load $K_j, V_j, U_{K,j}, U_{V,j} \in \R^{B_c^{(1)} \times d}$
               \Comment{streamed column block}
        \State Recompute $S_{ij}, P_{ij} \in \R^{B_r^{(1)} \times B_c^{(1)}}$ from $(Q_i, K_j, L_i)$
               \Comment{tilewise softmax}
        \State $dP_{ij} \gets dO_i\, V_j^\Tr$
               \Comment{upstream to $P$ via $dV$ path}
        \State $F_{ij} \gets \tau(U_{Q,i} K_j^\Tr + Q_i U_{K,j}^\Tr)$
               \Comment{BoB gradient at $dS$, eq.~\eqref{eq:F-app}}
        \State $C_{ij} \gets dO_i\, U_{V,j}^\Tr$
               \Comment{contribution from $dV$ path}
        \State $\td{P_{ij}}^{\circ} \gets C_{ij} + F_{ij} \odot (dP_{ij} - D_i)$
               \Comment{$\alpha$-free base of $\td{P}$, eq.~\eqref{eq:Ptilde-app}}
        \State $\alpha_i \mathrel{+}= \rowsum(P_{ij} \odot F_{ij})$
               \Comment{row scalar for affine update}
        \State $E^{\circ}_i \mathrel{+}= \rowsum(P_{ij} \odot \td{P_{ij}}^{\circ})$
               \Comment{$\alpha$-free component of $E_i$}
        \State $\td{dO}^{\circ}_i \mathrel{+}=
                P_{ij} U_{V,j} + (P_{ij} \odot F_{ij}) V_j$
               \Comment{$\alpha$-free component of $\td{dO}_i$}
        \State $\Omega_i \mathrel{+}=
                (P_{ij} \odot \td{P_{ij}}^{\circ}) K_j
                + (P_{ij} \odot (dP_{ij} - D_i)) U_{K,j}$
               \Comment{$\alpha$-free component of $\td{Q}_i$}
        \State $B_i \mathrel{+}= P_{ij} K_j$;\;
               $R_i \mathrel{+}= (P_{ij} \odot dP_{ij}) K_j$
               \Comment{coefficients for update}
    \EndFor
    \State $E_i \gets E^{\circ}_i - \alpha_i \odot D_i$
           \Comment{eq.~\eqref{eq:fusion}}
    \State $\td{dO}_i \gets \td{dO}^{\circ}_i - \alpha_i \odot O_i$
           \Comment{eq.~\eqref{eq:fusion}}
    \State $\td{Q}_i \gets \tau(\Omega_i - \alpha_i \odot R_i - E_i \odot B_i)$
           \Comment{eq.~\eqref{eq:fusion}}
    \State Write $\alpha_i, E_i, \td{Q}_i, \td{dO}_i$ to HBM
           \Comment{$(\alpha, E)$ consumed in Pass 2}
\EndFor
\Statex \colorbox{orange!12}{\parbox{0.97\linewidth}{\textsc{Pass 2 (column-major).} Consume saved $(\alpha, E)$; accumulate and write $(\td{K}, \td{V})$.}}
\For{$j = 1$ \textbf{to} $\lceil N / B_c^{(2)} \rceil$}
    \State Load $K_j, V_j, U_{K,j}, U_{V,j} \in \R^{B_c^{(2)} \times d}$
           \Comment{active column block}
    \State Initialize $\td{K}_j, \td{V}_j \gets 0 \in \R^{B_c^{(2)} \times d}$
    \For{$i = 1$ \textbf{to} $\lceil N / B_r^{(2)} \rceil$}
        \State Load $Q_i, dO_i, U_{Q,i} \in \R^{B_r^{(2)} \times d}$ and $L_i, D_i, \alpha_i, E_i \in \R^{B_r^{(2)}}$
               \Comment{streamed row block + saved scalars}
        \State Recompute $S_{ij}, P_{ij} \in \R^{B_r^{(2)} \times B_c^{(2)}}$ from $(Q_i, K_j, L_i)$
        \State $dP_{ij} \gets dO_i\, V_j^\Tr$;\;
               $F_{ij} \gets \tau(U_{Q,i} K_j^\Tr + Q_i U_{K,j}^\Tr)$;\;
               $C_{ij} \gets dO_i\, U_{V,j}^\Tr$
        \State $h_{ij} \gets F_{ij} - \alpha_i$;\;
               $dS_{ij} \gets P_{ij} \odot (dP_{ij} - D_i)$
               \Comment{tile-wise quantities}
        \State $\td{P_{ij}} \gets C_{ij} + F_{ij} \odot (dP_{ij} - D_i)
                 - \alpha_i \odot dP_{ij}$
               \Comment{full $\td{P}$, eq.~\eqref{eq:Ptilde-app}}
        \State $\td{S_{ij}} \gets P_{ij} \odot (\td{P_{ij}} - E_i)$
               \Comment{second softmax-backward}
        \State $\td{V}_j \mathrel{+}= (P_{ij} \odot h_{ij})^\Tr dO_i$
               \Comment{column update for $\td{V}_j$}
        \State $\td{K}_j \mathrel{+}= \tau(\td{S_{ij}}^\Tr Q_i
                + dS_{ij}^\Tr U_{Q,i})$
               \Comment{column update for $\td{K}_j$}
    \EndFor
    \State Write $\td{K}_j, \td{V}_j$ to HBM
\EndFor
\end{algorithmic}
\end{algorithm}

\section{Dependency graphs and schedule overlays}
\label{app:graphs}

This appendix collects the dependency graphs underlying the derivation. Three simple
graph ideas appear repeatedly. A \emph{fan-out} means one quantity is computed once and then sent to several later expressions. A \emph{reduction detour} means an $N\times N$ quantity is first compressed to one scalar per row and that row scalar is
then used later. A \emph{two-factor product} means a node is formed by multiplying information coming from two streamed operands. Figures~\ref{fig:fwd-graph} and~\ref{fig:bwd-graph} show these patterns in the forward and first-backward graphs,
and Figure~\ref{fig:bob-graph} shows how they reappear in BoB. Figures~\ref{fig:bob-3pass} and~\ref{fig:bob-2pass} then overlay the conservative three-pass and affine-fused
two-pass schedules on the BoB graph.

In all graphs, blue nodes are inputs, green nodes are intermediates, red nodes are outputs or incoming gradients from the next derivative level, and yellow circles are
one-scalar-per-row quantities. Dashed gray nodes are recomputed tilewise quantities that are never materialized in HBM.

\begin{figure}[t]
\centering
\resizebox{0.92\linewidth}{!}{\begin{tikzpicture}[
    node distance=1.1cm and 1.4cm,
    every node/.style={font=\small},
    input/.style={draw, rounded corners, fill=graphinput!8, minimum width=1.1cm, minimum height=0.7cm},
    inter/.style={draw, rounded corners, fill=graphinter!10, minimum width=1.1cm, minimum height=0.7cm},
    output/.style={draw, rounded corners, fill=graphoutput!10, minimum width=1.1cm, minimum height=0.7cm},
    >={Stealth[length=2mm]}
]
\node[input] (Q) {$Q$};
\node[input, below=0.5cm of Q] (K) {$K$};
\node[input, below=0.5cm of K] (V) {$V$};
\node[inter, right=2cm of Q] (S) at ($(Q)!0.5!(K) + (2.2,0)$) {$S$};
\node[inter, right=1.6cm of S] (P) {$P$};
\node[output, right=1.6cm of P] (O) {$O$};
\draw[->] (Q) -- node[above,font=\scriptsize]{$\tau(\cdot)K^\Tr$} (S);
\draw[->] (K) -- (S);
\draw[->] (S) -- node[above,font=\scriptsize]{softmax} (P);
\draw[->] (P) -- node[above,font=\scriptsize]{$(\cdot)V$} (O);
\draw[->] (V) to[out=0,in=-90] (O);
\begin{pgfonlayer}{background}
\node[draw=graphinput!30, dashed, rounded corners, fit=(Q) (V), label={[graphinput!70]above:{\scriptsize\textbf{inputs}}}] {};
\node[draw=graphoutput!30, dashed, rounded corners, fit=(O), label={[graphoutput!70]above:{\scriptsize\textbf{output}}}] {};
\end{pgfonlayer}
\end{tikzpicture}
}
\caption{Forward dependency graph. Three inputs flow through two $N \times N$
intermediates ($S$ and $P$) to a single $N \times d$ output.}
\label{fig:fwd-graph}
\end{figure}

\begin{figure}[t]
\centering
\resizebox{0.92\linewidth}{!}{\begin{tikzpicture}[
    node distance=1.1cm and 1.5cm,
    every node/.style={font=\small},
    upstream/.style={draw, rounded corners, fill=graphoutput!10, minimum width=1.1cm, minimum height=0.7cm},
    inter/.style={draw, rounded corners, fill=graphinter!10, minimum width=1.1cm, minimum height=0.7cm},
    output/.style={draw, rounded corners, fill=graphinput!10, minimum width=1.1cm, minimum height=0.7cm},
    scalar/.style={draw, circle, fill=graphscalar!60, inner sep=1pt, minimum size=0.7cm},
    >={Stealth[length=2mm]}
]
\node[upstream] (G) {$G^{O}$};
\node[inter, right=1.7cm of G, yshift=0.7cm] (dP) {$dP$};
\node[output, right=1.7cm of G, yshift=-0.9cm] (dV) {$dV$};
\node[scalar, right=1.4cm of dP, yshift=0.6cm] (D) {$D_i$};
\node[inter, right=2.4cm of dP, yshift=-0.3cm] (dS) {$dS$};
\node[output, right=1.7cm of dS, yshift=0.6cm] (dQ) {$dQ$};
\node[output, right=1.7cm of dS, yshift=-0.6cm] (dK) {$dK$};
\draw[->] (G) -- node[above,font=\scriptsize,sloped]{$(\cdot)V^\Tr$} (dP);
\draw[->] (G) -- node[below,font=\scriptsize,sloped]{$P^\Tr(\cdot)$} (dV);
\draw[->] (dP) -- node[above,font=\scriptsize,sloped]{rowdot $P$} (D);
\draw[->] (dP) -- node[below,font=\scriptsize,sloped]{$P \odot (\cdot)$} (dS);
\draw[->] (D) -- node[right,font=\scriptsize]{\;subtract} (dS);
\draw[->] (dS) -- node[above,font=\scriptsize,sloped]{$\tau(\cdot)K$} (dQ);
\draw[->] (dS) -- node[below,font=\scriptsize,sloped]{$\tau(\cdot)^\Tr Q$} (dK);
\end{tikzpicture}
}
\caption{First-backward dependency graph. One upstream ($G^{O}$) produces three outputs
via two $N \times N$ intermediates, with a single row-scalar detour through $D_i$.}
\label{fig:bwd-graph}
\end{figure}

\begin{figure}[t]
\centering
\resizebox{0.96\linewidth}{!}{\begin{tikzpicture}[
    every node/.style={font=\scriptsize},
    upstream/.style={draw, rounded corners, fill=graphoutput!15, minimum width=0.9cm, minimum height=0.6cm},
    inter/.style={draw, rounded corners, fill=graphinter!15, minimum width=0.9cm, minimum height=0.6cm},
    recomp/.style={draw, dashed, rounded corners, fill=gray!10, text=gray!40!black, minimum width=0.9cm, minimum height=0.6cm},
    output/.style={draw, rounded corners, fill=graphinput!15, minimum width=0.9cm, minimum height=0.6cm},
    scalar/.style={draw, circle, fill=graphscalar!60, inner sep=1pt, minimum size=0.6cm},
    >={Stealth[length=1.6mm]}
]
\node[upstream] (bdQ) at (0,0) {$U_Q$};
\node[upstream] (bdK) at (2,0) {$U_K$};
\node[upstream] (bdV) at (10,0) {$U_V$};
\node[inter] (F) at (1,-1.6) {$F$};
\node[recomp] (dP) at (6,-1.6) {$dP$};
\node[inter] (C) at (10,-1.6) {$C$};
\node[scalar] (alpha) at (-0.5,-3.2) {$\alpha_i$};
\node[inter] (h) at (2.3,-3.2) {$h$};
\node[inter] (Pbase) at (7.8,-3.2) {$\td{P}^{\circ}$};
\node[inter] (Pbar) at (5.5,-4.8) {$\td{P}$};
\node[recomp] (dS) at (-0.5,-5.8) {$dS$};
\node[scalar] (E) at (3.5,-6.8) {$E_i$};
\node[inter] (Sbar) at (7,-6.8) {$\td{S}$};
\node[output] (bQ) at (0,-8.6) {$\td{Q}$};
\node[output] (bK) at (3,-8.6) {$\td{K}$};
\node[output] (bV) at (6.5,-8.6) {$\td{V}$};
\node[output] (bG) at (10,-8.6) {$\td{dO}$};
\draw[->] (bdQ) -- node[left,font=\tiny]{$\tau(\cdot)K^\Tr$} (F);
\draw[->] (bdK) -- node[right,font=\tiny]{$\tau Q(\cdot)^\Tr$} (F);
\draw[->] (bdV) -- node[right,font=\tiny]{$d{O}(\cdot)^\Tr$} (C);
\draw[->] (F) to[out=200,in=70] node[left,font=\tiny,pos=0.45]{$\rowsum(P \odot \cdot)$} (alpha);
\draw[->] (F) -- (h);
\draw[->] (alpha) -- (h);
\draw[->] (C) -- (Pbase);
\draw[->] (F.east) to[out=-10,in=160] (Pbase.west);
\draw[->] (Pbase) -- (Pbar);
\draw[->] (alpha) to[out=-60,in=180] (Pbar);
\draw[->] (Pbar) -- (E);
\draw[->] (Pbar) -- (Sbar);
\draw[->] (E) -- (Sbar);
\draw[->] (Sbar) to[out=225,in=50] node[above,sloped,font=\tiny,pos=0.15]{$\tau(\cdot)K$} (bQ);
\draw[->] (Sbar) to[out=250,in=60] node[above,sloped,font=\tiny,pos=0.2]{$\tau(\cdot)^\Tr Q$} (bK);
\draw[->] (h.south) to[out=-75,in=135] node[above,sloped,font=\tiny,pos=0.4]{$(d{O})^\Tr(P \odot \cdot)$} (bV);
\draw[->] (h.east) to[out=-10,in=150] node[above,sloped,font=\tiny,pos=0.85]{$(P \odot \cdot)V$} (bG);
\draw[->] (bdV.east) to[out=-15,in=90] node[right,font=\tiny]{$P(\cdot)$} (bG);
\draw[->, gray!60, dashed] (dP) -- (Pbase);
\draw[->, gray!60, dashed] (dP) to[out=-70,in=60] (Pbar);
\draw[->, gray!60, dashed] (dS) -- node[right,font=\tiny,pos=0.55]{$(\cdot)\, U_K$} (bQ);
\draw[->, gray!60, dashed] (dS) to[out=-30,in=140] node[above,sloped,font=\tiny,pos=0.35]{$(\cdot)^\Tr U_Q$} (bK);
\end{tikzpicture}
}
\caption{BoB dependency graph. Three upstream matrices produce four outputs through five $N \times N$ intermediates and two row scalars. The softmax backward identity appears twice (at $\alpha_i \to h$ and at $E_i \to \td{S}$).}
\label{fig:bob-graph}
\end{figure}

\begin{figure}[t]
\centering
\resizebox{0.96\linewidth}{!}{\begin{tikzpicture}[
    every node/.style={font=\scriptsize},
    upstream/.style={draw, rounded corners, fill=graphoutput!15, minimum width=0.9cm, minimum height=0.6cm},
    inter/.style={draw, rounded corners, fill=graphinter!15, minimum width=0.9cm, minimum height=0.6cm},
    recomp/.style={draw, dashed, rounded corners, fill=gray!10, text=gray!40!black, minimum width=0.9cm, minimum height=0.6cm},
    output/.style={draw, rounded corners, fill=graphinput!15, minimum width=0.9cm, minimum height=0.6cm},
    scalar/.style={draw, circle, fill=graphscalar!60, inner sep=1pt, minimum size=0.6cm},
    >={Stealth[length=1.6mm]}
]
\begin{scope}[on background layer]
\fill[graphinput!8, rounded corners=4pt] (-1.7,0.7) rectangle (11.1,-2.2);
\fill[graphinput!12, rounded corners=4pt] (-1.7,-2.35) rectangle (11.1,-5.7);
\fill[graphoutput!6, rounded corners=4pt] (-1.7,-5.9) rectangle (11.1,-9.3);
\end{scope}
\node[graphinput!60!black, anchor=west, font=\small\bfseries, align=left] at (11.2,-0.7) {Pass 1\\[-1pt]\scriptsize row-major\\[-1pt]\scriptsize compute $\alpha$};
\node[graphinput!70!black, anchor=west, font=\small\bfseries, align=left] at (11.2,-4.0) {Pass 2\\[-1pt]\scriptsize row-major\\[-1pt]\scriptsize compute $E,\bar{Q},\overline{dO}$};
\node[graphoutput!70!black, anchor=west, font=\small\bfseries, align=left] at (11.2,-7.5) {Pass 3\\[-1pt]\scriptsize column-major\\[-1pt]\scriptsize compute $\bar{K},\bar{V}$};
\node[upstream] (bdQ) at (0,0) {$U_Q$};
\node[upstream] (bdK) at (2,0) {$U_K$};
\node[upstream] (bdV) at (10,0) {$U_V$};
\node[inter] (F) at (1,-1.2) {$F$};
\node[scalar] (alpha) at (-0.4,-1.95) {$\alpha_i$};
\node[inter] (C) at (10,-3.0) {$C$};
\node[recomp] (dP2) at (6,-3.0) {$dP$};
\node[inter] (h2) at (2.3,-3.0) {$h$};
\node[inter] (Pbase2) at (7.8,-3.0) {$\td{P}^{\circ}$};
\node[inter] (Pbar2) at (5.5,-4.25) {$\td{P}$};
\node[scalar] (E2) at (3.0,-5.1) {$E_i$};
\node[recomp] (dS2row) at (0.8,-4.55) {$dS$};
\node[output] (bQ2) at (-0.4,-5.1) {$\td{Q}$};
\node[output] (bG2) at (9.8,-5.1) {$\td{dO}$};
\node[recomp] (dS3) at (0.8,-7.0) {$dS$};
\node[inter] (Sbar3) at (5.5,-7.0) {$\td{S}$};
\node[output] (bK3) at (2.5,-8.5) {$\td{K}$};
\node[output] (bV3) at (7.5,-8.5) {$\td{V}$};
\draw[->] (bdQ) -- (F);
\draw[->] (bdK) -- (F);
\draw[->] (F) -- (alpha);
\draw[->] (bdV) -- (C);
\draw[->] (alpha) -- (h2);
\draw[->] (F.east) to[out=-20,in=150] (h2.west);
\draw[->] (C) -- (Pbase2);
\draw[->] (F.east) to[out=-5,in=160] (Pbase2.west);
\draw[->] (Pbase2) -- (Pbar2);
\draw[->] (alpha) to[out=-65,in=180] (Pbar2);
\draw[->] (Pbar2) to[out=210,in=35] (E2);
\draw[->] (alpha) to[out=-110,in=120] (bQ2);
\draw[->] (h2) to[out=-75,in=90] (bG2);
\draw[->,gray!60,dashed] (dP2) -- (Pbase2);
\draw[->,gray!60,dashed] (dP2) to[out=-65,in=35] (Pbar2);
\draw[->,gray!60,dashed] (dS2row) to[bend right=15] (bQ2);
\draw[->] (Pbar2) -- (Sbar3);
\draw[->] (E2) to[out=-70,in=130] (Sbar3);
\draw[->] (Sbar3) to[out=220,in=40] (bK3);
\draw[->] (h2.east) to[out=-45,in=120] (bV3);
\draw[->] (bdV) to[out=-90,in=60] (bV3);
\draw[->,gray!60,dashed] (dS3) to[bend right=15] (bK3);
\end{tikzpicture}
}
\caption{Three-pass schedule overlaid on the BoB graph. Pass~1 exists only to accumulate $\alpha_i$. Proposition~\ref{prop:affine} collapses Passes~1 and~2 into one.}
\label{fig:bob-3pass}
\end{figure}

\begin{figure}[t]
\centering
\resizebox{0.96\linewidth}{!}{\begin{tikzpicture}[
    every node/.style={font=\scriptsize},
    upstream/.style={draw, rounded corners, fill=graphoutput!15, minimum width=0.9cm, minimum height=0.6cm},
    inter/.style={draw, rounded corners, fill=graphinter!15, minimum width=0.9cm, minimum height=0.6cm},
    recomp/.style={draw, dashed, rounded corners, fill=gray!10, text=gray!40!black, minimum width=0.9cm, minimum height=0.6cm},
    output/.style={draw, rounded corners, fill=graphinput!15, minimum width=0.9cm, minimum height=0.6cm},
    scalar/.style={draw, circle, fill=graphscalar!60, inner sep=1pt, minimum size=0.6cm},
    >={Stealth[length=1.6mm]}
]
\begin{scope}[on background layer]
\fill[graphinput!8, rounded corners=4pt] (-1.7,0.7) rectangle (11.1,-5.55);
\end{scope}
\node[graphinput!60!black, anchor=west, font=\small\bfseries, align=left] at (11.2,-2.4) {Pass 1\\[-1pt]\scriptsize row-major};
\begin{scope}[on background layer]
\fill[graphoutput!6, rounded corners=4pt] (-1.7,-5.75) rectangle (11.1,-9.3);
\end{scope}
\node[graphoutput!70!black, anchor=west, font=\small\bfseries, align=left] at (11.2,-7.5) {Pass 2\\[-1pt]\scriptsize column-major};
\node[upstream] (bdQ) at (0,0) {$U_Q$};
\node[upstream] (bdK) at (2,0) {$U_K$};
\node[upstream] (bdV) at (10,0) {$U_V$};
\node[inter] (F) at (1,-1.4) {$F$};
\node[inter] (C) at (10,-1.4) {$C$};
\node[recomp] (dP) at (6,-1.4) {$dP$};
\node[scalar] (alpha) at (-0.5,-2.8) {$\alpha_i$};
\node[inter] (h) at (2.3,-2.8) {$h$};
\node[inter] (Pbase) at (7.8,-2.8) {$\td{P}^{\circ}$};
\node[inter] (Pbar) at (5.5,-4.2) {$\td{P}$};
\node[scalar] (E) at (3,-5.2) {$E_i$};
\node[output] (bQ) at (-0.4,-5.2) {$\td{Q}$};
\node[output] (bG) at (9.8,-5.2) {$\td{dO}$};
\node[inter] (Sbar) at (5.5,-7.0) {$\td{S}$};
\node[recomp] (dS1) at (0.8,-4.6) {$dS$};
\node[recomp] (dS2) at (0.8,-7.0) {$dS$};
\node[output] (bK) at (2.5,-8.5) {$\td{K}$};
\node[output] (bV) at (7.5,-8.5) {$\td{V}$};
\draw[->] (bdQ) -- (F);
\draw[->] (bdK) -- (F);
\draw[->] (bdV) -- (C);
\draw[->] (F) to[out=210,in=60] (alpha);
\draw[->] (F) -- (h);
\draw[->] (alpha) -- (h);
\draw[->] (C) -- (Pbase);
\draw[->] (F.east) to[out=-10,in=160] (Pbase.west);
\draw[->] (Pbase) -- (Pbar);
\draw[->] (alpha) to[out=-60,in=180] (Pbar);
\draw[->] (Pbar) to[out=210,in=40] (E);
\draw[->] (Pbar) -- (Sbar);
\draw[->] (E) to[out=-70,in=130] (Sbar);
\draw[->, gray!60, dashed] (dP) -- (Pbase);
\draw[->, gray!60, dashed] (dP) to[out=-70,in=40] (Pbar);
\draw[->] (alpha) to[out=-90,in=130] (bQ);
\draw[->] (h.south) to[out=-70,in=90] (bG);
\draw[->, gray!60, dashed] (dS1) to[bend right=15] (bQ);
\draw[->, gray!60, dashed] (dS2) to[bend right=15] (bK);
\draw[->] (Sbar) to[out=220,in=40] (bK);
\draw[->] (h.east) to[out=-45,in=120] (bV);
\draw[->] (bdV) to[out=-90,in=60] (bV);
\end{tikzpicture}
}
\caption{Affine-fused two-pass schedule (\textsc{FlashBoB}) overlaid on the BoB graph. One
horizontal cut separates row-side nodes (above) from column-side nodes (below).
Pass~1 finalizes $\td{Q}$ and $\td{dO}$ by accumulating the six row
accumulators from Proposition~\ref{prop:affine} and applying scalar
corrections at the end of the sweep; Pass~2 consumes the saved
$(\alpha,E)$ and finalizes $\td{K}$ and $\td{V}$ as natural column
reductions..
}
\label{fig:bob-2pass}
\end{figure}

\subsection{Three-pass versus two-pass: HBM savings and empirical comparison}
\label{app:two-pass-proof}

Throughout this appendix, the \emph{FlashAttention execution model} means
an exact tiled attention execution with bounded \(O(M)\) on-chip working
sets, no global atomic adds on output tensors, no \(N\times N\) HBM
intermediates, and probability tiles recomputed from saved row state
\((L,D)\) rather than read from HBM.
 
The four BoB outputs do not finish in the same traversal direction
(\S\ref{sec:affine}), forcing one row-major and one column-major
sweep. A single pass cannot satisfy the FlashAttention execution
model: the column-side outputs $(\td K, \td V)$ reduce over rows for
fixed column, so accumulating them during a row-major traversal
would require materializing an $N \times N$ intermediate in HBM,
using global atomics for column-indexed outputs, or holding
$\Theta(Nd)$ partial column accumulators on chip. Each path gives up
the discipline that made FlashAttention fast.
 
The three-pass schedule (Figure~\ref{fig:bob-3pass}) is itself the
result of one level of affine fusion: its second pass uses the
delayed $E$-correction to finalize $\td Q$ in a single row-major
sweep. The two-pass \textsc{FlashBoB} schedule
(Figure~\ref{fig:bob-2pass}) extends this with a second level of
affine fusion that eliminates the dedicated $\alpha$-only sweep
(Pass~1 of the three-pass schedule). The HBM saving is bounded by
the streamed reads of the eliminated sweep, which loads exactly two
$B \times d$ tensors per tile pair ($K_j$ and $U_{K,j}$, the operands of $F_{ij}$). This is $2dN^2/B$ out of the three-pass total of $(10d + 4)N^2/B$ streamed traffic, about $2d/(10d+4) \approx 19\%$ of the streamed budget at $d{=}64$. With compute unchanged across schedules, the wall-time advantage scales the streamed-traffic ratio by the memory-bound fraction of the kernel.
 
\begin{table}[t]
\centering\small
\caption{Direct comparison between the three-pass control schedule
and the affine-fused two-pass kernel on NVIDIA A100 (GPT-2 Small
shape, batch 4, BF16 inputs with FP32 reductions).}
\label{tab:2pass-vs-3pass}
\begin{tabular}{@{}r rr r@{}}
\toprule
$N$ & \textsc{3-pass bob} (ms) & \textsc{FlashBoB} (ms) & Ratio \\
\midrule
  1\,024 &        2.70 &        2.57 & $1.05\times$ \\
  4\,096 &       27.30 &       25.80 & $1.06\times$ \\
 16\,384 &      389.14 &      365.73 & $1.06\times$ \\
 65\,536 &   6\,227.59 &   5\,843.14 & $1.07\times$ \\
262\,144 & 100\,696.16 &  94\,853.72 & $1.06\times$ \\
\bottomrule
\end{tabular}
\end{table}
 
The measured 5--7\% advantage at long context is consistent with the
predicted streamed-traffic ratio. We view this as additional evidence for the schedule choice; the formal optimality argument is the large-cache I/O bound of Theorem~\ref{thm:bob-lb}.

\section{Per-pass I/O ledger and proof of Theorem~\ref{thm:io}}
\label{app:io-ledger}

\begin{center}
\small
\begin{tabular}{@{}llll@{}}
\toprule
Pass & Stationary loads & Streamed loads & Writes \\
\midrule
1 (row) & $Q_i, U_{Q,i}, do_i, O_i, L_i, D_i$ & $K_j, U_{K,j}, V_j, U_{V,j}$ & $\td{Q}_i, \td{dO}_i, \alpha_i, E_i$ \\
2 (col) & $K_j, V_j, U_{K,j}, U_{V,j}$ & $Q_i, U_{Q,i}, do_i, L_i, D_i, \alpha_i, E_i$ & $\td{K}_j, \td{V}_j$ \\
\bottomrule
\end{tabular}
\end{center}

\medskip
\noindent\textit{Pass 1 element counts.} Stationary reads:
$\frac{N}{B_r^{(1)}}(4B_r^{(1)}d+2B_r^{(1)})=4Nd+2N$. Streamed reads:
$\frac{N}{B_r^{(1)}}\frac{N}{B_c^{(1)}} \cdot 4B_c^{(1)} d = \frac{4dN^2}{B_r^{(1)}}$.
Writes: $\frac{N}{B_r^{(1)}}(2B_r^{(1)}d+2B_r^{(1)}) = 2Nd+2N$.

\textit{Pass 2 element counts.} Stationary reads:
$\frac{N}{B_c^{(2)}} \cdot 4B_c^{(2)} d = 4Nd$. Streamed reads:
$\frac{N}{B_c^{(2)}}\frac{N}{B_r^{(2)}}(3B_r^{(2)} d + 4B_r^{(2)}) = \frac{(3d+4)N^2}{B_c^{(2)}}$.
Writes: $\frac{N}{B_c^{(2)}} \cdot 2B_c^{(2)} d = 2Nd$.

Summing all reads and writes,
\begin{align*}
  Q_{\textsc{FlashBoB}} &= (4Nd + 2N) + \frac{4dN^2}{B_r^{(1)}} + (2Nd + 2N) + 4Nd + \frac{(3d+4)N^2}{B_c^{(2)}} + 2Nd \\
  &= 12Nd + 4N + \frac{4dN^2}{B_r^{(1)}} + \frac{(3d+4)N^2}{B_c^{(2)}},
\end{align*}
proving Theorem~\ref{thm:io}. Setting $B_r^{(1)} = B_c^{(2)} = b$ gives
$Q_{\textsc{FlashBoB}} = 12Nd + 4N + (7d+4) N^2 / b$. \qed

The constant $c_{\textsc{BoB}}$ in
$b=\lfloor M/(c_{\textsc{BoB}}d)\rfloor$ is fixed by the on-chip
footprint of one tile visit. The footprint contains a constant number
of row-vector blocks, column-vector blocks, interaction-shaped
\(B_r\times B_c\) tiles, and row/column scalar buffers. We can write the
requirement as
\[
M
\;\ge\;
c_1 B_r d
+
c_2 B_c d
+
c_3 B_r B_c
+
O(B_r+B_c),
\]
for implementation-dependent constants \(c_1,c_2,c_3>0\). The
\(c_3B_rB_c\) term accounts for the constant number of
interaction-shaped intermediates formed during a tile visit, such as
probability-like, score-adjoint-like, and temporary product tiles. This
constant-factor accounting does not change the asymptotic choice
\(B_r,B_c=\Theta(M/d)\) in the large-cache regime used by
Theorem~\ref{thm:io}; it only makes explicit that the SRAM budget hides
more than one interaction tile.
 
The experiments use block-causal masking, under which Pass~1 visits only $(i, j)$
pairs with $j \leq i$ and Pass~2 visits only $(i, j)$ pairs with $i \geq j$,
reducing the streamed-traffic count from $T_r T_c$ to $T_r(T_r{+}1)/2$ per pass.
This scales the $N^2$ terms in~\eqref{thm:io} by $\rho = (1 + 1/T_r)/2 \to
\tfrac{1}{2}$ as $T_r \to \infty$; the stationary and write terms $12Nd + 4N$ are
unchanged.

\section{Extensions}
\label{app:extensions}

The derivation of Section~\ref{sec:flashbob} extends cleanly to three settings that
appear in practice: variable-length packed sequences, dropout, and mixed precision.

Variable-length packing and sparse masks require only that each row has a
well-defined set of allowed columns and that masked entries have zero probability.
For packed variable-length attention with total length $N$, define $\mathcal{V}$ as
the set of block pairs with at least one unmasked entry and restrict the inner streamed
loop over column blocks to those $j$ with $(i,j) \in \mathcal{V}$. Under a causal mask,
$|\mathcal{V}|$ is block-triangular, so roughly half of all tile visits are skipped.
The I/O formulas carry over with every $T_r T_c$ replaced by $|\mathcal{V}|$.

Dropout applied after softmax with keep mask $R \in \{0,1\}^{N \times N}$ and inverse
keep rate $\sigma = (1-p_{\text{drop}})^{-1}$ defines $\widetilde{P} = \sigma(R \odot P)$,
with $O = \widetilde{P}\, V$ and $dV = \widetilde{P}^\Tr dO$. The softmax row
reductions $\alpha_i$, $D_i$, and $E_i$ remain defined over the ungated
$P = \softmax(S)$, since this is the form whose gradient parents are correct. Gating
is confined to the paths through $R$:
\begin{align*}
  dP_{ij}        &\to \sigma R_{ij}\, dP_{ij}, &
  C_{ij}         &\to \sigma R_{ij}\, C_{ij}, \\
  \td{V}_j      &= \sigma \sum_i R_{ij}\,(P_{ij} \odot h_{ij})^\Tr do_i, &
  \td{dO}_i &= \sigma \sum_j R_{ij}\,P_{ij}\, u^V_j
                  + \sigma \sum_j R_{ij}\, P_{ij}\, h_{ij}\, v_j.
\end{align*}
Both terms of $\td{dO}_i$ are gated, including the direct $dV$ path $\sigma \sum_j R_{ij}\, P_{ij}\, u^V_j$: differentiating $dV = \widetilde{P}^\Tr dO$
with respect to $dO$ gives $\widetilde{P}\, U_V = \sigma (R \odot P)\, U_V$, not $P\, U_V$. Since $R$ is parameter-independent it introduces no additional gradient
paths, and the mask is regenerated deterministically per tile visit via a counter-based PRNG such as Philox, so no additional HBM storage is required.

The element counts in Theorem~\ref{thm:io} are precision-agnostic, so mixed-precision
execution requires no algorithmic change. A standard implementation stores $Q, K, V, dO, U_Q, U_K, U_V$ in BF16 or FP16 and accumulates all reductions ($L, D, \alpha, E, E^\circ$) in FP32, matching the
FlashAttention backward~\citep{dao2023flashattention2}.

\section{Additional numerical agreement data}
\label{app:errors}

We measured the maximum absolute and relative discrepancy of each BoB
output against the bit-for-bit math reference at sequence lengths
$N \in \{128, 256, 512, 1024, 2048, 4096, 8192\}$ for each non-reference
backend. The math reference computes the BoB outputs by executing the
closed-form identities of \S\ref{sec:bob-formulation} in PyTorch with
$N \times N$ intermediates fully materialized in HBM; it appears in
the table with all-zero discrepancies as a sanity check. The four
non-reference backends are the two \textsc{FlashBoB} variants
(\textsc{bob\_2pass} = Algorithm~\ref{alg:flashbob};
\textsc{bob\_3pass} = the three-pass schedule of
Appendix~\ref{app:two-pass-proof}) and the two GradMem-derived hand-written
HVP kernels (\textsc{hvp-m}, \textsc{hvp-s}). Inputs
are BF16 with FP32 accumulators, GPT-2 Small head shape (12 heads,
head dim 64, batch 4, causal mask), evaluated on a single A6000.

All four non-reference backends agree with the math reference to within
roughly $10^{-5}$ absolute and $10^{-6}$ relative across every output
and every sequence length, with discrepancies growing modestly with $N$
as expected from floating-point non-associativity under different
reduction orders.

\begin{table}[t]
\centering\scriptsize
\caption{Maximum absolute and relative discrepancy of each BoB output
against the math reference, for $N \in \{128, \ldots, 8192\}$. Math
reference shown for completeness as the all-zero baseline. In addition
to the main methods from Table~\ref{tab:attn-kernel}, we include a
3-pass BoB implementation as an auxiliary correctness check. Hardware:
NVIDIA A6000 48GB; precision: BF16 inputs with FP32 reductions.}
\label{tab:errors}
\setlength{\tabcolsep}{4pt}
\begin{tabular}{@{}rl rrrr rrrr@{}}
\toprule
& & \multicolumn{4}{c}{Max absolute discrepancy} & \multicolumn{4}{c}{Max relative discrepancy} \\
\cmidrule(lr){3-6}\cmidrule(lr){7-10}
$N$ & Backend & $\td{Q}$ & $\td{K}$ & $\td{V}$ & $\td{dO}$ & $\td{Q}$ & $\td{K}$ & $\td{V}$ & $\td{dO}$ \\
\midrule
\multirow{5}{*}{128}
  & \textsc{math}         & 0       & 0       & 0       & 0       & 0       & 0       & 0       & 0       \\
  & \textsc{hvp-m}        & 2.38e-6 & 5.25e-6 & 1.43e-6 & 9.54e-7 & 5.74e-7 & 8.08e-7 & 3.31e-7 & 1.97e-7 \\
  & \textsc{hvp-s}        & 2.38e-6 & 5.25e-6 & 1.43e-6 & 9.54e-7 & 5.74e-7 & 8.08e-7 & 3.31e-7 & 1.97e-7 \\
  & \textsc{FlashBoB}    & 3.34e-6 & 3.58e-6 & 1.91e-6 & 2.38e-6 & 8.03e-7 & 5.51e-7 & 4.42e-7 & 4.92e-7 \\
  & \textsc{3-pass bob}   & 2.62e-6 & 5.01e-6 & 1.91e-6 & 1.67e-6 & 6.31e-7 & 7.72e-7 & 4.42e-7 & 3.44e-7 \\
\midrule
\multirow{5}{*}{256}
  & \textsc{math}         & 0       & 0       & 0       & 0       & 0       & 0       & 0       & 0       \\
  & \textsc{hvp-m}        & 1.31e-6 & 3.93e-6 & 1.43e-6 & 1.19e-6 & 3.74e-7 & 1.02e-6 & 3.84e-7 & 2.96e-7 \\
  & \textsc{hvp-s}        & 1.31e-6 & 3.93e-6 & 1.43e-6 & 1.19e-6 & 3.74e-7 & 1.02e-6 & 3.84e-7 & 2.96e-7 \\
  & \textsc{FlashBoB}    & 2.15e-6 & 4.53e-6 & 1.43e-6 & 1.91e-6 & 6.12e-7 & 1.18e-6 & 3.84e-7 & 4.74e-7 \\
  & \textsc{3-pass bob}   & 2.15e-6 & 3.58e-6 & 1.43e-6 & 1.19e-6 & 6.12e-7 & 9.31e-7 & 3.84e-7 & 2.96e-7 \\
\midrule
\multirow{5}{*}{512}
  & \textsc{math}         & 0       & 0       & 0       & 0       & 0       & 0       & 0       & 0       \\
  & \textsc{hvp-m}        & 1.31e-6 & 6.20e-6 & 1.43e-6 & 7.15e-7 & 3.51e-7 & 1.21e-6 & 3.80e-7 & 1.78e-7 \\
  & \textsc{hvp-s}        & 1.31e-6 & 6.20e-6 & 1.43e-6 & 7.15e-7 & 3.51e-7 & 1.21e-6 & 3.80e-7 & 1.78e-7 \\
  & \textsc{FlashBoB}   & 2.86e-6 & 5.96e-6 & 1.31e-6 & 1.91e-6 & 7.66e-7 & 1.17e-6 & 3.49e-7 & 4.74e-7 \\
  & \textsc{3-pass bob}   & 1.91e-6 & 5.96e-6 & 1.55e-6 & 1.37e-6 & 5.11e-7 & 1.17e-6 & 4.12e-7 & 3.40e-7 \\
\midrule
\multirow{5}{*}{1\,024}
  & \textsc{math}         & 0       & 0       & 0       & 0       & 0       & 0       & 0       & 0       \\
  & \textsc{hvp-m}        & 1.79e-6 & 1.12e-5 & 1.91e-6 & 7.45e-7 & 3.57e-7 & 2.44e-6 & 5.25e-7 & 2.18e-7 \\
  & \textsc{hvp-s}        & 1.79e-6 & 1.12e-5 & 1.91e-6 & 7.45e-7 & 3.57e-7 & 2.44e-6 & 5.25e-7 & 2.18e-7 \\
  & \textsc{FlashBoB}    & 2.38e-6 & 6.20e-6 & 1.67e-6 & 1.85e-6 & 4.75e-7 & 1.35e-6 & 4.60e-7 & 5.40e-7 \\
  & \textsc{3-pass bob}   & 1.91e-6 & 5.72e-6 & 1.67e-6 & 1.07e-6 & 3.80e-7 & 1.25e-6 & 4.60e-7 & 3.14e-7 \\
\midrule
\multirow{5}{*}{2\,048}
  & \textsc{math}         & 0       & 0       & 0       & 0       & 0       & 0       & 0       & 0       \\
  & \textsc{hvp-m}        & 2.15e-6 & 1.34e-5 & 2.15e-6 & 1.19e-6 & 5.29e-7 & 2.87e-6 & 5.74e-7 & 2.94e-7 \\
  & \textsc{hvp-s}        & 2.15e-6 & 1.34e-5 & 2.15e-6 & 1.19e-6 & 5.29e-7 & 2.87e-6 & 5.74e-7 & 2.94e-7 \\
  & \textsc{FlashBoB}    & 3.34e-6 & 1.12e-5 & 1.91e-6 & 2.15e-6 & 8.23e-7 & 2.41e-6 & 5.10e-7 & 5.29e-7 \\
  & \textsc{3-pass bob}   & 3.70e-6 & 1.03e-5 & 1.91e-6 & 1.43e-6 & 9.11e-7 & 2.20e-6 & 5.10e-7 & 3.52e-7 \\
\midrule
\multirow{5}{*}{4\,096}
  & \textsc{math}         & 0       & 0       & 0       & 0       & 0       & 0       & 0       & 0       \\
  & \textsc{hvp-m}        & 2.06e-6 & 1.72e-5 & 1.67e-6 & 7.15e-7 & 3.54e-7 & 3.29e-6 & 3.06e-7 & 1.47e-7 \\
  & \textsc{hvp-s}        & 2.06e-6 & 1.72e-5 & 1.67e-6 & 7.15e-7 & 3.54e-7 & 3.29e-6 & 3.06e-7 & 1.47e-7 \\
  & \textsc{FlashBoB}    & 3.34e-6 & 1.26e-5 & 2.15e-6 & 1.97e-6 & 5.75e-7 & 2.42e-6 & 3.94e-7 & 4.04e-7 \\
  & \textsc{3-pass bob}   & 2.15e-6 & 1.00e-5 & 2.15e-6 & 1.31e-6 & 3.69e-7 & 1.92e-6 & 3.94e-7 & 2.69e-7 \\
\midrule
\multirow{5}{*}{8\,192}
  & \textsc{math}         & 0       & 0       & 0       & 0       & 0       & 0       & 0       & 0       \\
  & \textsc{hvp-m}        & 2.33e-6 & 2.38e-5 & 1.67e-6 & 1.07e-6 & 6.33e-7 & 4.76e-6 & 4.12e-7 & 2.79e-7 \\
  & \textsc{hvp-s}        & 2.33e-6 & 2.38e-5 & 1.67e-6 & 1.07e-6 & 6.33e-7 & 4.76e-6 & 4.12e-7 & 2.79e-7 \\
  & \textsc{FlashBoB}    & 4.29e-6 & 2.11e-5 & 2.15e-6 & 2.41e-6 & 1.17e-6 & 4.22e-6 & 5.30e-7 & 6.28e-7 \\
  & \textsc{3-pass bob}   & 2.74e-6 & 2.00e-5 & 1.91e-6 & 1.43e-6 & 7.47e-7 & 4.00e-6 & 4.71e-7 & 3.72e-7 \\
\bottomrule
\end{tabular}
\end{table}

\section{Extended isolated-kernel scaling}
\label{app:extended-kernel-scaling}

Table~\ref{tab:attn-kernel-extended} reports the full isolated
attention-layer BoB scaling sweep corresponding to
Table~\ref{tab:attn-kernel}. The main text shows the range where all
PyTorch baselines are still informative; here we include the longer
sequence lengths to show the full feasibility frontier. Once the
materializing \textsc{math} backend and the GradMem-derived HVP baselines
exhaust memory, \textsc{FlashBoB} continues to scale exactly to
$N=262{,}144$ on a single A100 80GB GPU.

\begin{table}[t]
\centering\small
\caption{Extended attention-layer BoB scaling on NVIDIA A100 80GB.
Shape: GPT-2 Small attention, batch 4, 12 heads, head dimension 64,
causal mask, BF16 inputs with FP32 reductions. OOM denotes out of memory;
best wall-clock time and lowest peak memory are bold.}
\label{tab:attn-kernel-extended}
\begin{tabular}{@{}r rrrr rrrr@{}}
\toprule
& \multicolumn{4}{c}{Wall-clock time (ms)} & \multicolumn{4}{c}{Peak memory (MiB)} \\
\cmidrule(lr){2-5}\cmidrule(lr){6-9}
$N$ & \textsc{math} & \textsc{hvp-m} & \textsc{hvp-s} & \textsc{FlashBoB}
& \textsc{math} & \textsc{hvp-m} & \textsc{hvp-s} & \textsc{FlashBoB} \\
\midrule
     256 &    1.82 &    1.58 &    1.66 & \textbf{0.94}
         &       212 &       114 &       116 &         \textbf{95} \\
     512 &    4.95 &    2.31 &    2.33 & \textbf{1.29}
         &       668 &       315 &       318 &        \textbf{171} \\
  1\,024 &   17.66 &    7.39 &    6.41 & \textbf{2.57}
         &    2\,375 &    1\,045 &    1\,051 &        \textbf{323} \\
  2\,048 &   68.80 &   28.34 &   22.73 & \textbf{7.53}
         &    8\,957 &    3\,805 &    3\,817 &        \textbf{630} \\
  4\,096 &  265.67 &  114.89 &   89.14 & \textbf{25.80}
         &   34\,794 &   14\,521 &   14\,546 &     \textbf{1\,243} \\
  8\,192 &     OOM &  468.93 &  356.84 & \textbf{95.11}
         &        -- &   55\,971 &   56\,020 &     \textbf{1\,702} \\
 16\,384 &     OOM &     OOM &     OOM & \textbf{365.73}
         &        -- &        -- &        -- &     \textbf{3\,372} \\
 32\,768 &     OOM &     OOM &     OOM & \textbf{1\,449.06}
         &        -- &        -- &        -- &     \textbf{6\,744} \\
 65\,536 &     OOM &     OOM &     OOM & \textbf{5\,843.14}
         &        -- &        -- &        -- &    \textbf{13\,488} \\
131\,072 &     OOM &     OOM &     OOM & \textbf{23\,528.87}
         &        -- &        -- &        -- &    \textbf{26\,976} \\
262\,144 &     OOM &     OOM &     OOM & \textbf{94\,853.72}
         &        -- &        -- &        -- &    \textbf{53\,952} \\
\bottomrule
\end{tabular}
\end{table}

\section{Full GPT-2 benchmark sweeps}
\label{app:gpt2-full}

Full sweeps backing Table~\ref{tab:gpt2-bench}. For each $N$, we report wall time (ms)
and peak memory (MiB) for all four methods. OOM marks out-of-memory failures on the
reference platform.

\begin{table}[t]
\centering\scriptsize
\caption{GPT-2 Small, batch 4, BF16 inputs with FP32 reductions. Hardware: NVIDIA B200 (192 GiB HBM); OOM denotes exceeding the 192 GiB budget.}
\begin{tabular}{@{}r rrrr rrrr@{}}
\toprule
& \multicolumn{4}{c}{Wall time (ms)} & \multicolumn{4}{c}{Peak memory (MiB)} \\
\cmidrule(lr){2-5}\cmidrule(lr){6-9}
$N$ & \textsc{math} & \textsc{hvp-m} & \textsc{hvp-s} & \textsc{FlashBoB} & \textsc{math} & \textsc{hvp-m} & \textsc{hvp-s} & \textsc{FlashBoB} \\
\midrule
 512    &   46.5 &   48.0 &   49.5 &   43.0 &   4\,932 &   3\,956 &   3\,981 &   3\,983 \\
1\,024  &   58.7 &   58.6 &   57.5 &   48.6 &   9\,605 &   5\,913 &   5\,956 &   5\,957 \\
2\,048  &  134.9 &  146.3 &  114.6 &   84.4 &  24\,091 &   9\,830 &   9\,915 &   9\,924 \\
4\,096  &  397.7 &  417.5 &  304.0 &  190.9 &  79\,058 &  17\,649 &  17\,804 &  17\,804 \\
8\,192  &    OOM & 1537.5 & 1017.3 &  509.1 &       -- &  46\,965 &  47\,267 &  33\,604 \\
16\,384 &    OOM & 4974.7 & 3421.6 & 1576.4 &       -- & 147\,483 & 148\,082 &  65\,194 \\
32\,768 &    OOM &    OOM &    OOM & 5368.3 &       -- &       -- &       -- & 128\,392 \\
\bottomrule
\end{tabular}
\end{table}

\begin{table}[H]
\centering\scriptsize
\caption{GPT-2 Medium, batch 4, BF16 inputs with FP32 reductions. Hardware: NVIDIA B200 (192 GiB HBM); OOM denotes exceeding the 192 GiB budget.}
\begin{tabular}{@{}r rrrr rrrr@{}}
\toprule
& \multicolumn{4}{c}{Wall time (ms)} & \multicolumn{4}{c}{Peak memory (MiB)} \\
\cmidrule(lr){2-5}\cmidrule(lr){6-9}
$N$ & \textsc{math} & \textsc{hvp-m} & \textsc{hvp-s} & \textsc{FlashBoB} & \textsc{math} & \textsc{hvp-m} & \textsc{hvp-s} & \textsc{FlashBoB} \\
\midrule
 512    &   88.4 &  102.8 &  104.0 &   81.9 &  10\,443 &   7\,851 &   7\,900 &   7\,900 \\
1\,024  &  125.6 &  125.2 &  120.9 &   98.8 &  20\,562 &  10\,769 &  10\,869 &  10\,869 \\
2\,048  &  312.6 &  343.0 &  255.9 &  177.2 &  55\,203 &  16\,612 &  16\,810 &  16\,810 \\
4\,096  &    OOM & 1028.5 &  726.1 &  420.9 &       -- &  30\,480 &  30\,876 &  28\,677 \\
8\,192  &    OOM & 3944.0 & 2557.1 & 1196.6 &       -- &  74\,523 &  75\,315 &  52\,429 \\
16\,384 &    OOM &    OOM &    OOM & 3878.9 &       -- &       -- &       -- &  99\,912 \\
\bottomrule
\end{tabular}
\end{table}

\begin{table}[H]
\centering\scriptsize
\caption{GPT-2 Large, batch 4, BF16 inputs with FP32 reductions. Hardware: NVIDIA B200 (192 GiB HBM); OOM denotes exceeding the 192 GiB budget.}
\begin{tabular}{@{}r rrrr rrrr@{}}
\toprule
& \multicolumn{4}{c}{Wall time (ms)} & \multicolumn{4}{c}{Peak memory (MiB)} \\
\cmidrule(lr){2-5}\cmidrule(lr){6-9}
$N$ & \textsc{math} & \textsc{hvp-m} & \textsc{hvp-s} & \textsc{FlashBoB} & \textsc{math} & \textsc{hvp-m} & \textsc{hvp-s} & \textsc{FlashBoB} \\
\midrule
 512    &  131.3 &  135.4 &  139.3 &  122.3 &  19\,720 &  14\,856 &  14\,956 &  14\,965 \\
1\,024  &  215.4 &  216.6 &  192.7 &  154.4 &  37\,496 &  19\,120 &  19\,335 &  19\,342 \\
2\,048  &  562.2 &  614.6 &  450.4 &  304.3 & 100\,554 &  27\,666 &  28\,040 &  28\,045 \\
4\,096  &    OOM & 1879.6 & 1313.4 &  740.3 &       -- &  49\,205 &  49\,968 &  45\,457 \\
8\,192  &    OOM & 7322.2 & 4721.8 & 2162.8 &       -- & 110\,927 & 112\,414 &  80\,303 \\
16\,384 &    OOM &    OOM &    OOM & 7106.5 &       -- &       -- &       -- & 150\,019 \\
\bottomrule
\end{tabular}
\end{table}

\section{Sliding-window attention}
\label{app:swa}

Sliding-window attention (SWA) is supported by a small extension to the inner-loop
column range: for each row tile, the streamed column tiles are restricted to those
overlapping the window. The affine-fusion structure of Theorem~\ref{prop:affine} is
unchanged ($\alpha_i, E_i^{\circ}, B_i, \td{dO}_i^{\circ}, R_i, \Omega_i$ are $\alpha$-free accumulators
over the now-shorter rowwise supports), so both passes preserve exact semantics and
the two-pass schedule carries over without modification.

Table~\ref{tab:swa} reports an end-to-end second-order step on GPT-2 Small with
window $w = N/4$, comparing \textsc{FlashBoB} against the PyTorch \textsc{math} reference. We
report only $N \geq 2048$: at shorter lengths, non-attention work and launch overhead
dominate the step, and the gains there are in the measurement noise floor.

\begin{table}[t]
\centering\scriptsize
\setlength{\tabcolsep}{3.5pt}
\caption{GPT-2 Small with sliding-window attention, $w = N/4$. Hardware: NVIDIA A100 80GB. Wall time reports the median $\pm$ half of the 20th-to-80th percentile spread. OOM denotes out of memory.}
\label{tab:swa}
\begin{tabular}{@{}rr cc c rr c@{}}
\toprule
& & \multicolumn{2}{c}{Wall time (ms)} & & \multicolumn{2}{c}{Peak memory (MiB)} & \\
\cmidrule(lr){3-4}\cmidrule(lr){6-7}
$N$ & $w$ & \textsc{math} & \textsc{FlashBoB} & Speedup & \textsc{math} & \textsc{FlashBoB} & Mem.\ ratio \\
\midrule
 2{,}048 &   512 & \(175.376 \pm 0.111\)  & \(136.254 \pm 0.125\)  & \(1.29\times\) &  \(8{,}366.84\) &  \(5{,}007.09\) & \(1.67\times\) \\
 4{,}096 & 1{,}024 & \(432.310 \pm 0.104\)  & \(218.646 \pm 0.113\)  & \(1.98\times\) & \(21{,}809.62\) &  \(9{,}019.05\) & \(2.42\times\) \\
 8{,}192 & 2{,}048 & \(1{,}330.003 \pm 0.089\) & \(457.850 \pm 0.061\) & \(2.90\times\) & \(66{,}801.47\) & \(17{,}040.55\) & \(3.92\times\) \\
16{,}384 & 4{,}096 & OOM & \(1{,}136.572 \pm 0.172\) & -- & OOM & \(33{,}104.94\) & -- \\
\bottomrule
\end{tabular}
\end{table}

The speedup and memory ratios grow with $N$ exactly as in the dense setting.

\section{Head-dimension sensitivity}
\label{app:head-dim-sweep}

The main isolated-kernel benchmarks use GPT-2-style attention heads with
\(d=64\). Since both the I/O analysis and the kernel implementation are
parametric in \(d\), we additionally test whether the advantage of
\textsc{FlashBoB} persists as the head dimension changes. Increasing
\(d\) makes the problem harder for a FlashAttention-style schedule: each
tile carries larger \(Q,K,V,dO\)-like blocks, the feasible tile size
decreases for a fixed on-chip memory budget, and register/SRAM pressure
increases. The goal of this sweep is therefore not to introduce a new
sequence-length scaling benchmark, but to check that the memory and
runtime advantages are not an artifact of the \(d=64\) setting.

FlashAttention head-dimension support has followed the same
engineering pattern. FlashAttention-1 supported head dimensions up to
$d = 128$, while FlashAttention-2 extends the CUDA implementation to
$d \le 256$ \citep{dao2022flashattention,dao2023flashattention2}. Our
main experiments use the GPT-2-style setting $d = 64$, and
Table~\ref{tab:head-dim-sweep} additionally evaluates
$d \in \{32, 64, 128\}$.

Extending the optimized \textsc{FlashBoB} kernels to the larger
$d = 256$ regime is therefore an engineering target rather than a
change to the algorithm: the two-pass schedule and the I/O analysis of
Section~\ref{sec:io} are parametric in $d$, and Theorem~\ref{thm:io}
applies unchanged. The binding constraints at $d = 256$ are
implementation-level. First, register pressure: the $B_r \times d$
row-resident tiles ($Q_i, dO_i, U_{Q,i}, O_i$) and the four
$B_r \times d$ row accumulators of Proposition~\ref{prop:affine}
($\td{dO}^{\circ}, \Omega, B, R$) compete for the same register file
inside Pass~1, and the per-thread register budget shrinks linearly in
$d$. Second, shared-memory pressure: the simultaneously live
interaction-shaped tiles ($P_{ij}, dP_{ij}, F_{ij}, C_{ij},
\td{P}_{ij}^{\circ}$) scale as $B_r B_c$ and force a smaller $B_c$ at
larger $d$, which raises the streamed-tile re-load count proportionally.
Third, tensor-core tiling: the native FlashAttention-3 pipelines on
Hopper assume specific $(B_r, B_c, d)$ shape tuples that do not transfer
verbatim to the BoB schedule, so a $d = 256$ port would require
re-deriving the warp specialization and asynchronous load schedules used
by FlashAttention-3 and FlashAttention-4. Numerical issues are not the
binding constraint at $d = 256$ in our preliminary experiments; the same
BF16-input, FP32-accumulator pattern used at $d = 64, 128$ extends
without precision loss. We view a $d = 256$ specialization as the
clearest follow-up systems work.

\begin{table}[t]
\centering\small
\caption{Head-dimension sensitivity for isolated attention BoB on
NVIDIA A100 80GB. We fix \(N=4096\), batch size \(4\), number of heads
\(12\), causal mask, BF16 inputs, and FP32 reductions, and vary only the
head dimension \(d\). Times are medians from \texttt{triton.testing.do\_bench} with
\texttt{warmup=25} and \texttt{rep=100}; memory is peak allocated GPU
memory in MiB.}
\label{tab:head-dim-sweep}
\begin{tabular}{@{}r l r r r r@{}}
\toprule
\(d\) & Method & Time (ms) & Peak memory (MiB) & Time / \textsc{FlashBoB} & Mem. / \textsc{FlashBoB} \\
\midrule
\multirow{3}{*}{32}
  & \textsc{FlashBoB} & \(\mathbf{14.214}\) & \(\mathbf{423.0}\) & \(1.00\times\) & \(1.00\times\) \\
  & \textsc{hvp-m}    & \(111.312\) & \(14{,}005.4\) & \(7.83\times\) & \(33.11\times\) \\
  & \textsc{hvp-s}    & \(82.393\)  & \(14{,}010.0\) & \(5.80\times\) & \(33.12\times\) \\
\midrule
\multirow{3}{*}{64}
  & \textsc{FlashBoB} & \(\mathbf{25.676}\) & \(\mathbf{843.0}\) & \(1.00\times\) & \(1.00\times\) \\
  & \textsc{hvp-m}    & \(114.726\) & \(14{,}137.4\) & \(4.47\times\) & \(16.77\times\) \\
  & \textsc{hvp-s}    & \(89.032\)  & \(14{,}154.0\) & \(3.47\times\) & \(16.79\times\) \\
\midrule
\multirow{3}{*}{128}
  & \textsc{FlashBoB} & \(\mathbf{81.694}\) & \(\mathbf{1{,}683.0}\) & \(1.00\times\) & \(1.00\times\) \\
  & \textsc{hvp-m}    & \(120.029\) & \(14{,}401.4\) & \(1.47\times\) & \(8.56\times\) \\
  & \textsc{hvp-s}    & \(103.995\) & \(14{,}442.0\) & \(1.27\times\) & \(8.58\times\) \\
\bottomrule
\end{tabular}
\end{table}

Table~\ref{tab:head-dim-sweep} shows that \textsc{FlashBoB} remains the
fastest and most memory-efficient method at every tested head dimension.
The runtime advantage is largest at smaller head dimensions:
\textsc{FlashBoB} is \(7.83\times\) faster than \textsc{hvp-m} and
\(5.80\times\) faster than \textsc{hvp-s} at \(d=32\). As \(d\)
increases, the speedup narrows, reaching \(1.47\times\) over
\textsc{hvp-m} and \(1.27\times\) over \textsc{hvp-s} at \(d=128\).
This trend is expected: larger head dimensions reduce the amount of
sequence tiling that fits on chip and increase register pressure, so the
constant-factor benefit of the two-pass schedule becomes smaller.

The memory result is more stable. The HVP baselines allocate roughly
\(14\) GiB across the sweep, while \textsc{FlashBoB} uses only
\(423\) MiB at \(d=32\), \(843\) MiB at \(d=64\), and \(1{,}683\) MiB at
\(d=128\). Thus, even at the largest tested head dimension, the HVP baselines require
about \(8.6\times\) more peak memory than \textsc{FlashBoB}. This is the key
systems point: increasing \(d\) makes the kernel
more expensive, but it does not change the
structural advantage of avoiding intermediate
storage. The sweep therefore supports the claim that the memory benefit
comes from the FlashAttention-style execution model itself rather than
from a head-dimension-specific tuning artifact.

\section{FineWeb training reproducibility}
\label{app:fineweb-training-repro}

This section reports the full configuration for the 1B-token FineWeb
training experiment in Section~\ref{sec:fineweb}. The main comparison is
between two Sophia-H runs: one using the PyTorch math attention backend
for exact attention BoB and one using \textsc{FlashBoB}. These two runs
use the same model, dataset stream, seed, optimizer hyperparameters,
Hessian-estimation schedule, batch geometry, precision, and token
budget. The only intended difference is the attention BoB backend used
during Hessian estimation.

We use the Hugging Face dataset identifier
\texttt{VisionTheta/fineweb-1B}, which is a 1B-token prepared subset
derived from the FineWeb corpus of~\citet{penedo2024fineweb}.
The license and source attribution therefore follow the upstream
FineWeb dataset, while the exact artifact used for these experiments is
the \texttt{VisionTheta/fineweb-1B} Hugging Face dataset. We report the
dataset identifier explicitly to make the data order and preprocessing
reproducible.

All FineWeb curves in Figure~\ref{fig:fineweb_parity} are single runs
with seed \(17\). The AdamW curve is included as a first-order optimizer
reference, not as a backend-equivalence experiment. To avoid comparing
against a weak AdamW baseline, we selected the plotted AdamW curve from
a small learning-rate sweep, described below.

\begin{table}[t]
\centering\small
\caption{Shared data, batching, and evaluation configuration for the
FineWeb GPT-2 Small training runs.}
\label{tab:fineweb-shared-config}
\begin{tabular}{@{}ll@{}}
\toprule
Setting & Value \\
\midrule
Dataset & \texttt{VisionTheta/fineweb-1B} \\
Dataset name & \texttt{default} \\
Dataset mode & streaming \\
Tokenizer & \texttt{gpt2} \\
Sequence length & \(2048\) \\
Training token budget & \(1{,}000{,}000{,}000\) tokens \\
Maximum optimizer steps & \(1908\) \\
Seed & \(17\) \\
Precision & bfloat16 \\
Number of GPUs & \(8\) \\
Per-GPU batch size & \(2\) sequences \\
Gradient accumulation steps & \(16\) \\
Effective global batch size & \(256\) sequences \\
Local tokens per optimizer step & \(65{,}536\) \\
Global tokens per optimizer step & \(524{,}288\) \\
Evaluation interval & every \(250\) optimizer steps \\
Evaluation batches & \(32\) \\
Evaluation batch size & \(2\) per GPU \\
Prefetch factor & \(2\) \\
\bottomrule
\end{tabular}
\end{table}

\begin{table}[t]
\centering\small
\caption{GPT-2 Small model configuration used for the FineWeb training
runs.}
\label{tab:fineweb-gpt2-config}
\begin{tabular}{@{}ll@{}}
\toprule
Setting & Value \\
\midrule
Preset & \texttt{gpt2} \\
Layers & \(12\) \\
Attention heads & \(12\) \\
Embedding dimension & \(768\) \\
Context length & \(2048\) \\
Vocabulary size & \(50{,}257\) \\
Bias terms & enabled \\
Total parameters & \(125{,}226{,}240\) \\
Attention parameters & \(28{,}348{,}416\) \\
Non-attention parameters & \(96{,}877{,}824\) \\
Embedding parameters & \(40{,}170{,}240\) \\
Attention parameter fraction & \(22.64\%\) \\
\bottomrule
\end{tabular}
\end{table}

\begin{table}[t]
\centering\small
\caption{Sophia-H optimizer and Hessian-estimation configuration. These
settings are identical for the PyTorch math-backend and
\textsc{FlashBoB} Sophia-H runs.}
\label{tab:fineweb-sophiah-optimizer}
\begin{tabular}{@{}ll@{}}
\toprule
Setting & Value \\
\midrule
Optimizer & Sophia-H \\
Learning rate & \(6.0{\times}10^{-4}\) \\
Minimum learning rate & \(3.0{\times}10^{-5}\) \\
\(\beta_1\) & \(0.96\) \\
\(\beta_2\) & \(0.99\) \\
\(\epsilon\) & \(10^{-12}\) \\
\(\gamma\) & \(0.01\) \\
Weight decay & \(0.2\) \\
Gradient clipping & \(1.0\) \\
Hessian-estimation interval & every \(10\) optimizer steps \\
Hutchinson samples & \(1\) \\
Hutchinson batch size & \(32\) \\
Warmup steps & \(2000\) \\
Maximum optimizer steps & \(1908\) \\
\bottomrule
\end{tabular}
\end{table}

The Sophia-H learning-rate schedule is intentionally kept exactly as
used in the best observed Sophia-H configuration from our tuning runs.
Although the configured warmup length of \(2000\) steps exceeds the
\(1908\)-step 1B-token run, this setting gave the best observed
Sophia-H behavior and is held fixed for both the PyTorch math backend
and \textsc{FlashBoB} backend runs. Therefore, the comparison between
the two Sophia-H curves isolates the effect of replacing the
materializing math attention BoB path with \textsc{FlashBoB}.

\begin{table}[t]
\centering\small
\caption{Sophia-H backend comparison. The two runs differ only in the
attention BoB backend used during Hessian estimation.}
\label{tab:fineweb-sophiah-backends}
\begin{tabular}{@{}lll@{}}
\toprule
Run name & Optimizer & Attention BoB backend \\
\midrule
\texttt{sophiah-math-ddp} & Sophia-H & PyTorch math attention backend \\
\texttt{sophiah-bob-ddp} & Sophia-H & \textsc{FlashBoB} \\
\bottomrule
\end{tabular}
\end{table}

\paragraph{AdamW reference.}
Figure~\ref{fig:fineweb_parity} also includes an AdamW reference curve.
This run is not used to test backend equivalence; it is included as a
familiar first-order optimizer reference for the same 1B-token FineWeb
setup. To avoid comparing against a poorly tuned AdamW baseline, we ran
a small learning-rate sweep over peak learning rates
\(\{3.0{\times}10^{-4}, 4.5{\times}10^{-4}, 6.0{\times}10^{-4},
8.0{\times}10^{-4}, 1.0{\times}10^{-3}\}\). Each sweep point used AdamW
with \(\beta_1=0.9\), \(\beta_2=0.95\), \(\epsilon=10^{-8}\), weight
decay \(0.1\), gradient clipping \(1.0\), linear warmup followed by
cosine decay, and minimum learning rate \(0.1\) times the peak learning
rate. The AdamW curve in Figure~\ref{fig:fineweb_parity} uses the
best-performing configuration from this sweep.

\begin{table}[t]
\centering\small
\caption{AdamW learning-rate sweep used to select the reference AdamW
curve in Figure~\ref{fig:fineweb_parity}. All runs use AdamW with
\(\beta_1=0.9\), \(\beta_2=0.95\), \(\epsilon=10^{-8}\), weight decay
\(0.1\), gradient clipping \(1.0\), linear warmup followed by cosine
decay, and minimum learning rate \(0.1\) times the peak learning rate.}
\label{tab:fineweb-adamw-lr-sweep}
\begin{tabular}{@{}rlll@{}}
\toprule
Run & Peak LR & Min LR & Role \\
\midrule
1 & \(3.0{\times}10^{-4}\) & \(3.0{\times}10^{-5}\) & Conservative baseline \\
2 & \(4.5{\times}10^{-4}\) & \(4.5{\times}10^{-5}\) & Stable intermediate setting \\
3 & \(6.0{\times}10^{-4}\) & \(6.0{\times}10^{-5}\) & Canonical anchor \\
4 & \(8.0{\times}10^{-4}\) & \(8.0{\times}10^{-5}\) & Aggressive setting \\
5 & \(1.0{\times}10^{-3}\) & \(1.0{\times}10^{-4}\) & Highest-LR stress test \\
\bottomrule
\end{tabular}
\end{table}

\begin{table}[t]
\centering\small
\caption{Selected AdamW reference-run configuration used for the
FineWeb plot. This was the best-performing run from the sweep in
Table~\ref{tab:fineweb-adamw-lr-sweep}.}
\label{tab:fineweb-adamw-reference}
\begin{tabular}{@{}ll@{}}
\toprule
Setting & Value \\
\midrule
Run name & \texttt{fineweb\_adamw\_lr\_1em3} \\
Optimizer & AdamW \\
Attention backend & FlashAttention backend \\
Seed & \(17\) \\
Learning-rate schedule & linear warmup followed by cosine decay \\
Peak learning rate & \(1.0{\times}10^{-3}\) \\
Minimum learning rate & \(1.0{\times}10^{-4}\) \\
Warmup steps & \(150\) \\
Maximum optimizer steps & \(1908\) \\
\(\beta_1\) & \(0.9\) \\
\(\beta_2\) & \(0.95\) \\
\(\epsilon\) & \(10^{-8}\) \\
Weight decay & \(0.1\) \\
Gradient clipping & \(1.0\) \\
Precision & bfloat16 \\
Compilation & disabled \\
Per-GPU batch size & \(2\) sequences \\
Gradient accumulation steps & \(16\) \\
Effective global batch size & \(256\) sequences \\
Global tokens per optimizer step & \(524{,}288\) \\
Training token budget & \(1{,}000{,}000{,}000\) tokens \\
Evaluation interval & every \(250\) optimizer steps \\
Evaluation batches & \(32\) \\
\bottomrule
\end{tabular}
\end{table}

The AdamW reference should therefore be interpreted as a tuned
first-order baseline for context, while the central systems comparison
is between the two matched Sophia-H runs. In that comparison, the
optimizer, model, data stream, seed, precision, token budget, batch
geometry, and Hessian-estimation schedule are fixed, and the exact
attention BoB implementation is changed from the PyTorch math backend to
\textsc{FlashBoB}.

\section{Detailed comparison with FlashBack}
\label{app:flashback-comparison}

This appendix expands on the main-text comparison in \S\ref{sec:attn-kernel}.
FlashBack~\citep{engstrom2024flashback} fuses the entire BoB into a single
GPU kernel parallelized over rows, runs two sequential inner loops over
column tiles per row block, and accumulates the column-owned outputs
$(\td{K}, \td{V})$ through global atomic adds. \textsc{FlashBoB} instead
splits the work into a row-major Pass~1 that finalizes
$(\td{Q}, \td{dO}, \alpha, E)$ via affine fusion (Proposition~\ref{prop:affine}),
and a column-major Pass~2 that finalizes $(\td{K}, \td{V})$ as a natural
column reduction. Both schedules recompute the probability tile $P$ exactly
twice across the BoB; they differ in how that recomputation work is
distributed, in how column-owned outputs are accumulated, and in how much
state is simultaneously live on chip.

\begin{center}
\small
\begin{tabular}{@{}lll@{}}
\toprule
Aspect & FlashBack & \textsc{FlashBoB} \\
\midrule
Kernel launches & 1 (single fused) & 2 (row-major + column-major) \\
$\td{K}, \td{V}$ writes & \texttt{atomic\_add} & natural column reduction (no atomics) \\
Affine fusion of $\alpha$ & no & yes \\
Row-side work in first sweep & scalar summaries only & all row outputs $(\td{Q}, \td{dO}, E)$ \\
On-chip row scalars & kept on chip (no HBM round trip) & written and re-read via $(\alpha, E)$ \\
\bottomrule
\end{tabular}
\end{center}

The split-kernel design distributes register pressure across two launches,
eliminates atomic writes on $(\td{K}, \td{V})$, and uses
Proposition~\ref{prop:affine} to extract more useful work from the first
sweep. The trade is a small HBM cost for the two $N$-vectors $(\alpha, E)$
between launches.

Table~\ref{tab:flashback} reports steady-state runtimes on an NVIDIA
A6000 48GB after warmup. \textsc{FlashBoB} is faster at every tested
length, with gains from $2.07\times$ at $N=256$ to a peak of $6.32\times$
at $N=4{,}096$, and a stable $3.5$ to $3.7\times$ advantage at the
longest lengths where both kernels remain in the regime dominated by
tile-interaction I/O. Two architectural differences explain the gap.
First, the column-major Pass~2 of \textsc{FlashBoB} accumulates
$(\td{K}, \td{V})$ in registers across the full sweep and writes once,
whereas FlashBack must serialize concurrent updates to the same column
indices through global atomic adds. Second, the affine fusion of
Proposition~\ref{prop:affine} extracts more useful work per
probability-tile recomputation: FlashBack's first inner loop retains
only scalar row sums and recomputes the $d$-vector products from scratch
in its second loop, whereas \textsc{FlashBoB}'s Pass~1 accumulates both
the scalar sums and all $d$-vector correction terms in a single sweep.

FlashBack was benchmarked through \(N=65{,}536\) on A6000. We did not
evaluate FlashBack at \(N=131{,}072\) or \(N=262{,}144\) on that setup;
the larger \textsc{FlashBoB}-only feasibility numbers in
Table~\ref{tab:attn-kernel-extended} are A100 80GB results and are not
shared-hardware FlashBack comparisons.

We report shared-hardware FlashBack wall time but not FlashBack peak
memory. The public FlashBack path runs through a different JAX/XLA
allocation stack, whose process-level memory reservation is not directly
comparable to \texttt{torch.cuda.max\_memory\_allocated} used for the
PyTorch and \textsc{FlashBoB} measurements. We therefore restrict the
FlashBack comparison to wall time and report memory comparisons only for
baselines measured through the same PyTorch/CUDA allocation interface.

\begin{table}[t]
\centering\small
\caption{\textsc{FlashBoB} vs.\ FlashBack on NVIDIA A6000 48GB,
steady-state wall time after warmup (GPT-2 Small shape, batch 4, BF16
inputs with FP32 reductions).}
\label{tab:flashback}
\begin{tabular}{@{}r rr r@{}}
\toprule
$N$ & \textsc{FlashBoB} (ms) & FlashBack (ms) & Ratio \\
\midrule
     256 &       0.94 &       1.94 & $2.07\times$ \\
     512 &       1.52 &       6.22 & $4.09\times$ \\
 1\,024 &       3.80 &      22.16 & $5.83\times$ \\
 2\,048 &      11.73 &      67.25 & $5.74\times$ \\
 4\,096 &      41.51 &     262.47 & $6.32\times$ \\
16\,384 &     651.28 &  2\,421.36 & $3.72\times$ \\
32\,768 &  2\,682.60 &  9\,659.78 & $3.60\times$ \\
65\,536 & 10\,729.20 & 38\,442.11 & $3.58\times$ \\
131\,072 & -- & -- & not evaluated \\
262\,144 & -- & -- & not evaluated \\
\bottomrule
\end{tabular}
\end{table}

\section{Broader long-context systems context}
\label{app:broader-systems-context}

\textsc{FlashBoB} targets the exact single-device attention BoB primitive,
which is complementary to distributed and inference-oriented long-context
attention systems. Sequence-parallel systems shard long-context attention
across devices~\citep{jacobs2023deepspeedulysses,liu2024ringattention,li2024distflashattn,fang2024usp},
while inference systems optimize KV-cache layout, request scheduling, and
serving-time attention kernels~\citep{kwon2023pagedattention,ye2025flashinfer}.
Other long-sequence kernel systems similarly show that algorithmic
restructuring and kernel fusion can make nonstandard sequence operators
hardware-efficient~\citep{fu2024flashfftconv}. These directions are
orthogonal to \textsc{FlashBoB}: they change distributed execution,
serving-time memory layout, or the sequence operator, whereas our work
keeps exact softmax attention and changes the local schedule for
backward-over-backward.

\newpage
\input{checklist.tex}

\end{document}

%% file: checklist.tex
\section*{NeurIPS Paper Checklist}

\begin{enumerate}

\item {\bf Claims}
    \item[] Question: Do the main claims made in the abstract and introduction accurately reflect the paper's contributions and scope?
\item[] Answer: \answerYes{} \item[] Justification: Yes. All claims made in the abstract and introduction were taken from experimental results, which formed the basis of our contributions.
    \item[] Guidelines:
    \begin{itemize}
        \item The answer \answerNA{} means that the abstract and introduction do not include the claims made in the paper.
        \item The abstract and/or introduction should clearly state the claims made, including the contributions made in the paper and important assumptions and limitations. A \answerNo{} or \answerNA{} answer to this question will not be perceived well by the reviewers. 
        \item The claims made should match theoretical and experimental results, and reflect how much the results can be expected to generalize to other settings. 
        \item It is fine to include aspirational goals as motivation as long as it is clear that these goals are not attained by the paper. 
    \end{itemize}

\item {\bf Limitations}
    \item[] Question: Does the paper discuss the limitations of the work performed by the authors?
\item[] Answer: \answerYes{} \item[] Justification: We make sure to state all the current limitations of the project, so that all stakeholders can be privy of them. We believe that some of these limitations are exciting future work, so we ensured that we stated them exactly. 
    \item[] Guidelines:
    \begin{itemize}
        \item The answer \answerNA{} means that the paper has no limitation while the answer \answerNo{} means that the paper has limitations, but those are not discussed in the paper. 
        \item The authors are encouraged to create a separate ``Limitations'' section in their paper.
        \item The paper should point out any strong assumptions and how robust the results are to violations of these assumptions (e.g., independence assumptions, noiseless settings, model well-specification, asymptotic approximations only holding locally). The authors should reflect on how these assumptions might be violated in practice and what the implications would be.
        \item The authors should reflect on the scope of the claims made, e.g., if the approach was only tested on a few datasets or with a few runs. In general, empirical results often depend on implicit assumptions, which should be articulated.
        \item The authors should reflect on the factors that influence the performance of the approach. For example, a facial recognition algorithm may perform poorly when image resolution is low or images are taken in low lighting. Or a speech-to-text system might not be used reliably to provide closed captions for online lectures because it fails to handle technical jargon.
        \item The authors should discuss the computational efficiency of the proposed algorithms and how they scale with dataset size.
        \item If applicable, the authors should discuss possible limitations of their approach to address problems of privacy and fairness.
        \item While the authors might fear that complete honesty about limitations might be used by reviewers as grounds for rejection, a worse outcome might be that reviewers discover limitations that aren't acknowledged in the paper. The authors should use their best judgment and recognize that individual actions in favor of transparency play an important role in developing norms that preserve the integrity of the community. Reviewers will be specifically instructed to not penalize honesty concerning limitations.
    \end{itemize}

\item {\bf Theory assumptions and proofs}
    \item[] Question: For each theoretical result, does the paper provide the full set of assumptions and a complete (and correct) proof?
\item[] Answer: \answerYes{} \item[] Justification: We state all assumptions and ensure that we have a complete proof to accompany our theoretical results. We also ensure that the proofs are easy to read and self-contained.
    \item[] Guidelines:
    \begin{itemize}
        \item The answer \answerNA{} means that the paper does not include theoretical results. 
        \item All the theorems, formulas, and proofs in the paper should be numbered and cross-referenced.
        \item All assumptions should be clearly stated or referenced in the statement of any theorems.
        \item The proofs can either appear in the main paper or the supplemental material, but if they appear in the supplemental material, the authors are encouraged to provide a short proof sketch to provide intuition. 
        \item Inversely, any informal proof provided in the core of the paper should be complemented by formal proofs provided in appendix or supplemental material.
        \item Theorems and Lemmas that the proof relies upon should be properly referenced. 
    \end{itemize}

    \item {\bf Experimental result reproducibility}
    \item[] Question: Does the paper fully disclose all the information needed to reproduce the main experimental results of the paper to the extent that it affects the main claims and/or conclusions of the paper (regardless of whether the code and data are provided or not)?
\item[] Answer: \answerYes{} \item[] Justification: We specify all that is needed to reproduce all our main claims, including a very detailed pseudocode of our Triton kernel. We also specify the exact setup and software configuration needed to reproduce our results.  
    \item[] Guidelines:
    \begin{itemize}
        \item The answer \answerNA{} means that the paper does not include experiments.
        \item If the paper includes experiments, a \answerNo{} answer to this question will not be perceived well by the reviewers: Making the paper reproducible is important, regardless of whether the code and data are provided or not.
        \item If the contribution is a dataset and\slash or model, the authors should describe the steps taken to make their results reproducible or verifiable. 
        \item Depending on the contribution, reproducibility can be accomplished in various ways. For example, if the contribution is a novel architecture, describing the architecture fully might suffice, or if the contribution is a specific model and empirical evaluation, it may be necessary to either make it possible for others to replicate the model with the same dataset, or provide access to the model. In general. releasing code and data is often one good way to accomplish this, but reproducibility can also be provided via detailed instructions for how to replicate the results, access to a hosted model (e.g., in the case of a large language model), releasing of a model checkpoint, or other means that are appropriate to the research performed.
        \item While NeurIPS does not require releasing code, the conference does require all submissions to provide some reasonable avenue for reproducibility, which may depend on the nature of the contribution. For example
        \begin{enumerate}
            \item If the contribution is primarily a new algorithm, the paper should make it clear how to reproduce that algorithm.
            \item If the contribution is primarily a new model architecture, the paper should describe the architecture clearly and fully.
            \item If the contribution is a new model (e.g., a large language model), then there should either be a way to access this model for reproducing the results or a way to reproduce the model (e.g., with an open-source dataset or instructions for how to construct the dataset).
            \item We recognize that reproducibility may be tricky in some cases, in which case authors are welcome to describe the particular way they provide for reproducibility. In the case of closed-source models, it may be that access to the model is limited in some way (e.g., to registered users), but it should be possible for other researchers to have some path to reproducing or verifying the results.
        \end{enumerate}
    \end{itemize}

\item {\bf Open access to data and code}
    \item[] Question: Does the paper provide open access to the data and code, with sufficient instructions to faithfully reproduce the main experimental results, as described in supplemental material?
\item[] Answer: \answerYes{} \item[] Justification: Our training dataset is publicly available, and we release code artifacts that detail, verbatim, how to reproduce all experiments.
    \item[] Guidelines:
    \begin{itemize}
        \item The answer \answerNA{} means that paper does not include experiments requiring code.
        \item Please see the NeurIPS code and data submission guidelines (\url{https://neurips.cc/public/guides/CodeSubmissionPolicy}) for more details.
        \item While we encourage the release of code and data, we understand that this might not be possible, so \answerNo{} is an acceptable answer. Papers cannot be rejected simply for not including code, unless this is central to the contribution (e.g., for a new open-source benchmark).
        \item The instructions should contain the exact command and environment needed to run to reproduce the results. See the NeurIPS code and data submission guidelines (\url{https://neurips.cc/public/guides/CodeSubmissionPolicy}) for more details.
        \item The authors should provide instructions on data access and preparation, including how to access the raw data, preprocessed data, intermediate data, and generated data, etc.
        \item The authors should provide scripts to reproduce all experimental results for the new proposed method and baselines. If only a subset of experiments are reproducible, they should state which ones are omitted from the script and why.
        \item At submission time, to preserve anonymity, the authors should release anonymized versions (if applicable).
        \item Providing as much information as possible in supplemental material (appended to the paper) is recommended, but including URLs to data and code is permitted.
    \end{itemize}

\item {\bf Experimental setting/details}
    \item[] Question: Does the paper specify all the training and test details (e.g., data splits, hyperparameters, how they were chosen, type of optimizer) necessary to understand the results?
\item[] Answer: \answerYes{} \item[] Justification: We detail all that is needed to replicate our experiments, in addition to the actual experimentation code as well.
    \item[] Guidelines:
    \begin{itemize}
        \item The answer \answerNA{} means that the paper does not include experiments.
        \item The experimental setting should be presented in the core of the paper to a level of detail that is necessary to appreciate the results and make sense of them.
        \item The full details can be provided either with the code, in appendix, or as supplemental material.
    \end{itemize}

\item {\bf Experiment statistical significance}
    \item[] Question: Does the paper report error bars suitably and correctly defined or other appropriate information about the statistical significance of the experiments?
\item[] Answer: \answerNo{} \item[] Justification: We do not report error bars and/or statistical significance of our experiments, due to computational cost.
    \item[] Guidelines:
    \begin{itemize}
        \item The answer \answerNA{} means that the paper does not include experiments.
        \item The authors should answer \answerYes{} if the results are accompanied by error bars, confidence intervals, or statistical significance tests, at least for the experiments that support the main claims of the paper.
        \item The factors of variability that the error bars are capturing should be clearly stated (for example, train/test split, initialization, random drawing of some parameter, or overall run with given experimental conditions).
        \item The method for calculating the error bars should be explained (closed form formula, call to a library function, bootstrap, etc.)
        \item The assumptions made should be given (e.g., Normally distributed errors).
        \item It should be clear whether the error bar is the standard deviation or the standard error of the mean.
        \item It is OK to report 1-sigma error bars, but one should state it. The authors should preferably report a 2-sigma error bar than state that they have a 96\% CI, if the hypothesis of Normality of errors is not verified.
        \item For asymmetric distributions, the authors should be careful not to show in tables or figures symmetric error bars that would yield results that are out of range (e.g., negative error rates).
        \item If error bars are reported in tables or plots, the authors should explain in the text how they were calculated and reference the corresponding figures or tables in the text.
    \end{itemize}

\item {\bf Experiments compute resources}
    \item[] Question: For each experiment, does the paper provide sufficient information on the computer resources (type of compute workers, memory, time of execution) needed to reproduce the experiments?
\item[] Answer: \answerYes{} \item[] Justification: We stated, verbatim, all the GPUs we used, along with the exact software stack needed to replicate all our experiments. In addition, we provide code to make this process easier for all stakeholders.
    \item[] Guidelines:
    \begin{itemize}
        \item The answer \answerNA{} means that the paper does not include experiments.
        \item The paper should indicate the type of compute workers CPU or GPU, internal cluster, or cloud provider, including relevant memory and storage.
        \item The paper should provide the amount of compute required for each of the individual experimental runs as well as estimate the total compute. 
        \item The paper should disclose whether the full research project required more compute than the experiments reported in the paper (e.g., preliminary or failed experiments that didn't make it into the paper). 
    \end{itemize}
    
\item {\bf Code of ethics}
    \item[] Question: Does the research conducted in the paper conform, in every respect, with the NeurIPS Code of Ethics \url{https://neurips.cc/public/EthicsGuidelines}?
\item[] Answer: \answerYes{} \item[] Justification: To the best of our knowledge, the work uses standard public datasets and does not involve human subjects or other procedures that would conflict with NeurIPS Code of Ethics.
    \item[] Guidelines:
    \begin{itemize}
        \item The answer \answerNA{} means that the authors have not reviewed the NeurIPS Code of Ethics.
        \item If the authors answer \answerNo, they should explain the special circumstances that require a deviation from the Code of Ethics.
        \item The authors should make sure to preserve anonymity (e.g., if there is a special consideration due to laws or regulations in their jurisdiction).
    \end{itemize}

\item {\bf Broader impacts}
    \item[] Question: Does the paper discuss both potential positive societal impacts and negative societal impacts of the work performed?
\item[] Answer: \answerNo{} \item[] Justification: We do not discuss broader positive and/or negative societal impacts, since this work primarily focuses on optimizing throughput for neural network training.
    \item[] Guidelines:
    \begin{itemize}
        \item The answer \answerNA{} means that there is no societal impact of the work performed.
        \item If the authors answer \answerNA{} or \answerNo, they should explain why their work has no societal impact or why the paper does not address societal impact.
        \item Examples of negative societal impacts include potential malicious or unintended uses (e.g., disinformation, generating fake profiles, surveillance), fairness considerations (e.g., deployment of technologies that could make decisions that unfairly impact specific groups), privacy considerations, and security considerations.
        \item The conference expects that many papers will be foundational research and not tied to particular applications, let alone deployments. However, if there is a direct path to any negative applications, the authors should point it out. For example, it is legitimate to point out that an improvement in the quality of generative models could be used to generate Deepfakes for disinformation. On the other hand, it is not needed to point out that a generic algorithm for optimizing neural networks could enable people to train models that generate Deepfakes faster.
        \item The authors should consider possible harms that could arise when the technology is being used as intended and functioning correctly, harms that could arise when the technology is being used as intended but gives incorrect results, and harms following from (intentional or unintentional) misuse of the technology.
        \item If there are negative societal impacts, the authors could also discuss possible mitigation strategies (e.g., gated release of models, providing defenses in addition to attacks, mechanisms for monitoring misuse, mechanisms to monitor how a system learns from feedback over time, improving the efficiency and accessibility of ML).
    \end{itemize}
    
\item {\bf Safeguards}
    \item[] Question: Does the paper describe safeguards that have been put in place for responsible release of data or models that have a high risk for misuse (e.g., pre-trained language models, image generators, or scraped datasets)?
\item[] Answer: \answerNA{} \item[] Justification: We do not release new high-risk pretrained models, scraped datasets, or other assets that would require a responsible-release safeguard discussion.
    \item[] Guidelines:
    \begin{itemize}
        \item The answer \answerNA{} means that the paper poses no such risks.
        \item Released models that have a high risk for misuse or dual-use should be released with necessary safeguards to allow for controlled use of the model, for example by requiring that users adhere to usage guidelines or restrictions to access the model or implementing safety filters. 
        \item Datasets that have been scraped from the Internet could pose safety risks. The authors should describe how they avoided releasing unsafe images.
        \item We recognize that providing effective safeguards is challenging, and many papers do not require this, but we encourage authors to take this into account and make a best faith effort.
    \end{itemize}

\item {\bf Licenses for existing assets}
    \item[] Question: Are the creators or original owners of assets (e.g., code, data, models), used in the paper, properly credited and are the license and terms of use explicitly mentioned and properly respected?
\item[] Answer: \answerYes{} \item[] Justification: We cite all existing works that we use in our experiments, in order to preserve fairness. 
    \item[] Guidelines:
    \begin{itemize}
        \item The answer \answerNA{} means that the paper does not use existing assets.
        \item The authors should cite the original paper that produced the code package or dataset.
        \item The authors should state which version of the asset is used and, if possible, include a URL.
        \item The name of the license (e.g., CC-BY 4.0) should be included for each asset.
        \item For scraped data from a particular source (e.g., website), the copyright and terms of service of that source should be provided.
        \item If assets are released, the license, copyright information, and terms of use in the package should be provided. For popular datasets, \url{paperswithcode.com/datasets} has curated licenses for some datasets. Their licensing guide can help determine the license of a dataset.
        \item For existing datasets that are re-packaged, both the original license and the license of the derived asset (if it has changed) should be provided.
        \item If this information is not available online, the authors are encouraged to reach out to the asset's creators.
    \end{itemize}

\item {\bf New assets}
    \item[] Question: Are new assets introduced in the paper well documented and is the documentation provided alongside the assets?
\item[] Answer: \answerYes{} \item[] Justification: We will release fully documented code, open-sourcing our Triton kernels and experimentation suite. We ensured that the code is documented, making it easy for all stakeholders to interact with the codebase.
    \item[] Guidelines:
    \begin{itemize}
        \item The answer \answerNA{} means that the paper does not release new assets.
        \item Researchers should communicate the details of the dataset\slash code\slash model as part of their submissions via structured templates. This includes details about training, license, limitations, etc. 
        \item The paper should discuss whether and how consent was obtained from people whose asset is used.
        \item At submission time, remember to anonymize your assets (if applicable). You can either create an anonymized URL or include an anonymized zip file.
    \end{itemize}

\item {\bf Crowdsourcing and research with human subjects}
    \item[] Question: For crowdsourcing experiments and research with human subjects, does the paper include the full text of instructions given to participants and screenshots, if applicable, as well as details about compensation (if any)? 
\item[] Answer: \answerNA{} \item[] Justification: Our work neither involves crowdsourcing nor human-subject research.
    \item[] Guidelines:
    \begin{itemize}
        \item The answer \answerNA{} means that the paper does not involve crowdsourcing nor research with human subjects.
        \item Including this information in the supplemental material is fine, but if the main contribution of the paper involves human subjects, then as much detail as possible should be included in the main paper. 
        \item According to the NeurIPS Code of Ethics, workers involved in data collection, curation, or other labor should be paid at least the minimum wage in the country of the data collector. 
    \end{itemize}

\item {\bf Institutional review board (IRB) approvals or equivalent for research with human subjects}
    \item[] Question: Does the paper describe potential risks incurred by study participants, whether such risks were disclosed to the subjects, and whether Institutional Review Board (IRB) approvals (or an equivalent approval/review based on the requirements of your country or institution) were obtained?
\item[] Answer: \answerNA{} \item[] Justification: Our work does not involve crowdsourcing or human-subject research, so IRB approval is not applicable.
    \item[] Guidelines:
    \begin{itemize}
        \item The answer \answerNA{} means that the paper does not involve crowdsourcing nor research with human subjects.
        \item Depending on the country in which research is conducted, IRB approval (or equivalent) may be required for any human subjects research. If you obtained IRB approval, you should clearly state this in the paper. 
        \item We recognize that the procedures for this may vary significantly between institutions and locations, and we expect authors to adhere to the NeurIPS Code of Ethics and the guidelines for their institution. 
        \item For initial submissions, do not include any information that would break anonymity (if applicable), such as the institution conducting the review.
    \end{itemize}

\item {\bf Declaration of LLM usage}
    \item[] Question: Does the paper describe the usage of LLMs if it is an important, original, or non-standard component of the core methods in this research? Note that if the LLM is used only for writing, editing, or formatting purposes and does \emph{not} impact the core methodology, scientific rigor, or originality of the research, declaration is not required.
\item[] Answer: \answerNA{} \item[] Justification: LLMs are not an important or non-standard component of our proposed method, or experimental methodology. We only use LLMs for writing, editing, and/or formatting purposes.
    \item[] Guidelines:
    \begin{itemize}
        \item The answer \answerNA{} means that the core method development in this research does not involve LLMs as any important, original, or non-standard components.
        \item Please refer to our LLM policy in the NeurIPS handbook for what should or should not be described.
    \end{itemize}

\end{enumerate}